\documentclass[Afour,sageh,times]{sagej}

\usepackage{moreverb,url}
\usepackage{amsmath,amsfonts}
\usepackage{amsthm}            
\usepackage{siunitx}  
\usepackage{tikz}
\usepackage{makecell}
\usepackage[super]{nth}
\usepackage{subfig}
\usepackage{algorithm}
\usepackage[noend]{algpseudocode}
\usepackage{pgfplots}
\usepackage{multirow}
\usepackage[flushleft]{threeparttable}
\usepackage{sectsty}
\usepgfplotslibrary{groupplots}
\usepackage{enumerate}
\usepackage{csquotes}
\newif\ifshowmarkup
\showmarkupfalse   

\newcommand{\revise}[1]{\ifshowmarkup\textcolor{blue}{#1}\else#1\fi}
\newcommand{\rrevise}[1]{\ifshowmarkup\textcolor{green}{#1}\else#1\fi}

\usepackage[normalem]{ulem} 
\usepackage{xcolor}

\newif\ifrevision
\revisionfalse    

\ifrevision

  \newcommand{\added}[1]{\textcolor{blue}{#1}}
  \newcommand{\deleted}[1]{\textcolor{red}{\sout{#1}}}
  \newcommand{\ccomment}[1]{\textcolor{magenta}{\textbf{[#1]}}}

\else

  \newcommand{\added}[1]{#1}
  \newcommand{\deleted}[1]{}
  \newcommand{\ccomment}[1]{}

\fi

\theoremstyle{definition}
\newtheorem{definition}{Definition}
\newtheorem{theorem}{Theorem}
\newtheorem{proposition}{Proposition} 
\newtheorem{remark}{Remark}

\newcommand\norm[1]{\left\lVert#1\right\rVert}

\usetikzlibrary{positioning,fit,calc}
\tikzset{
block/.style={
draw,
thick,
text width=1.1cm,minimum height=1.5cm,align=center},
line/.style={-latex},
font={\fontsize{8pt}{12}\selectfont}
}
\usepackage[colorlinks,bookmarksopen,bookmarksnumbered,citecolor=red,urlcolor=red]{hyperref}

\newcommand\BibTeX{{\rmfamily B\kern-.05em \textsc{i\kern-.025em b}\kern-.08em
T\kern-.1667em\lower.7ex\hbox{E}\kern-.125emX}}

\def\volumeyear{2016}

\begin{document}

\runninghead{Phiquepal and Toussaint}

\title{Optimizing Trajectory-Trees in Belief~Space: An Application from Model Predictive Control to Task and Motion Planning}

\author{Camille Phiquepal\affilnum{1} and Marc Toussaint\affilnum{1}}

\affiliation{
\affilnum{1}Learning and Intelligent Systems Lab (LIS), TU Berlin, Germany}

\corrauth{Camille Phiquepal, Learning and Intelligent Systems Lab (LIS), TU Berlin, Germany.}

\email{camille.phiquepal@gmail.com}

\begin{abstract}
This paper explores the benefits of computing arborescent trajectories (trajectory-trees) instead of commonly used sequential trajectories for partially observable robotic planning problems. In such environments, a robot infers knowledge from observations, and the optimal course of action depends on these observations. \revise{Trajectory-trees, optimized in belief space, naturally capture this dependency by branching where the belief state is expected to evolve into multiple distinct scenarios, such as upon receiving an observation. Unlike sequential trajectories, which model a single forward evolution of the system, trajectory-trees capture multiple possible contingencies.}
First, we focus on Model Predictive Control (MPC) and demonstrate the benefits of planning tree-like trajectories. We formulate the control problem as the optimization of a tree with a single  branching (PO-MPC). This improves performance by reducing control costs through more informed planning. To satisfy the real-time constraints of MPC, we develop an optimization algorithm called Distributed Augmented Lagrangian (D-AuLa), which leverages the decomposability of the PO-MPC formulation to parallelize and accelerate the optimization. We apply the method to both linear and non-linear MPC problems using autonomous driving examples. 
Second, we address Task And Motion Planning (TAMP), and introduce a planner (PO-LGP) reasoning on decision trees at task level, and trajectory-trees at motion-planning level. This approach builds upon the Logic-Geometric-Programming Framework (LGP) and extends it to partially observable problems.
The experiments show the method's applicability to problems with a small belief state size, and scales to larger problems by optimizing explorative policies, which are used as macro-actions in an overarching task plan.  
\end{abstract}

\keywords{Trajectory Planning, Model Predictive Control, Task and Motion Planning, Partial Observability}

\maketitle
\section{Introduction}
Robots often operate in environments that are partially observable. This paper focuses on cases where partial observability is multimodal, and concerns essential and critical aspects of the environment, such that it has a profound impact on the actions that the robot should take. For example, when a robot must stack colored blocks in a given color order (see Fig.~\ref{fig:example_problem_tamp}), and does not know initially each block's color, the optimal robot actions are strongly impacted each time the robot discovers the color of a block.
Similarly, when a car drives along a street near pedestrians (see Fig.~\ref{fig:example_problem_mpc}) whose intentions to cross or not are only partially observable, the car’s safe and appropriate behavior must adapt as each pedestrian’s intent becomes clear.

This contrasts with a unimodal and continuous form of partial observability, such as uncertainties in localization or object positions, which are often addressed heuristically (e.g., safety margins or potential fields) or by assuming a probabilistic model (e.g., Gaussian) as in Stochastic Model Predictive Control. 

\begin{figure}[ht]
    \centering
                  \subfloat[Possible start state\label{fig:example_problem_tamp:start}]{%
       \includegraphics[width=0.32\linewidth]{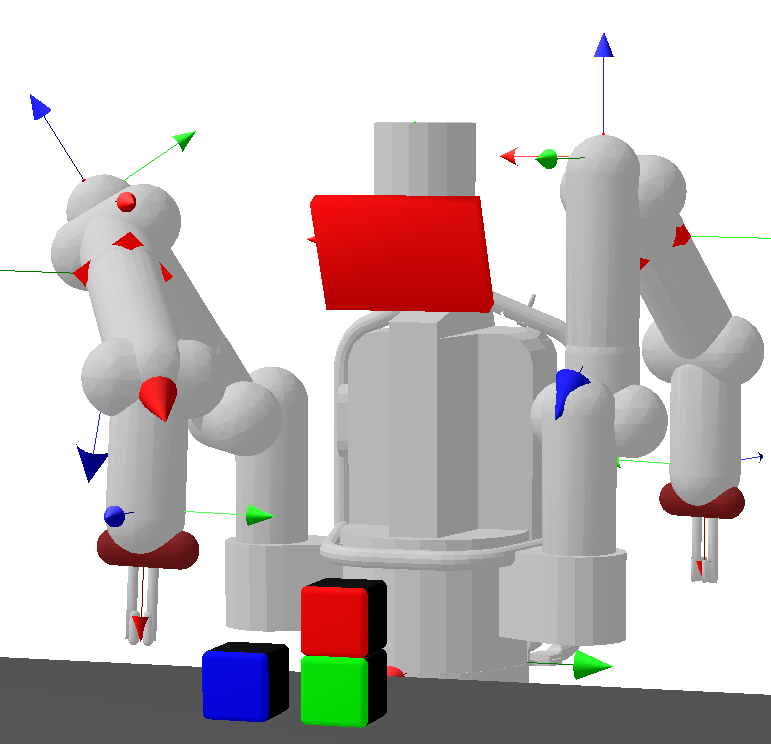}}
           \hfill
       \subfloat[Look action\label{fig:example_problem_tamp:look}]{%
       \includegraphics[width=0.32\linewidth]{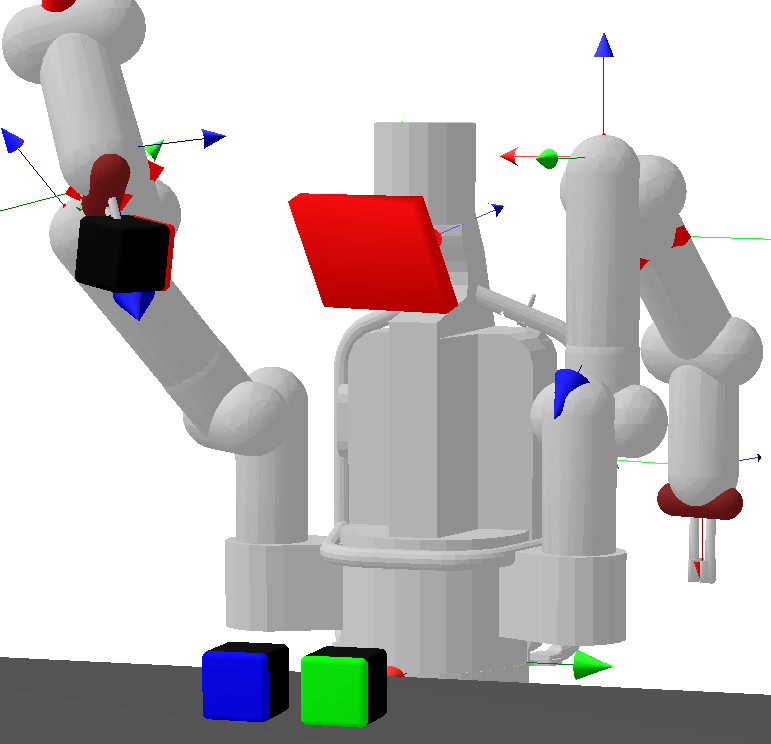}}
           \hfill
       \subfloat[Goal state\label{fig:example_problem_tamp:goal}]{%
       \includegraphics[width=0.32\linewidth]{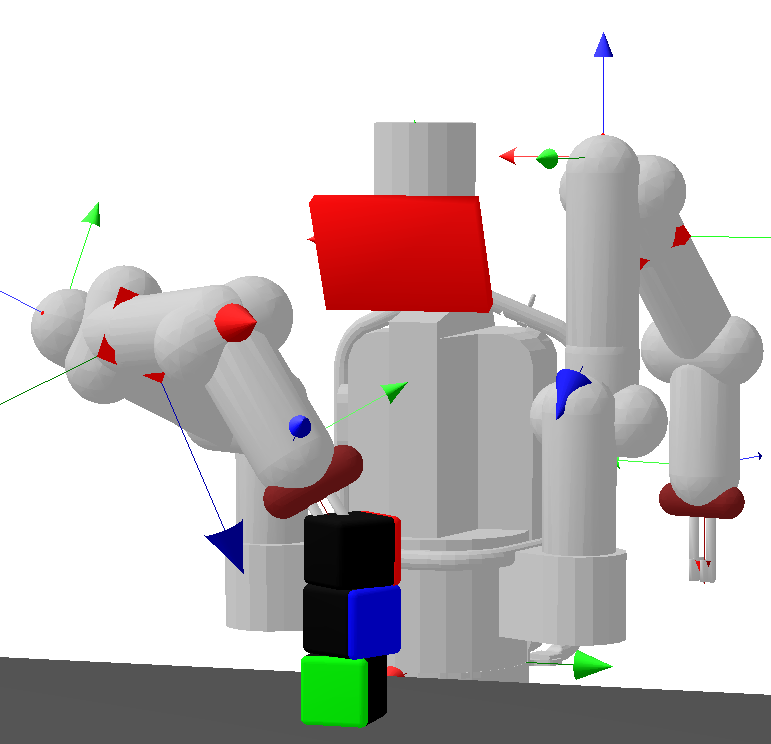}}
  \caption{Example of partially observable TAMP problem: The blocks' colors are initially not visible. The robot must look (b) and react to observations to reach the goal state (c), defined by a given color order.}
  \label{fig:example_problem_tamp} 
\end{figure}

\begin{figure}[ht]
    \centering
       \subfloat[Uncertain intention\label{2a}]{%
       \includegraphics[width=0.3\linewidth]{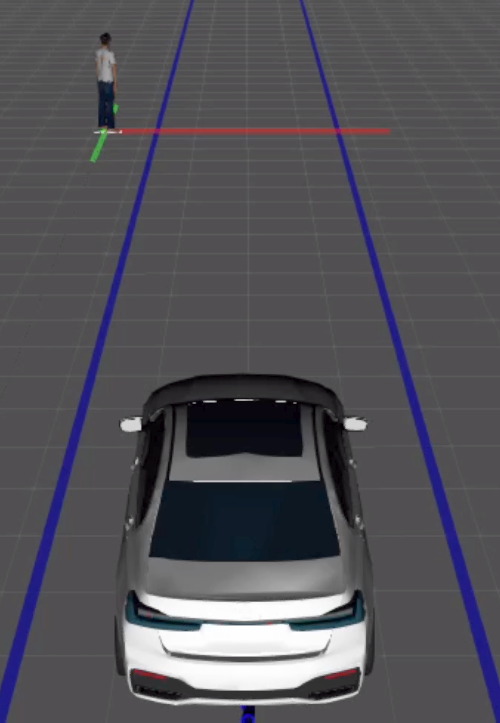}}
       \hfill
       \subfloat[Pedestrian stays on the walkway\label{2b}]{%
       \includegraphics[width=0.3\linewidth]{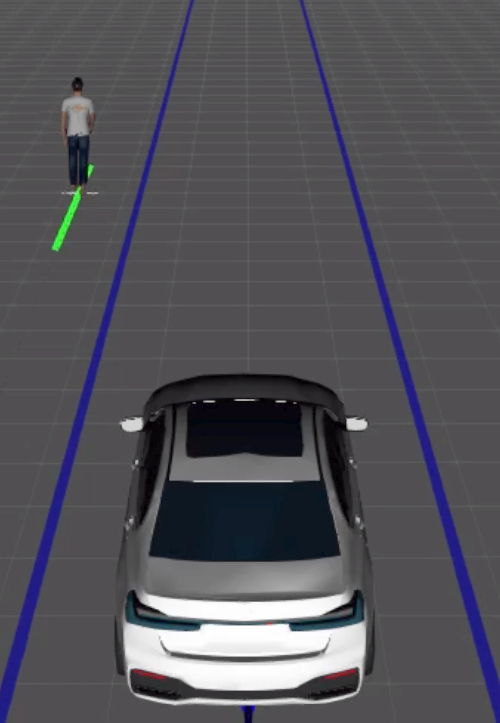}}
       \hfill
  \subfloat[Pedestrian crosses\label{2c}]{%
       \includegraphics[width=0.3\linewidth]{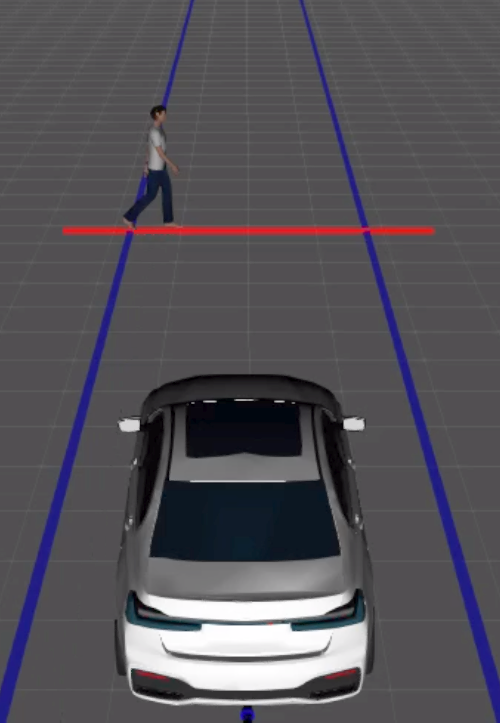}}
  \caption{
 Example of partially observable MPC Problem: a pedestrian is detected (a), whose intention is uncertain. The control policy must account for 2 cases: The pedestrian may walk along the street (b) or cross (c).} 
  \label{fig:example_problem_mpc} 
\end{figure}

In problems with multimodal partial observability, planning based on a single modality, or state hypothesis leads to incomplete plans, or can be overly conservative. For instance, in Fig.~\ref{fig:example_problem_tamp}, a complete action plan needs to account for several initial block configurations; the blocks' colors need to be observed. In Fig.~\ref{fig:example_problem_mpc} planning considering that every pedestrian would cross would be overly conservative. Conversely, ignoring the eventuality that a pedestrian might cross is unsafe.
 
To overcome those limitations, we consider multiple modalities or state hypotheses, and compute trajectory-trees in belief space. Within the planning horizon, the belief state may evolve according to different scenarios, introducing branches in the optimal motion paths, as illustrated in Fig.\ref{fig:branching_point}.

\begin{figure}[!htb]
 \center{\includegraphics[width=0.47\textwidth]{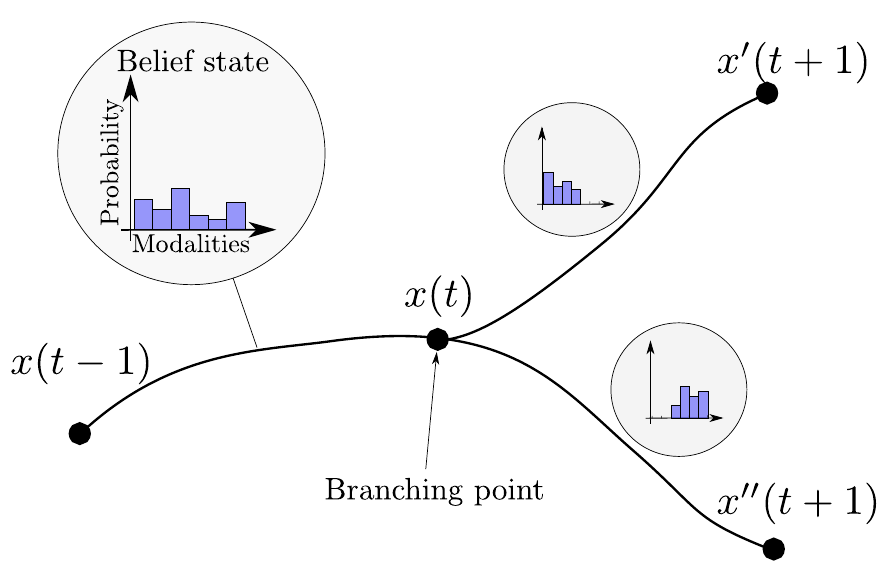}}
 \caption{Branching point: The trajectory-tree branches where the belief state evolution is anticipated to diverge into multiple possible outcomes.}
 \label{fig:branching_point}
\end{figure}



Optimizing trajectory-trees has different implications depending on the use-case:

In Model Predictive Control (MPC), a key requirement is that optimization must be sufficiently rapid to be executed in real time. To maintain tractability, we adopt a trajectory-tree structure which assumes that the state becomes fully observable after a given horizon (PO-MPC), similar to the \deleted{Q-MDP}\added{QMDP} algorithm for POMDPs. We leverage the decomposability of such trajectory-trees by introducing a new optimization algorithm, which we call Distributed Augmented Lagrangian (D-AuLa). This method combines aspects of the Augmented Lagrangian and the Alternating Direction Method of Multipliers (ADMM) methods. This is described in Section~\ref{sec:mpc} and suppported by autonomous driving examples.

In Task and Motion Planning (TAMP), the challenge is not only to optimize motions, but also to devise a symbolic plan solving the problem. In Section~\ref{sec:tamp}, we present an integrated planner (PO-LGP) for discrete partially observable problems, which is based on the Logic Geometric Programming framework (LGP). This planner produces symbolic policies as trees of actions, which are implemented at the motion level by trajectory-trees. The policies include exploratory actions through which the robot acquires information about the environment via observations, leading to belief state updates and corresponding branching points on the trajectory-tree. The optimization of the trajectory-trees utilizes a generalization of the K-Order Motion Optimization (KOMO) transcription and the optimization algorithm method described in the MPC section.

Accordingly, our contributions are:
\begin{itemize}
\item Partially Observable MPC (PO-MPC): an MPC formulation for optimizing control policies under multimodal partially observability, which leverages the probabilistic information of the belief state to reduce conservativeness without compromising safety.
\item Distributed Augmented Lagrangian (D-AuLa): an optimization procedure for accelerating the optimization of loosely coupled constrained optimization problems. It is applicable beyond trajectory optimization, and a proof of convergence and optimality is provided in Appendix~\ref{sec:proof}.
\item Partially Observable LGP (PO-LGP): a planner for partially observable TAMP problems which reasons on reactive action trees both at task and motion planning level. The planned policies account for all contingencies without reliance on replanning.
\item Tree K-Order Motion Optmization (T-KOMO): an extension of the K-Order Motion Optimization transcription for optimizing arborescent trajectories in configuration space. This transcription is used in the TAMP examples as well as in one of the MPC examples. 
\end{itemize}


This article builds upon initial work on TAMP \citep{tamp_0_phiquepal2019combined}; the conceptual limitations of the graph-based task planner are overcome by introducing a tree-based Monte-Carlo method for task planning (more details are provided in Section~\ref{sec:tree_vs_graph}). Furthermore, trajectory-trees are optimized jointly using a new method, referred to as T-KOMO instead of the heuristic approach used in \citep{tamp_0_phiquepal2019combined}. In addition, experiments are included to provide insights on how the approach scales to large belief states. The section on MPC is motivated by the simplified autonomous driving examples of \citep{tamp_0_phiquepal2019combined}. It advances previous work on MPC \citep{mpc_0_phiquepal2021control} by adding more details about the approach scalability, and provides theoretical foundations with a convergence proof for D-AuLa.


\section{Related Work}

Planning under uncertainty requires reasoning about alternative future evolutions of the system. Different robotics domains adopt different representations of such alternatives, which shape how motions are planned and executed. In the following, related work is reviewed through this lens, first considering belief-space motion planning approaches close to the POMDP framework, and then Model Predictive Control (MPC) and Task and Motion Planning (TAMP), two robotics subfields in which we instantiate the trajectory-tree abstraction.

\subsection{Belief-space Motion Planning}

Planning under partial observability can generally be formulated as a Partially Observable Markov Decision Process (POMDP). A comprehensive review of POMDP methods is beyond the scope of this paper, and we refer the reader to~\citep{pomdp_lauri2022partially, pomdp_kurniawati2022partially} for surveys of POMDPs in robotics.

Traditionally, POMDPs have often been solved offline by computing a policy that maps belief states to actions and is used at execution time to select the next action. \added{Classical approaches include point-based value iteration methods such as PBVI~\citep{pineau2003point} and SARSOP~\citep{kurniawati2008sarsop}, which approximate the optimal value function over a selected subset of reachable beliefs.} \added{Building on point-based offline POMDP solvers, Kurniawati et al.~\citep{kurniawati2011motion} propose Milestone Guided Sampling (MiGS), which exploits the spatial structure of the state space through a sampled roadmap to guide belief-space sampling and improve the scalability of offline policy computation for motion planning problems.} \deleted{For example, in\mbox{~\citep{kurniawati2011motion}} policies are computed for discretized environments (grid worlds). To improve scalability, milestones are sampled in the state space and connected into a roadmap that guides belief-space planning, allowing the method to scale to robotic systems with up to five degrees of freedom.}

Such policies represent the most general form of solution to POMDPs. However, their computation quickly becomes intractable in high-dimensional domains.

Approaches have also been developed to compute offline policies directly in continuous spaces for specific problem classes. For instance, in the context of navigation under motion and sensing uncertainty, the Feedback-based Information RoadMap (FIRM) framework~\citep{agha2014firm, agha2014health} constructs a graph in belief space whose edges are associated with local feedback controllers. The resulting solution is a policy over this graph that determines how to switch between controllers based on the current belief state, enabling feedback-driven navigation under uncertainty. This approach exploits structural properties typical of navigation problems, including low-dimensional state spaces and Gaussian belief representations, and is therefore less suited to settings involving high-dimensional systems or multi-modal belief distributions.

\deleted{To overcome the curse of dimensionality of offline policy computation, many recent POMDP-based motion planning approaches rely on online planning combined with open-loop macro-actions.\\}
\added{To overcome the curse of dimensionality associated with offline policy computation, online POMDP planners such as POMCP~\citep{pomcp_silver2010monte} and DESPOT~\citep{somani2013despot} perform online belief-space search from the current belief state, relying on  replanning instead of computing a complete policy offline. They approximate belief-space planning by reasoning over sampled state hypotheses (POMCP) or sampled scenarios (DESPOT). However, directly applying such search to continuous, high-dimensional robotic control spaces remains computationally challenging.}

\added{To improve scalability in continuous robotic domains, more recent approaches combine online belief-space planning with macro-actions.}
Planning is performed iteratively from the current belief state, while macro-actions represent motion segments executed without feedback during their duration. This drastically reduces the set of reachable belief states considered during planning.

\deleted{In\mbox{~\citep{liang2024scaling}}, macro-actions are generated efficiently using parallelized sampling-based motion planning, and planning over these actions is performed using the Reference-based POMDP approach\mbox{~\citep{kim2023reference}}, enabling scalability to domains with a combined up to 15 dimensions.\\}
\added{In~\citep{liang2026think}, the authors present ROP-RAS3, where macro-actions are generated efficiently using GPU-accelerated sampling-based motion planning, and planning over these actions is performed using the Reference-based POMDP approach~\citep{kim2023reference}, enabling scalability to domains with up to 35 dimensions.}
In~\citep{lee2020magic}, a macro-action generator is learned offline and used online to produce motion primitives represented as parameterized curves, which are then evaluated using the DESPOT algorithm\deleted{\mbox{~\citep{somani2013despot}}}.

The combination of online planning and open-loop macro-actions can be interpreted as implicitly constructing trajectory-trees in belief space, where nodes correspond to reachable belief states and edges correspond to motion segments defined by macro-actions.

\deleted{
Like the aforementioned approaches, this work represents planning contingencies using trajectory-trees. However, the motion computation differs fundamentally. Instead of selecting motion primitives generated through sampling or learning, we formulate motion generation as a gradient-based optimal control problem with differentiable costs and constraints. The trajectory-tree structure in belief space induces a coupled optimization problem in which the trajectories of all branches are optimized jointly, rather than computed independently for each branch.\\}
\added{Like the aforementioned approaches, the proposed framework also represents planning contingencies as trajectory-trees in belief space. However, it follows a different research direction. Rather than scaling POMDP planning to continuous robotic domains through macro-actions, the proposed framework extends trajectory optimization to partially observable domains.}

\added{Instead of generating motions through sampling or learned motion generators, we formulate motion generation as a gradient-based optimal control problem with differentiable costs and constraints. The trajectory-tree structure in belief space induces an optimization problem in which the trajectory-tree is optimized jointly, as one consistent optimization problem, rather than computed independently for each motion segment, yielding globally consistent and smooth motions across contingencies and, in TAMP, across action-mode switches. This formulation emphasizes trajectory optimality and is therefore well suited to applications in which the quality of the continuous motion, including smoothness, dynamic consistency, and constraint satisfaction, is of primary importance.\footnote{\added{Additionally, it is well suited to domains with complex physical object interactions, as the underlying physical laws can be expressed as differentiable equations and incorporated in an optimization problem, as shown in \citep{tamp_lgp_3_18-toussaint-RSS}.}}}

\added{
A second notable distinction is that, in its current formulation, the proposed framework optimizes a trajectory-tree, encompassing all contingencies explicitly represented by the model. This contrasts with the aforementioned macro-action-based POMDP planners which perform a selective exploration of the belief space through belief sampling and macro-action generation, thereby enabling them to address general POMDPs with continuous observations, stochastic transitions, and much larger belief spaces. Possible extensions toward trajectory-trees with partial contingency coverage are discussed in the dedicated discussion~in~Section~\ref{sec:towards_partial_contingencies}.
}

\added{
Consequently, the proposed framework targets an operating regime complementary to that of the aforementioned macro-action-based POMDP planners. In its current formulation, it does not primarily aim to scale to the large belief spaces addressed by general online POMDP solvers. Instead, it extends trajectory optimization to multi-modal partially observable domains, making it well suited to applications in which leveraging the strengths of trajectory optimization is desirable, and where partial observability can be captured by a finite and moderate number of explicit hypotheses.
}
\subsection{Model Predictive Control}
%
Model Predictive Control (MPC) typically optimizes a sequence of control inputs, or trajectory, in a receding-horizon fashion. In its original formulation, uncertainty is not explicitly modeled. Frequent replanning nevertheless provides a certain degree of robustness to disturbances.
%
\paragraph{Stochastic and Robust MPC.} Stochastic and robust variants of MPC explicitly incorporate uncertainty in the planning assumptions (e.g., unmodeled system dynamics, external disturbances, or noise). Surveys of stochastic and robust MPC can be found in~\citep{soc-1, soc-2}. Robust MPC typically seeks constraint satisfaction under worst-case realizations of the uncertainty. In contrast, stochastic MPC exploits information about the probability distribution of the uncertainty and interprets constraints probabilistically, requiring that violations occur with probability below a specified threshold. Like robust MPC, the approach presented in this work seeks constraint satisfaction even in worst-case realizations. Similar to stochastic MPC, it leverages probabilistic information about the uncertainty to reduce conservativeness.

A common robust MPC formulation is Min-Max MPC~\citep{robust-1, min-max-1} where a control sequence is optimized with respect to the worst-case realization of the uncertainty. This guarantees constraint satisfaction but can lead to overly conservative behavior. Although multiple uncertainty realizations are considered during optimization, the final solution remains a single sequential trajectory optimized against the worst case.

Another widely used approach is tube-based MPC~\citep{langson2004robust}. In this formulation, a nominal trajectory is optimized while a feedback controller ensures that the system state remains within a bounded tube around this nominal trajectory despite disturbances. This allows bounded uncertainty to be handled efficiently. However, the method assumes bounded disturbances and enforces that the system state remains within a tube around a nominal trajectory, and therefore does not represent branching multi-modal futures.

\paragraph{Tree-based approaches.} The idea of explicitly representing alternative future evolutions as a tree of scenarios within the planning horizon originates in the control literature~\citep{scokaert1998min}. Early applications were mainly in the optimal control of chemical processes. In \citep{multi-stage-luca-1-LUCIA201269, multi-stage-luca-1-LUCIA20131306, multi-stage-luca-1}, Lucia et al. introduced such an approach and termed it Multi-Stage MPC. Subsequent work further developed and optimized this formulation \citep{multi-stage-klintberg-1, multi-stage-leidereiter-1}, in particular by exploiting the tree structure to accelerate the optimization of the underlying QP problems. In this prior work, the tree formulation aims to improve controller performance in the presence of general continuous disturbances. In contrast, in our approach the semantics of the tree structure differ: it models partial observability, with branching occurring at observation points where the belief state is updated.

In robotics, tree-based approaches have been applied to collision avoidance in \citep{multi-stage-robotics-1, mpc_105} for mobile robots. In the context of autonomous driving, this has been applied to highway lane changes, traffic light, and intersection scenarios for autonomous driving \citep{mpc_100, mpc_104, mpc_105} as well as evasion maneuver in \citep{multi-stage-klintberg-2}.

Among these robotics related studies, \citep{multi-stage-klintberg-2} is the only one detailing a parallelizable solving scheme, albeit for linear MPC only. The other studies do not explicitly tackle the challenges of optimization time and scalability, and use off-the-shelf solvers like IPOPT~\citep{wachter2006implementation_ipopt}  for \citep{mpc_100, mpc_104},  and OSQP~\citep{stellato2020_osqp} in \citep{mpc_105}. IPOPT is also used in \citep{multi-stage-robotics-1} via the Do-MPC toolbox~\citep{lucia2017rapid_do_mpc}.

In contrast, our work focuses not only on the benefit of the tree formulation (PO-MPC), but also on the efficiency of the optimization method. We decompose the control problem as a QMDP and leverage the low-coupling between the optimal control problem of each branch of the trajectory-tree to optimize it in a distributed fashion (D-AuLa). The presented scheme is applicable both to linear and non-linear MPC.

\subsection{Task and Motion Planning}
Task and Motion Planning (TAMP) is the robotic planning subfield concerned with solving hybrid planning problems that combine symbolic task planning, which determines a sequence of high-level actions, with motion planning, which computes the continuous motions required to execute these actions. While many robotic problems can fall into this category, the methods developed in the TAMP literature are particularly relevant when the coupling between task and motion planning is strong, meaning that symbolic decisions cannot be evaluated reliably without considering the geometric feasibility and cost of the associated motions. This arises commonly in manipulation problems, where the combined complexity of the robot and environment makes it difficult to assess the feasibility of symbolic actions without feedback from a motion planner. Surveys of recent developments in TAMP are available in~\citep{tamp_tamp_1_garrett2021integrated, tamp_tamp_2_guo2023recent}.

\paragraph{Optimization-based TAMP and LGP.} The approach proposed in this paper is rooted in the Logic–Geometric Programming (LGP) framework~\citep{tamp_lgp_1_toussaint2015logic, tamp_lgp_3_18-toussaint-RSS}. A distinctive feature of LGP is its emphasis on global optimality through trajectory optimization, where the motions associated with a task plan are optimized jointly across task modes and kinematic switches.
This contrasts with many TAMP approaches that prioritize feasibility and rely on sampling-based motion planning, typically computing motions independently for each action. Optimizing trajectories globally naturally produces smooth motions and ensures consistency across action transitions.

Building on this perspective, the proposed method extends trajectory optimization to trajectory-trees under partial observability. The optimization is performed not only across action and kinematic switches but also across observation branchings, ensuring global consistency of the resulting trajectory-tree. In contrast to approaches based on sequential convex optimization~\citep{motion_planning_Schulman2013FindingLO}, we adopt an Augmented Lagrangian method that avoids repeated linearization of the problem.

\paragraph{Determinization under partial observability.} Most TAMP approaches focus on deterministic problems, where a solution is a sequence of actions along with associated motions, and no replanning is expected. When partial observability is considered, a common strategy is to approximate the problem as deterministic, resulting in solutions that remain in the form of action sequences.

This is, for example, the case in \citep{tamp_po_0_kaelbling2013integrated}, where planning is performed in belief space using a determinized model of the belief space dynamics. At execution time, the actual trajectory (in belief space) of the robot may leave the plan's envelope, requiring replanning.

A similar approach is adopted in  \citep{tamp_po_3_adu2021probabilistic, tamp_po_3bis_adu2023long} with optional replanning once an observation is received, which is beyond the plan's expectations.
In \citep{tamp_po_4_hadfield2015modular} the belief space dynamics are determinized, using Maximum Likelihood Observation (MLO)~\citep{determinization_platt2010belief}.
In \citep{tamp_po_1_garrett2020online} self-loop determinization \citep{determinization_keyder2008hmdpp} is used, and replanning is performed after the execution of each action.

An advantage of using a determinized model, is that it enables the usage of deterministic planner for task planning. For example, Fast-Forward~\citep{task_planner_hoffmann2001ff} is used in \citep{tamp_po_4_hadfield2015modular}, and PPDL-Stream solvers \citep{task_planner_garrett2020pddlstream} are used in \citep{tamp_po_1_garrett2020online}. 

However, planning under such determinization schemes presents two limitations. First, it explicitly relies on replanning to account for cases where the outcomes encountered at execution time differ from those predicted by the determinization. Second, it implicitly assumes that some outcomes are much more likely than others, and that the planning goal is reachable considering only these likely outcomes. The efficiency of such an approach may degrade drastically in problems where partial observability is multimodal, and where selecting a single outcome at each step may fail to reach the goal, or lead to plans having a low probability of success, requiring frequent replanning. For example, in Fig.~\ref{fig:example_problem_tamp}, a \textit{Look} action only has a $\frac{1}{6}$ probability of identifying the block, compared to a $\frac{5}{6}$ probability that it is inconclusive. Solving the stacking task, however, necessitates to account for the unlikely outcome that the colored side is discovered.

\paragraph{Tree-based approaches.} To overcome these limitations, our approach does not determinize the belief state dynamics. Instead, the optimized trajectory-trees account for all contingencies foreseen under the planning assumptions. Provided those assumptions hold, the robot's state is expected to remain within the scope of the planned trajectory-tree during execution, without relying on replanning.

This shares similarities with \citep{shah2020anytime}, where tree-like plans are generated using the SSP solver LAO*~\citep{task_planner_hansen2001lao}. However, the nature of the uncertainty addressed differs: \citep{shah2020anytime} tackles action stochasticity, whereas our approach addresses partial observability.
Additionally, \citep{shah2020anytime} plans motions for actions independently (piecewise paths). Our method performs piecewise trajectory optimization during the policy search phase, but it also includes a final phase of joint trajectory-tree optimization, in which the entire tree is optimized across action switches and across observation branchings which significantly reduces trajectory costs. Finally, the two works differ in scope: \citep{shah2020anytime} focuses on the high-level planning loop, i.e. how to orchestrate the calls to a SSP solver and motion planner, whereas this paper explores the specifics of the low-level trajectory optimization for arborescent plans, in addition to the high-level aspects.

\section{Trajectory-Trees in Belief Space} \label{sec:intro_trajectory_trees}
In this section, we introduce the trajectory-tree abstraction, which provides a common foundation for the PO-MPC and PO-LGP methods presented in the following sections.
\subsection{State Representation} \label{sec:intro_trajectory_tree_state}
We consider a hybrid state comprising both a continuous geometric component and a symbolic component:
\begin{itemize}
\item $x \in \mathcal{X}$, where $\mathcal{X}$ is the continuous state space of the robot and its environment.
\item $s \in \mathcal{S}$, where $\mathcal{S}$ is a set of symbolic states.
\end{itemize}
In the TAMP example shown in Fig.~\ref{fig:example_problem_tamp}, the continuous state $x$ corresponds to the geometric configurations of the robot and the objects, while the symbolic state $s$ encodes the actual color of a block’s colored side. In the  MPC example shown in Fig.~\ref{fig:example_problem_mpc}, the continuous state $x$ models the car’s state (position, velocity), while the symbolic state $s$ captures the pedestrian’s intention to cross.

\revise{
\rrevise{
\subsection{Mixed Observability}
We plan in a context of mixed observability, where the hybrid state $(x, s)$ is decomposed into observable and latent components. \rrevise{Observable components are directly accessible at each step.
Latent components are not directly observable and must be inferred through the belief state. 
While latent variables are not directly observed, their values may become fully determined 
over time through belief updates.}
Specifically, we assume a fixed partition of the state space,
\[
x = (x^{\text{observable}}, x^{\text{latent}}), \quad
s = (s^{\text{observable}}, s^{\text{latent}}),
\]
where the observable and latent components are distinct state variables. This decomposition is structural and does not change over time.
\footnote{Observability is a fixed property of the state variables; observable and latent components do not correspond to values of the same variable.}}}

\revise{A belief state encapsulates both the observable components of the state and a probability distribution over the \rrevise{latent} components. Formally, it is expressed as
\[
((x^{\text{observable}}, s^{\text{observable}}),\; \beta(x^{\text{latent}}, s^{\text{latent}})),
\]
where \( \beta(x^{\text{latent}}, s^{\text{latent}}) \) denotes a probability distribution over the latent state variables. We refer to \( \beta \) as the belief distribution, distinguishing it from the full belief state.}

\medskip
\noindent\revise{\textbf{Multi-hypothesis belief assumption:}  \rrevise{We assume the belief distribution $\beta$ to be defined over} a finite set of hypotheses (or modalities), denoted as \( \{(x_m^{\text{latent}}, s_m^{\text{latent}})\}_{m \in \mathcal{H}} \), where \( \mathcal{H} \) is the index set of hypotheses.} 
\rrevise{ During planning, the likelihood and underlying state associated with each hypothesis may evolve. In particular, some observation outcomes may result in zero probability being assigned to certain hypotheses, but the index set $\mathcal{H}$ remains fixed.}

\rrevise{This representation is related to particle-based belief models, where each hypothesis (or particle) carries a state and an associated probability. However, in contrast to particle filters, where particles are samples without persistent semantic identity, we assume a fixed set of hypotheses whose identities are preserved throughout planning and which explicitly represent the possible modalities of the belief.
}

\medskip
\revise{
In Fig.~\ref{fig:example_problem_tamp}, the positions of the blocks are observable, but both the orientation of the colored side relative to the parent block (continuous) and its color (symbolic) are partially observable, resulting in a finite set of hypotheses. In Fig.~\ref{fig:example_problem_mpc}, the pedestrian's intention is partially observable (symbolic).
}
\revise{
\subsection{Trajectory-Trees} \label{sec:intro_trajectory_tree_definition}
Before detailing the planning structure, we first state an assumption on the system dynamics used throughout.
}

\medskip\noindent\revise{\textbf{Deterministic continuous dynamics assumption:} The continuous systems dynamics are deterministic, i.e., applying a given control input \( u \in \mathcal{U} \), where $\mathcal{U}$ denotes the continuous control space, from a given continuous state \( x \in \mathcal{X} \) results in a unique successor state \(x' \). This enables the representation of motion segments between branching points as open-loop trajectories, rather than feedback control policies.
}

\revise{
A \textit{trajectory-tree} is a tree in belief space. Each node in the tree is associated with a belief state. The tree alternates between two types of stages, as illustrated in Fig.~\ref{fig:trajectory_tree}:
\begin{itemize}
\item \textbf{Trajectory stage}: This stage represents a segment of system evolution. Starting from a node, a control trajectory is executed---potentially in conjunction with a symbolic action in the TAMP setting---which deterministically updates the observable state and the underlying state hypotheses of the belief state. The belief distribution, however, remains unchanged.
\item \textbf{Belief update stage}: This stage models a probabilistic branching into multiple possible belief states, each represented by a child node and an associated branching probability. The belief distribution is updated, but the underlying hypotheses and the observable state remain unchanged. This stage is represented in blue in Fig.~\ref{1b}. 
\end{itemize}
}

\begin{figure}[ht]
    \centering 
    \subfloat[Belief space view\label{1b}]{%
    \includegraphics[width=0.49\linewidth]{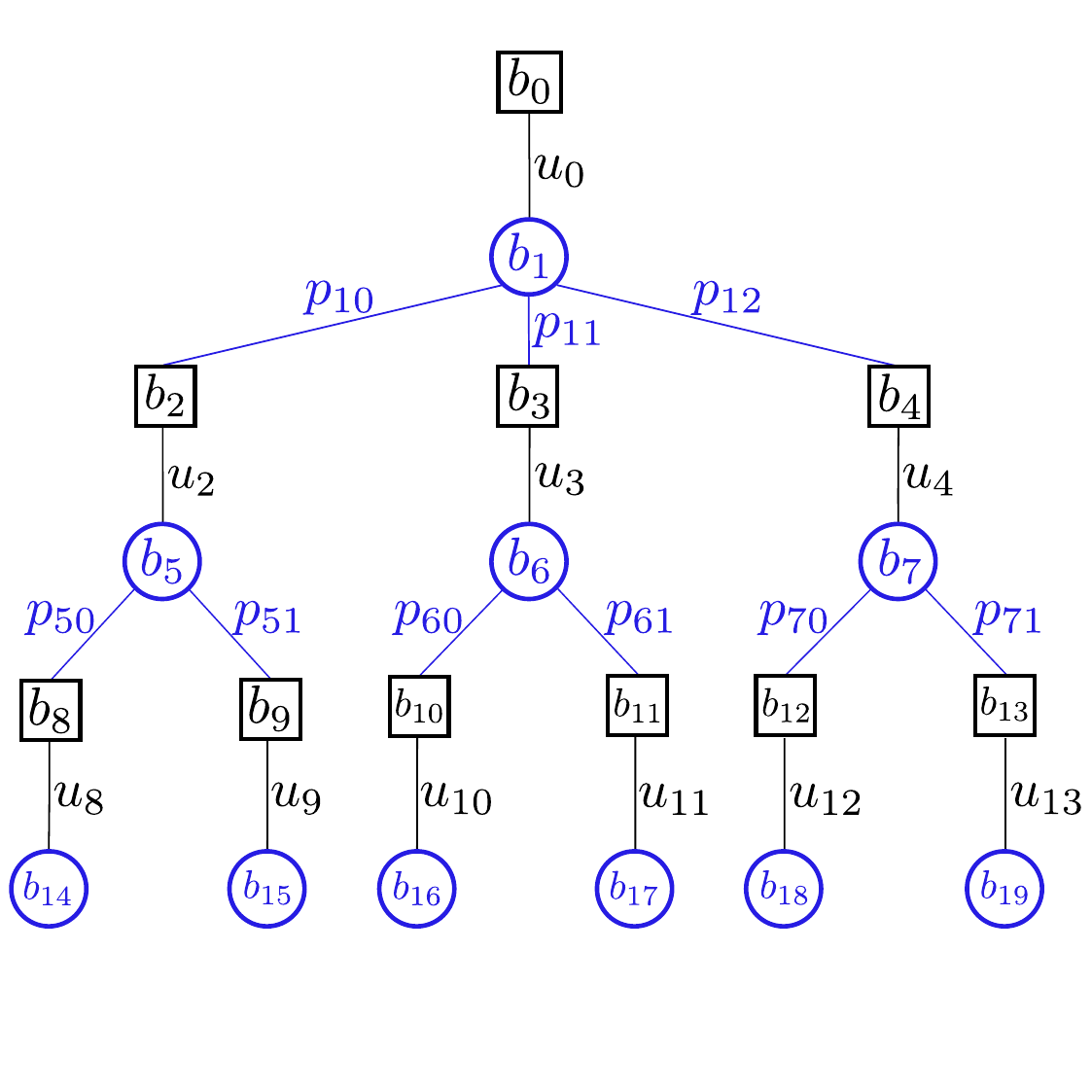}}
                  \hfill
    \subfloat[Geometric view\label{1a}]{%
    \includegraphics[width=0.49\linewidth]{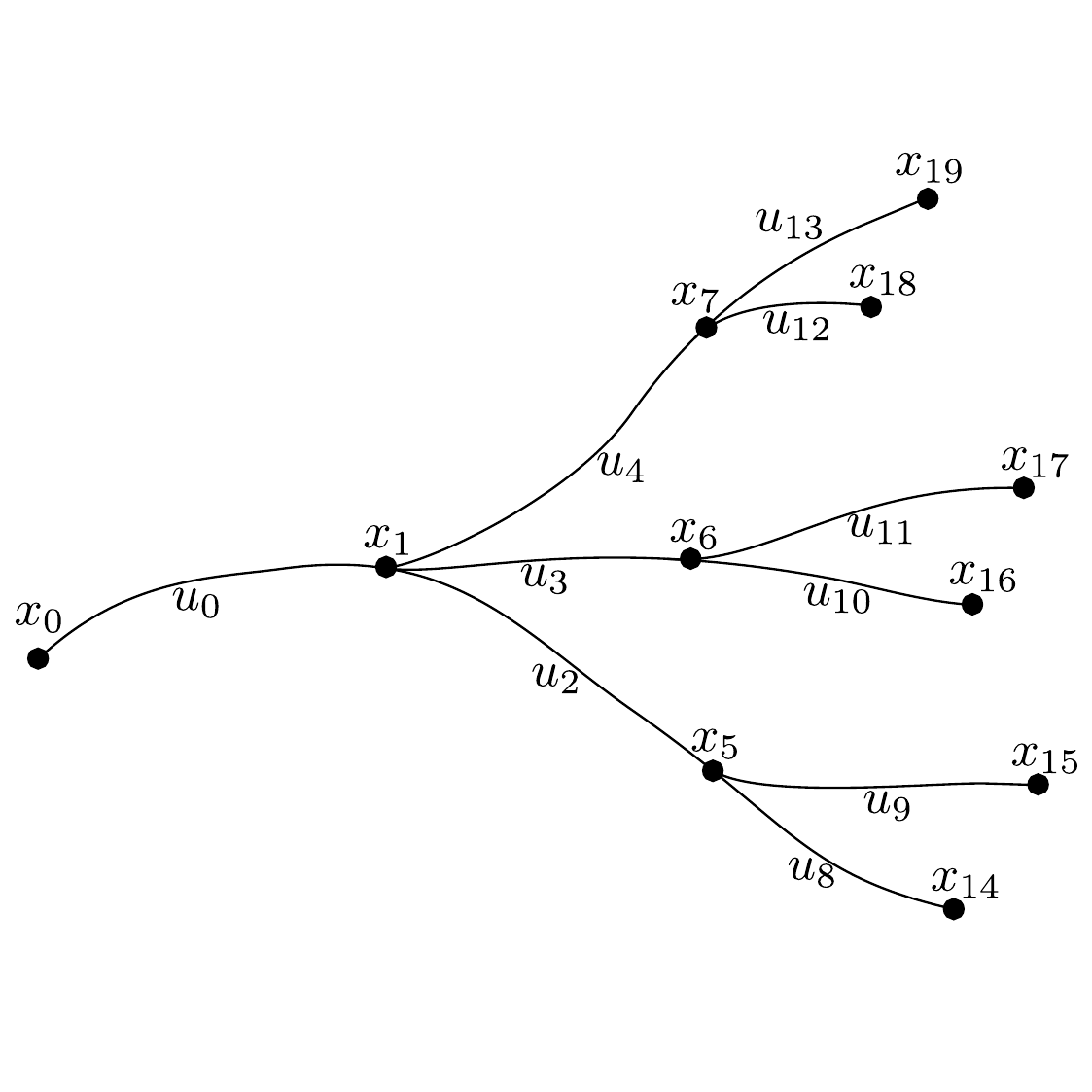}}
  \caption{\revise{Illustrative example of trajectory-tree in belief space: The trajectory stages (in black) represent the system evolution under an applied control $u$. The belief update stages (in blue) correspond to probabilistic branching of the belief state. From a geometrical persective, the motion forms a trajectory-tree.}} 
  \label{fig:trajectory_tree} 
\end{figure}

\revise{
\noindent Trajectory-trees represent the evolution of the hybrid belief state with probabilistic branching occuring at fixed time points. In the PO-MPC method employed for MPC, branching arises from an assumed determinization of the partially observable state after a fixed horizon. In contrast, in the PO-LGP used for TAMP problems, branching points correspond to the reception of observations that inform the robot about its environment. These differing mechanisms give rise to structurally distinct trees, as illustrated in Fig.~\ref{fig:sparse_trajectory_trees}.
}

\begin{figure}[ht]
    \centering 
    \subfloat[PO-MPC: Early branching for fast computations\label{subfig:qmdp}]{%
    \includegraphics[width=0.49\linewidth]{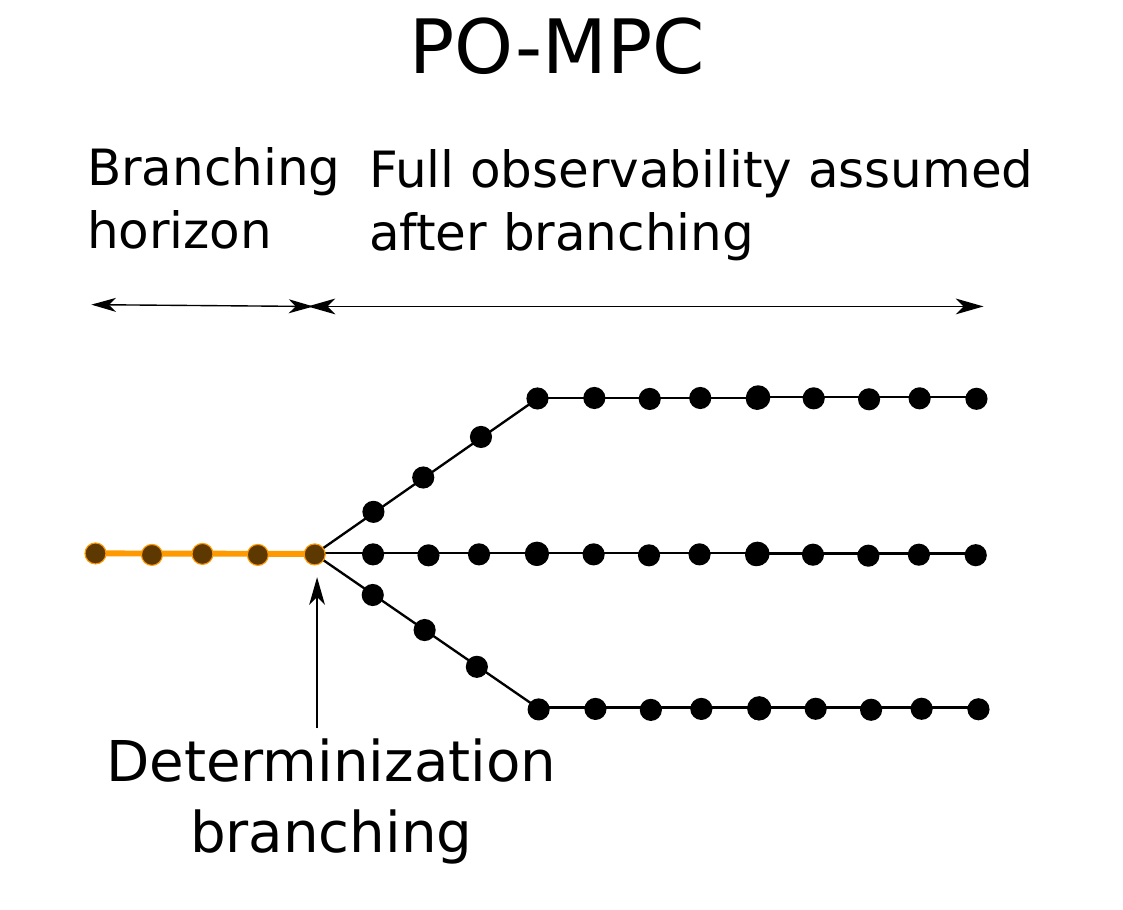}}
                  \hfill
    \subfloat[PO-LGP: Look actions to enable exploratory behavior\label{subfig:tamp}]{%
    \includegraphics[width=0.49\linewidth]{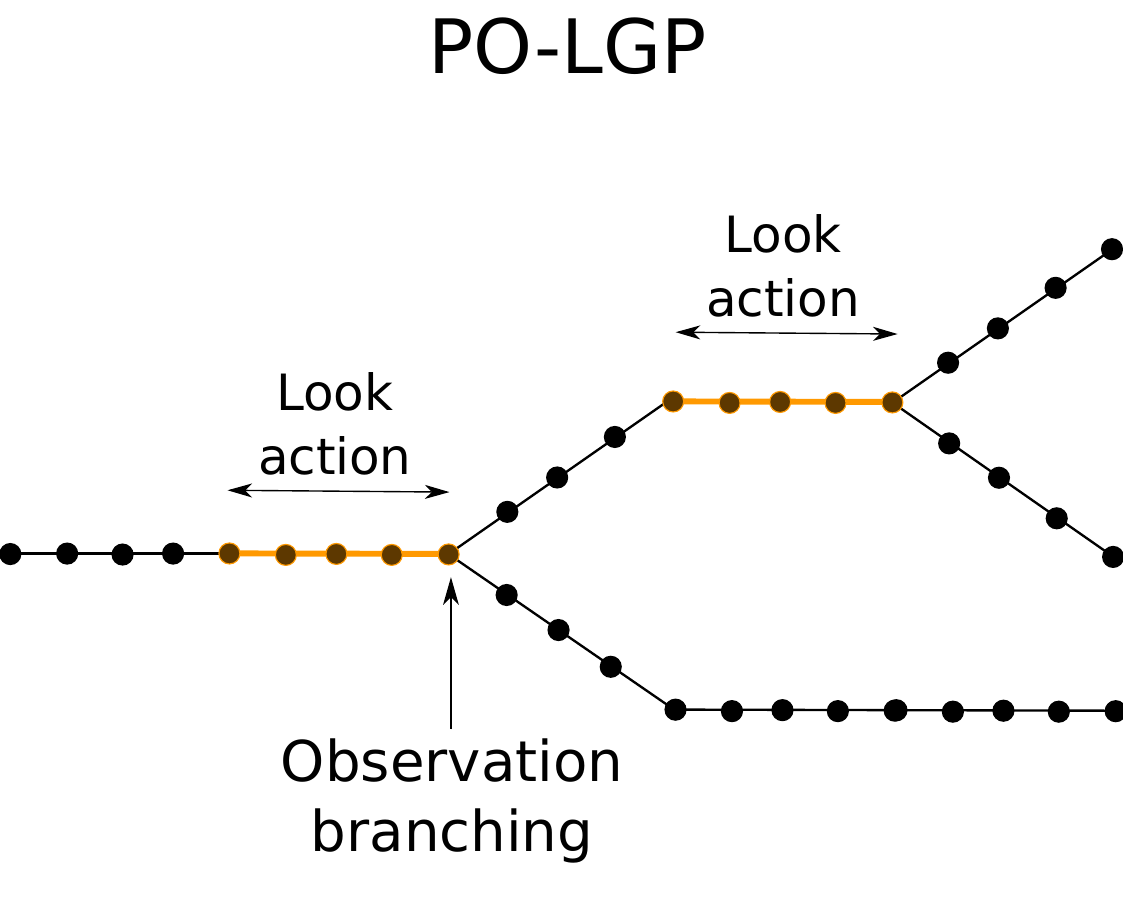}}
  \caption{Example of tree structures: In PO-MPC \ref{subfig:qmdp} the trajectory-tree assumes full observability after the first branching (similarly to \deleted{Q-MDP}\added{QMDP}). In PO-LGP for TAMP \ref{subfig:tamp}, Look actions provide observations, resulting in a belief state update.} 
  \label{fig:sparse_trajectory_trees} 
\end{figure}

\revise{
In the following, we use the term \textit{continuous trajectory-tree} to denote the continuous motions that compose a trajectory-tree. When the context is clear, we may refer to the continuous part simply as the trajectory-tree, omitting the qualifier for brevity. In TAMP, where planning involves symbolic decision-making, we use the term \textit{policy} to refer to the symbolic part of the trajectory-tree, governing the high-level sequencing of actions and observations.
}

\revise{
\subsubsection{Trajectory-Tree Optimization} \label{sec:intro_trajectory_tree_opt}
The specific procedures for optimizing the continuous components of the trajectory-trees in the MPC and TAMP contexts will be described in their respective sections. However, we first highlight key elements that are common to both formulations:
\begin{itemize}
\item \textbf{Symbol-induced optimization problem}:
The trajectory-tree is computed by solving a constrained optimization problem, where the cost and constraint functions are defined by the symbolic components of the tree. In MPC, these functions are associated with the symbolic state $s$. In TAMP, where symbolic actions can actively modify the symbolic state, the cost and constraint functions are instead tied to the symbolic actions.
\item \textbf{Minimization of the expected the trajectory cost}: Since branching in the trajectory-tree is probabilistic, the optimization objective is expressed as the minimization of the expected trajectory cost.
\item \textbf{Robust constraint satisfaction}: Constraints are enforced uniformly across all edges of the trajectory-tree, irrespective of the probability of reaching a given edge. This aligns with the principle of robust constraint satisfaction in the MPC literature, as opposed to formulations based on chance constraints. 
\item \textbf{Augmented Lagrangian-based optimization:} 
Constraints on the trajectory-tree are enforced using the Augmented Lagrangian method (AuLa). In TAMP, the standard formulation of the algorithm is applied. For the MPC, the method is extended to support distributed optimization (D-AuLa).
\end{itemize}
}

%

\section{Model Predictive Control (MPC)} \label{sec:mpc}
We develop trajectory-tree optimization for the MPC use case.
We first describe the PO-MPC problem formulation and state the trajectory-tree optimization objective in Section~\ref{sec:mpc_problem_formulation}. \revise{The main goal of the PO-MPC formulation is to improve the control performance by leveraging the multi-hypothesis representation of the trajectory-tree}. Next, we detail the optimization procedure D-AuLa used to optimize the trajectory-trees. \revise{D-AuLa significantly reduces computation time by exploiting parallelism}. Experiments are provided in Section~\ref{sec:mpc_acc}~and~\ref{sec:mpc_slalom}. The results are discussed in Section~\ref{sec:mpc_discussion}.
\subsection{The PO-MPC Problem Formulation}
\subsubsection{State Representation}
\revise{
We use the hybrid, partially observable state representation introduced in Section~\ref{sec:intro_trajectory_trees}.
}

\medskip
\noindent \revise{\textbf{Continuous and symbolic states}: The continuous state $x$ corresponds to the usual notion of state in the MPC literature, while $s$ represents discrete aspects of the planning problem. In the example shown in Fig.~\ref{fig:example_problem_mpc}, the continuous state captures the car's position and velocity and is fully observable. The symbolic state represents the pedestrian’s intention, which is partially observable and therefore modeled as a latent variable.
}

\medskip
\noindent \revise{\textbf{Relation between the continuous and symbolic states}:
The two state components represent different aspects of the planning problem, and no further relation is assumed between them. However, as outlined  in Section~\ref{sec:intro_trajectory_tree_opt}, and formalized further in the upcoming Section~\ref{sec:mpc_problem_formulation}, the cost and constraint functions defining the optimization problem are linked to the symbolic state. Therefore, if two hypotheses are qualitatively different to the extent that they involve different planning objectives, this difference must be encoded as different values of a symbolic state variable.
}

\subsubsection{Assumed Control Pipeline}
\revise{The PO-MPC formulation assumes a modular architecture where a perception module tracks and provides the multiple hypotheses $\mathcal{H}$ and the belief distribution, as shown in Fig.~\ref{fig:control-loop}.}

\begin{figure}[!htb]
 \center{\includegraphics[width=0.47\textwidth]{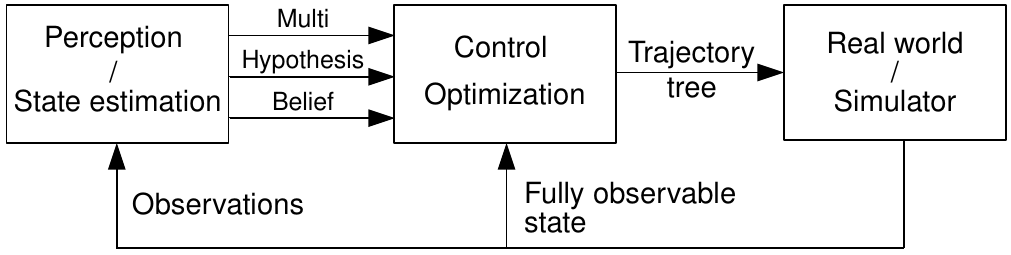}}
 \caption{PO-MPC Control loop: trajectory-trees are optimized with respect to a belief distribution over multiple hypotheses provided by a perception module.}
 \label{fig:control-loop}
\end{figure}
\revise{
Belief space inference is therefore decoupled from planning. At each planning cycle, a trajectory-tree is optimized based on the current belief state estimate.
}

\revise{
Such architecture in which perception components output multiple hypotheses or modalities, each associated with a confidence score or probability, is commonplace in the autonomous driving domain, as for example in~\citep{zyner2019naturalistic, phan2020covernet}.
}

\added{
This architecture is well suited to domains in which the belief over the symbolic state hypotheses remains approximately independent of the robot's continuous behavior over the planning horizon, for example obstacle hypotheses or pedestrian intentions over short time horizons. Conversely, it does not capture interactions in which this belief evolves in response to the robot's actions, such as a pedestrian becoming more inclined to cross after observing the vehicle slow down. Capturing such interactions would require explicitly reasoning about future belief evolution, which is beyond the present formulation and is discussed as a possible extension in Section~\ref{sec:mpc_discussion}. When this assumption is violated, the planned policy may fail to anticipate such interactions, potentially leading to suboptimal decisions. This limitation is partially mitigated in practice by the receding-horizon nature of MPC, since the planner is repeatedly updated using newly observed belief states as execution progresses.
}

\revise{
\subsubsection{Trajectory-Tree Structure}
The trajectory-tree structure is directly determined by the number of hypotheses $|\mathcal{H}|$ of the belief state, and is not co-optimized as it will be the case in TAMP (see Section~\ref{sec:tamp}).
}

It comprises a first common trunk (see orange part in Fig.~\ref{fig:control-tree}) which is the part executed by the controller until the next planning cycle happens. This trunk spans a time interval which we call the \revise{\textit{branching horizon}.}


\revise{Beyond the branching horizon, the trajectory-tree evolves into $|\mathcal{H}|$ branches, \rrevise{each corresponding directly to a specific state hypothesis $m \in \mathcal{H}$.}
} The full trajectory-tree spans a total time interval, which we refer to as the \textit{prediction horizon}, following standard terminology in MPC. \revise{\rrevise{
In the case of a single pedestrian in the scene, there are two state hypotheses, resulting in a trajectory-tree with two branches, as shown in Fig.~\ref{fig:control-tree}.
}
}

\begin{figure}[!htb]
  \center{\includegraphics[width=0.45\textwidth]{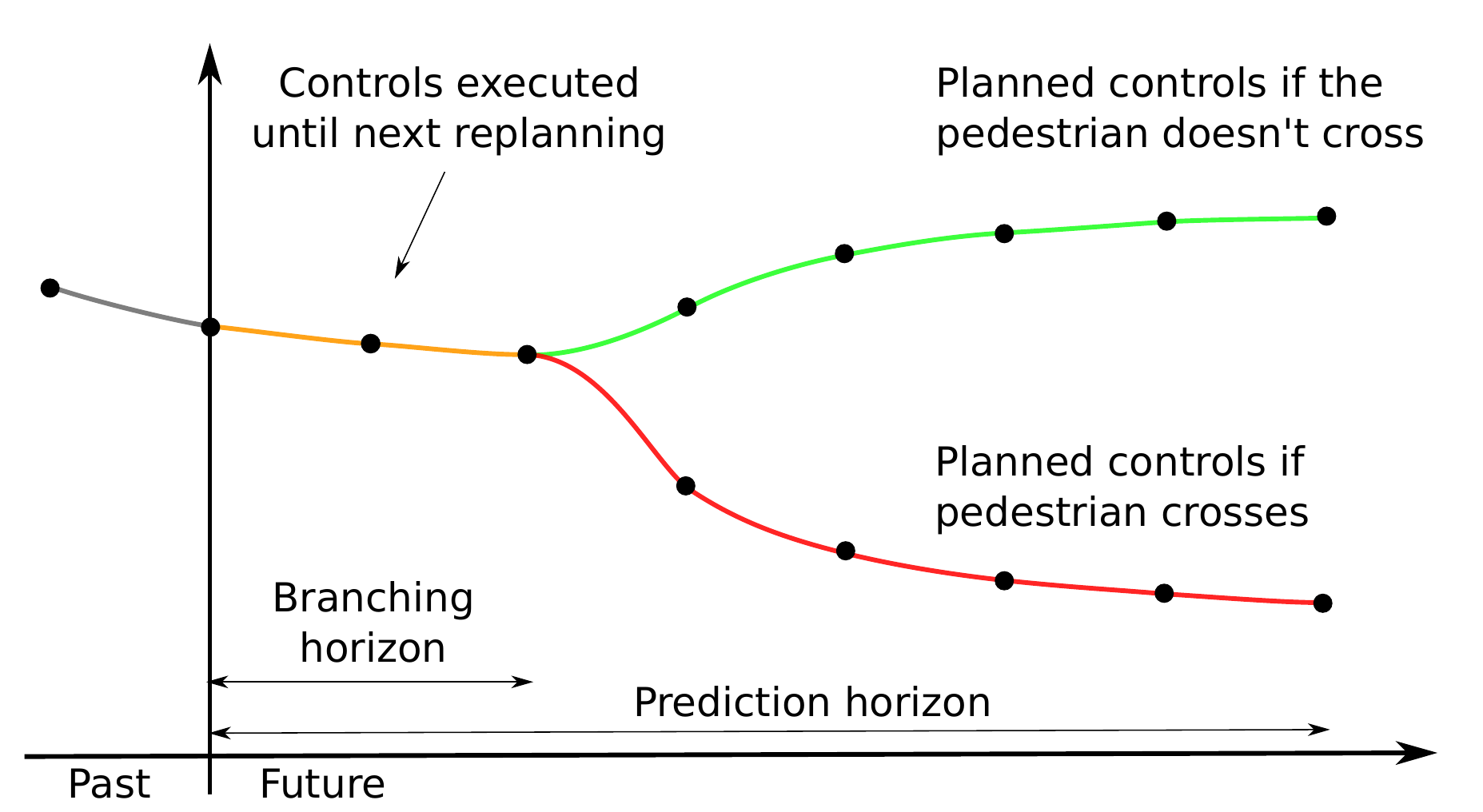}}
\caption{\label{fig:control-tree} Schematic trajectory-tree applied to the pedestrian example: the part before the branching horizon (orange) is executed until the next planning cycle. Beyond the branching horizon, the tree evolves in two branches corresponding to the two hypothetical states: pedestrian crossing (red) or not (green).}
\end{figure}

The trajectory-tree elements before the branching horizon are associated with the current belief state, while the elements beyond the branching horizon have fully determined belief states. This amounts to a form of determinization, where it is idealized that an observation would be received that fully reveals the hidden state. This type of determinization is also employed in the QMDP method~\citep{qmdp} for solving POMDPs. 

The numerical value of the branching horizon is directly motivated by a QMDP analogy.
While QMDP assumes that full observability is achieved after a single action, this assumption is transposed here to full observability one planning cycle. Accordingly, the branching horizon is set to the estimated time between two planning events.\footnote{As the planning cycle time may not be fully deterministic, we recommend using a worst-case estimate.}
By construction, this ensures that only controls preceding the branching point are executed, and reflects the hypothetical nature of the determinization event represented by the branching point.

Importantly, the belief space dynamics within this tree structure do not depend on either explicit observations or an observation model. The branching probabilities are solely determined by the current belief state. This corresponds to the QMDP-style determinization underlying the trajectory-tree structure with a single branching, and is consistent with the control pipeline described in the previous section, where belief state tracking is separated from planning. This modeling choice, however, entails a limitation discussed below, and contrasts with belief space dynamics where belief state inference and planning are integrated and thereby produce trajectory-trees with multiple branching stages, contingent on observations, as introduced in the TAMP section.

This decoupled structure offers several advantages:
\begin{enumerate}[(1)]
\item The trajectory-tree optimization does not rely on an explicit observation model, making the approach modular and easy to integrate with various perception modules, as it only requires an input in the form of multiple hypotheses with associated probabilities.
\item The total number of trajectory elements is kept lower compared to a structure with more branching points.
\item The trajectory-tree optimization can be decomposed into nearly independent subproblems, thereby improving scalability. \rrevise{This is aspect is illustrated in Fig.~\ref{fig:control-tree-subproblems} and is the subject of Section~\ref{sec:d-aula-solver}.}
\end{enumerate}

Despite these advantages, one limitation is that, similar to the \deleted{Q-MDP}\added{QMDP} algorithm, it cannot plan actions that actively seek to gain information. In the second experiment with uncertain obstacles found in Section~\ref{sec:mpc_slalom}, such an action would be to intentionally come closer to the obstacle to observe it better. It is not a desired behavior in that case, but could be relevant in other planning problems. In TAMP problems for instance, exploring the environment is an important skill, such that the PO-LGP approach that we detail in Section~\ref{sec:tamp} adopts the alternative approach, with multi-stage branching based on observations, as examplified in Fig.~\ref{fig:sparse_trajectory_trees}, which enables exploratory behavior.
%
\subsubsection{Optimization Problem Formulation}\label{sec:mpc_problem_formulation}
Let $L$ denote the number of steps in the branching horizon, and $T$ the total number of steps on each branch ($L \ll T$). \revise{We note $d_u$ and $d_x$ the dimension of the control vector at each time step, and $d_x$ the state dimension.} 

\noindent \revise{Let $u_m \in \mathbb{R}^{d_u \times T}$ and $x_m \in \mathbb{R}^{d_x \times T}$ be the control and continuous state sequences on the branch corresponding to the state hypothesis $m \in \mathcal{H}$. The state sequence $x_m$ comprises both the partially and fully observable components.} Let $\tilde{u} \in \mathbb{R}^{d_u \times L}$ be the controls in the branching horizon, and $p(m)$ be the probability of state hypothesis $m$ as indicated by the belief state. We note $s_m$ the discrete state associated with the state hypothesis $m$.

\noindent We formulate the optimal control problem as follows:
\begin{subequations} \label{eq:optimization-problem}
\begin{align}  
  \min_{\tilde{u}, u_m, x_m}&{
  \sum_{m \in \mathcal{H}}
  {p(m)\sum_{t=0}^{T-1}
  {
  c_{s_m}(x_m(t), u_m(t)) \label{eq:cost-term}
  }
  }
  },\\
  \text{s.t.} \quad
  & g_{s_m}(x_m(t), u_m(t)) \leq 0,&&\forall{m}, \forall{t}\label{eq:constraint-term},\\
  & x_m(t+1) = f(x_m(t), u_m(t)),&&\forall{m}, \forall{t}\label{eq:mpc_model},\\
  & u_m(t) = \tilde{u}(t),&&\forall{m}, \forall{t < L},\label{eq:non-anticipativity}
\end{align}
\end{subequations}
where $c_{s_m} : \mathbb{R}^{d_x + d_u} \rightarrow \mathbb{R}$ is a scalar cost function, $g_{s_m} : \mathbb{R}^{d_x + d_u} \rightarrow \mathbb{R}^{d_g}$ defines $d_g$ inequality constraints functions. Those functions are indexed by $s_m$ to reflect the fact that both the planning goal and constraints may depend on the symbolic state. The function $f : \mathbb{R}^{d_x + d_u} \rightarrow \mathbb{R}^{d_x}$ are equality constraints modeling the system dynamic. 

\revise{Although each pair $(u_m, x_m)$ corresponds to a specific hypothesis $m \in \mathcal{H}$, the optimization variables across different $m$ are not fully independent from each other}. Indeed, in the branching horizon, they all correspond to the common trunk and must therefore be equal. This is captured by the Eq.~\eqref{eq:non-anticipativity} and is usually called non-anticipativity constraint in multi-stage MPC~\citep{multi-stage-luca-1-LUCIA201269}. In the following we call $\tilde{u}$ the consensus variable.

The common trunk of the tree is constrained by the active constraints of all states, regardless of the state likelihood. This is for guaranteeing the robustness of the constraint satisfaction.

In contrast, the minimized costs Eq.~\eqref{eq:cost-term} are weighted by the belief state for optimizing more with respect to likely states than unlikely ones.
\subsubsection{Transcription to Generic Solver Format}
\label{sec:problem_reduction}
Here we rewrite the optimization problem in a generic format that is the input to our solver. In Eq.~\eqref{eq:optimization-problem} both the controls $u$ and states $x$ are optimization variables. In many cases, it is possible to eliminate either the controls or the configurations and obtain a more compact formulation:

\begin{itemize}
\item Optimization in control space: This is achieved by eliminating the variable $x$, and is known as single shooting. It is well described in the MPC literature, particularly for linear MPC \citep{book_underactuated, book_Diehl2013}. We apply this approach in the first MPC experiment presented in Section~\ref{sec:mpc_acc}.
\item 
Optimization in configuration space: In that case, the controls $u$ are eliminated. This typically requires adding additional constraints to ensure the existence of controls implementing the configuration transitions e.g. for non-holonomic robots, or to impose control bounds. The LGP formulation \citep{tamp_lgp_1_toussaint2015logic}, and the KOMO solver \citep{komo_solver} follow this approach. We use this transcription for the second MPC experiment described in Section~\ref{sec:mpc_slalom} and the TAMP examples of Section~\ref{sec:tamp}.
\end{itemize}

The problem can be rewritten in a generic format:

\begin{subequations} \label{eq:gen-optimization-problem}
\begin{align}  
  \min_{\tilde{z}, z_m} \quad &{
  \sum_{m \in \mathcal{H}}
  {p(m) c_{s_m}(z_m) \label{eq:gen-cost-term}
  }
  },\\
  \text{s.t.} \quad
  & g_{s_m}(z_m) \leq 0,&&\forall{m}\label{eq:gen-constraint-term},\\
  & h_{s_m}(z_m) = 0,&&\forall{m}\label{eq:gen-mpc_model},\\
  & z_m(t) = \tilde{z}(t),&& \forall{m}, \forall{t < L},\label{eq:gen-non-anticipativity}
\end{align}
\end{subequations}
where $\tilde{z}$ and $z_m, m \in \mathcal{H}$ are the optimization variables, which can be in control space, configuration space, or a combination of both.
We note $d$ the dimensionality of the optimized parameters at each time step. The functions $c_{s_m}:\mathbb{R}^{T \times d} \rightarrow \mathbb{R}$ and $g_{s_m}:\mathbb{R}^{T \times d} \rightarrow \mathbb{R}^{d_{g_{s_m}}}$ correspond directly to the original cost and constraints functions ~\eqref{eq:gen-cost-term} and ~\eqref{eq:gen-constraint-term} respectively. The function $h_{s_m}:\mathbb{R}^{T \times d} \rightarrow \mathbb{R}^{d_{h_m}}$ is an equality constraint which can capture the system dynamics (see ~\eqref{eq:gen-mpc_model}). It is optional, since it is not needed in case of direct shooting transcription. Eq.~\eqref{eq:gen-non-anticipativity} is the non-anticipativity constraint and $\tilde{z}$ is the consensus variable. We generally assume the functions $c_{s_m}$, $g_{s_m}$, and $h_{s_m}$ to be smooth, but not necessarily convex or unimodal.

\subsection{Distributed~Augmented~Lagrangian solver (D-AuLa)} \label{sec:d-aula-solver}
\rrevise{The global optimization problem \eqref{eq:gen-optimization-problem} can naturally be decomposed into $|\mathcal{H}|$ loosely coupled optimization subproblems, each corresponding to one branch of the trajectory tree and its associated state hypothesis, as illustrated in Fig.~\ref{fig:control-tree-subproblems}}.
\begin{figure}[!htb]
  \center{\includegraphics[width=0.45\textwidth]{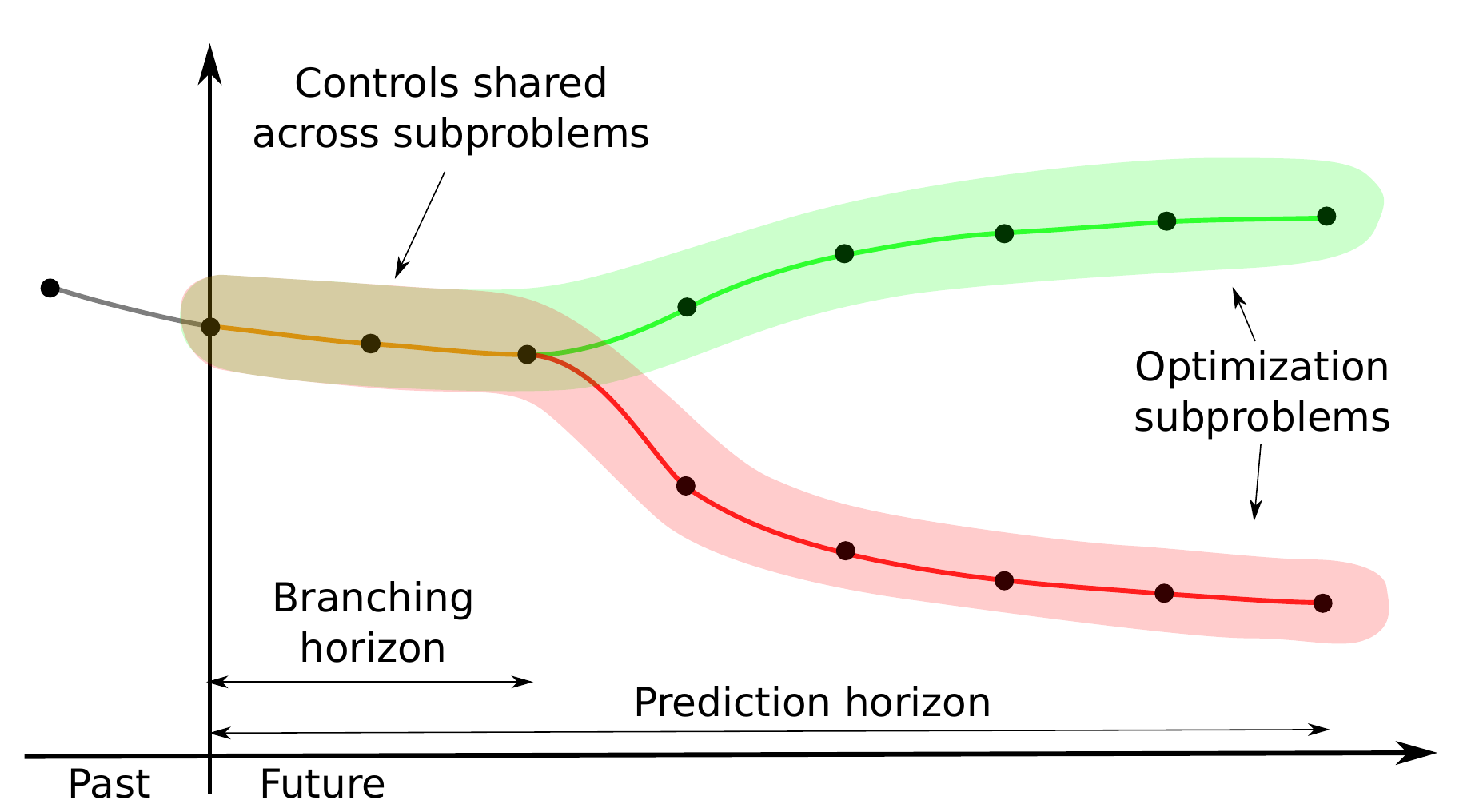}}
\caption{\rrevise{\label{fig:control-tree-subproblems} Decomposition into subproblems: each branch defines an optimization subproblem. The controls before the branching horizon are shared across branches, inducing coupling.}}
\end{figure}
\rrevise{The subproblem associated with a state hypothesis $m \in \mathcal{H}$ follows from the global objective~\eqref{eq:gen-optimization-problem} and is given by:
\begin{subequations}
\begin{align}
\min_{z_m} \quad & p(m)\, c_{s_m}(z_m) \label{eq:subproblem_cost} \\
\text{s.t.} \quad
& g_{s_m}(z_m) \le 0, \label{eq:subproblem_ineq}\\
& h_{s_m}(z_m) = 0, \label{eq:subproblem_eq}\\
& z_m(t) = \tilde{z}(t), \quad \forall t < L, \label{eq:subproblem_non_anticipativity}
\end{align}
\end{subequations}
where the cost term~\eqref{eq:subproblem_cost} corresponds to the term associated with hypothesis $m$ in the global objective~\eqref{eq:gen-cost-term}, and the constraints of the global problem are retained. In particular, the non-anticipativity constraint~\eqref{eq:subproblem_non_anticipativity} induces a coupling between subproblems through the shared consensus variable~$\tilde{z}$. Consequently, the subproblems cannot be optimized independently. However, this coupling is weak, as the non-anticipativity constraint only applies over the branching horizon, which is a small portion of the full trajectory-tree ($L \ll T$).}
%
%

The core idea of the D-AuLa algorithm is to take advantage of this decomposition. Each subproblem is smaller and can be optimized faster. In addition, some parts of the optimization can be parallelized. 
It performs multiple iterations, each consisting of two phases:
\begin{itemize} 
\item A distributed phase where a relaxed version of each subproblem is optimized.
\item A centralized phase at which the consensus variable $\tilde{z}$ is updated.
\end{itemize}
The proposed method integrates the Augmented Lagrangian method (AuLa) with Alternating Direction Method of Multipliers (ADMM) to address both subproblem-specific constraints and the non-anticipativity constraint at the same abstraction level. This is in contrast to a hierarchical approach, where ADMM handles decomposition in an outer loop, and subproblems are solved to high accuracy using standard constrained optimization in an inner loop. By enforcing all constraints simultaneously and incrementally, our method avoids the inefficiency of strictly satisfying subproblem constraints before achieving consensus, ensuring a more cohesive and efficient optimization process. 

\revise{
In the literature, the ADMM algorithm usually refers to a decomposition into two subproblems which are solved and updated in a sequential fashion. Our method builds upon the variation called \textit{consensus optimization}, as described in \citep{ADMM}, for a \textit{N}-fold decomposition, and where the optimizations of the subproblems are parallelizable. We refer the reader to Appendices~\ref{sec:background_aula} and ~\ref{sec:background_admm}  for background on AuLa and consensus-ADMM respectively.
}

\subsubsection{Distributed Augmented Lagrangian (D-AuLa)}
The optimization problem of each \revise{subproblem} is similar to a standard constrained optimization problem, but the coupling constraint \rrevise{\eqref{eq:subproblem_non_anticipativity} makes it peculiar, as the global consensus variable $\tilde{z}$ is shared across all subproblems and must be jointly optimized. In other words, the branches of the trajectory tree cannot be optimized independently, since they share a common trunk that couples the subproblems.} 

\rrevise{To orchestrate this joint optimization, while still exploiting the decomposition into subproblems, we form $|\mathcal{H}|$ unconstrained objectives, one for each subproblem, by combining the augmentations from both the AuLa and the ADMM methods. }

We call this unconstrained objective the \textit{Distributed Augmented Lagrangian}:
\begin{subequations} \label{eq:lagrangian-problem}
\begin{alignat}{3}
L_m(z_m, \tilde{z}, &\lambda_m, \kappa_m, \eta_m) =\ &&
  p(m)c_{s_m}(z_m) 
  \label{eq:lagrangian-cost-term}\\
&+ \lambda_m \cdot g_{s_m}(z_m) &&+ \frac{\mu}{2} \norm{[g_{s_m}(z_m) > 0] \odot g_{s_m}(z_m)}^2 \label{eq:lagrangian-ineq}\\
&+ \kappa_m \cdot h_{s_m}(z_m) &&+ \frac{\nu}{2} \norm{h_{s_m}(z_m)}^2 \label{eq:lagrangian-eq}\\
&+ \eta_m \cdot \Delta z_m &&+ \frac{\rho}{2} \norm{\Delta z_m}^2, \label{eq:admm}
\end{alignat}
\end{subequations}
where $\Delta z_m \in \mathbb{R}^{L\times d}$, with $\Delta z_m(t) = z_m(t) - \tilde{z}(t), \forall t < L$ is the difference between $z_m$ and the consensus $\tilde{z}$ in the branching horizon. 

Equations~\eqref{eq:lagrangian-ineq} and \eqref{eq:lagrangian-eq} are the terms of the Augmented Lagrangian method for handling the inequality and equality constraints intrinsic to each subproblem. The $\odot$ notation in ~\eqref{eq:lagrangian-ineq} is for the element-wise multiplication. In other words, the square penalty applies only on the elements of $g_{s_m}(z_m)$ which are violating the inequality constraint.

Equations~\eqref{eq:admm} are similar ADMM terms for solving the coupling between the subproblems.

 
$\lambda_m \in \mathbb{R}^{d_{g_m}}, \kappa_m \in \mathbb{R}^{d_{h_m}}, \eta_m \in \mathbb{R}^{L \times d}$ are the dual variables, and $\mu, \nu, \rho$ are fixed positive constants.

\subsubsection{Optimization Procedure}
The optimization algorithm consists in executing several iterations of the following steps:
\begin{subequations} \label{eq:algo}
\begin{align}  
{z_{m}^{k+1}} &:= \min_{z_m} L_m(z_m, \tilde{z}^k, \lambda_m^k, \kappa_m^k, \eta_m^k) \label{eq:algo-newton}\\
\lambda_m^{k+1} &:= \max(0, \lambda_m^{k} + \mu g_{s_m}(z_m^{k+1})) \label{eq:algo-dual-ineq} \\
\kappa_m^{k+1} &:= \kappa_m^k + \nu h_{s_m}(z_m^{k+1}) \label{eq:algo-dual-eq} \\
\tilde{z}^{k+1} &:= \frac{1}{|\mathcal{H}|}\sum_{m\in \mathcal{H}}{z_m^{k+1}} \label{eq:z_update} \\
\eta_m^{k+1} &:= \eta_m^k + \rho \Delta z_m^{k+1} . \label{eq:dual-admm}
\end{align}
\end{subequations}
where the subscript $k$ indicates the number of iterations. The lines~\eqref{eq:algo-newton}, \eqref{eq:algo-dual-ineq}, \eqref{eq:algo-dual-eq} and \eqref{eq:dual-admm} are indexed by $m$, and are performed for each subproblem. On the other hand, \eqref{eq:z_update} is a centralized step, it is where the $z_m$ resulting from all sub-optimizations are gathered together. The Fig.~\ref{fig:optim-flow} shows the execution flow. The next subsections explain each line of the procedure one by one.

\begin{figure}[!htb]
 \center{\includegraphics[width=0.475\textwidth]{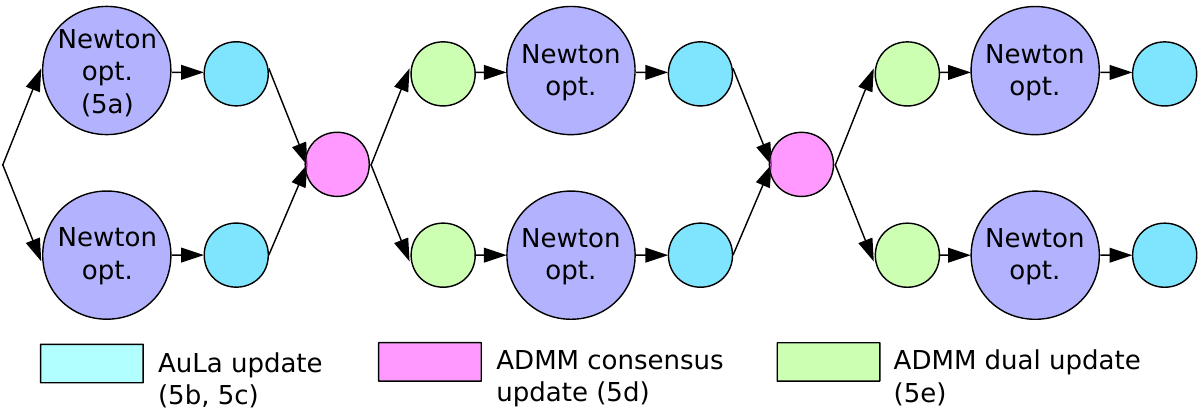}}
 \caption{Execution flow for $|\mathcal{H}|=2$. The costly steps (Newton minimizations) are parallelized.}
 \label{fig:optim-flow}
\end{figure}

\noindent\textbf{Initialization\normalfont{:}}
All dual variables $\lambda_m, \kappa_m, \eta_m$ are initially set to $0.0$.
The optimization variables $z_m$ as well as the initial common consensus variable $\tilde{z}$ can be initialized randomly, or by any better heuristic to speed-up the optimization.

\medskip
\noindent\textbf{Unconstrained minimization\normalfont{:}}
The step~\eqref{eq:algo-newton} is the minimization of $|\mathcal{H}|$ unconstrained optimization problems and is the part which is computationally expensive. These are optimized using a Gauss-Newton procedure. It therefore requires the gradients $\nabla c$, $\nabla g$, $\nabla h$ to be provided. The Hessian, or second order derivatives, are approximated from the gradients.
As Fig.~\ref{fig:optim-flow} shows, these optimizations are independent from each other and can be performed in parallel. This is where the algorithm takes full advantage of the decomposition. 

\medskip
\noindent\textbf{Augmented Lagrangian dual variables update\normalfont{:}}
Equations~\eqref{eq:algo-dual-ineq} and ~\eqref{eq:algo-dual-eq} update the dual variables corresponding to the inequality and equality constraints respectively. As in the pure AuLa method, the Lagrange multipliers are updated to a value that pushes out of constraint violations and ``should'' lead to satisfied constraints in the next iteration.

\medskip
\noindent\textbf{ADMM consensus variables update\normalfont{:}}
Line ~\eqref{eq:z_update} updates the consensus variable. It can be performed only once the computations of all the $z_m^{k+1}$ are finished. Its computation is fairly intuitive, $\tilde{z}^{k+1}$ is the average of the $z_m^{k+1}$ on the branching horizon. 

\medskip
\noindent\textbf{ADMM dual variable update\normalfont{:}}
Line~\eqref{eq:dual-admm} updates the dual variables corresponding to the ADMM equality constraint. It is based on the last results $z^{k+1}_m$ and the consensus variable $\tilde{z}^{k+1}$. It can therefore be performed only after the step~\eqref{eq:z_update}. The ADMM equality constraint ~\eqref{eq:gen-non-anticipativity} is treated similarly to the standard equality constraint~\eqref{eq:gen-mpc_model} throughout the optimization procedure. The only difference, is that the constraint definition is not fixed, since $\tilde{z}$ is a moving target. Both the constraints related to the planning problem and the tree consistency are enforced gradually over the course of the optimization. 

\medskip
\noindent\textbf{Termination criterion\normalfont{:}}
The procedure can be stopped once the constraints of each subproblem are satisfied~\eqref{eq:aula-primal-ineq}, ~\eqref{eq:aula-primal-eq}, once a consensus for $\tilde{z}$ is reached~\eqref{eq:admm-primal}, and once the optimization procedure is stationary ~\eqref{eq:aula-dual},~\eqref{eq:admm-dual}. Formally it means that the AuLa and ADMM residuals are smaller than threshold values ($\epsilon^{pri}, \epsilon^{opt}, \xi^{pri}, \xi^{dual} \in \mathbb{R}_{>0}$). 

\begin{subequations} \label{eq:termination-criterion}
\begin{alignat}{3}
 &\norm{[g_{s_m}(z^k_{m}) > 0] \odot g_{s_m}(z^k_{m})} \leq \epsilon^{pri}, \label{eq:aula-primal-ineq}\\
 &\norm{h_{s_m}(z_{m}^k)} \ \ \ \leq \epsilon^{pri}, \label{eq:aula-primal-eq} \\
   &\norm{z^{k}_{m} - z^{k-1}_{m}} \ \ \leq \epsilon^{opt}, \label{eq:aula-dual} \\
 &\norm{z^k_{m} - \tilde{z}^k} \leq \xi^{pri}, \label{eq:admm-primal} \\
 &\norm{\tilde{z}^{k} - \tilde{z}^{k-1}}\ \ \ \leq \xi^{dual}. \label{eq:admm-dual}
\end{alignat}
\end{subequations}

In the examples of the experimental section, the algorithm typically converges after 10 to 30 iterations.

\subsubsection{Convergence and Optimality}
This algorithm differs from the consensus ADMM algorithm by its additional constraints $g$ and $h$ which lead to additional augmentation terms in the Augmented Lagrangian ~\eqref{eq:lagrangian-ineq}, \eqref{eq:lagrangian-eq} and algorithmic steps \eqref{eq:algo-dual-ineq} and ~\eqref{eq:algo-dual-eq}. These additional constraints do not affect the convergence properties of the ADMM algorithm. For convex problems with equality constraints, but no inequality constraints, a proof is provided in the Appendix~\ref{sec:proof} showing that convergence to the global optimum is guaranteed. The key points of the argument are summarized as follows:
\begin{itemize}
\item The D-AuLa algorithm can be reframed into a form closer to the standard ADMM, with only two subproblems and sequential steps. The additional constraints apply to only one of the subproblems. We call it the Constrained ADMM algorithm (C-ADMM). The relation between those two forms is established in Appendix~\ref{sec:equivalence}
\item Convergence of the C-ADMM algorithm is proven in Appendix~\ref{sec:constrained_admm_proof}, by extending the proof of the standard ADMM given in \citep{ADMM}. 
\end{itemize}
Non-convexity is discussed in Appendix~\ref{sec:nonconvex}, and 
inequality constraints are discussed in Appendix~\ref{sec:inequality_constraints}.


\subsection{Experiments}
We evaluate the proposed approach in two distinct domains, as presented in Sections~\ref{sec:mpc_acc} and~\ref{sec:mpc_slalom}. The performance obtained with PO-MPC, particularly in terms of control costs, is analyzed and discussed independently within each section. Optimization time with D-AuLa is specifically examined and discussed in Section~\ref{sec:benefit_decomposition}. The solver is implemented in C++. The source code and a supplementary video are available for reference\footnotemark \footnotetext{\url{https://github.com/cambyse/trajectory_tree_mpc}}.

\subsubsection{Adaptative Cruise Control among pedestrians} \label{sec:mpc_acc}
We consider the problem briefly introduced in Fig.~\ref{fig:example_problem_mpc}. The car drives along a street in presence of pedestrians on the sides who may cross. The pedestrians' intentions are partially observed through a simulated perception module that outputs, for each pedestrian, the probability that it will cross in front of the car. Eventually pedestrians either cross the street, or walk on the walkway making their intention fully observable. 

We optimize the longitudinal acceleration of the car. The car dynamics are modeled as a linear system~\eqref{eq:mpc_linear_dynamic} and we consider quadratic cost with linear constraints. The problem is reduced to an optimization in control space only (see Section \ref{sec:problem_reduction}). The optimization problem~\eqref{eq:gen-optimization-problem} takes the form of $N$ loosely-coupled QPs. 

Performance is evaluated on randomized scenes in simulation. The car dynamics are simulated in Gazebo. \revise{Planning occurs at a frequency of \SI{10}{\hertz}, with trajectory-trees planned over a prediction horizon of \SI{5}{\second}, and a branching horizon set at \SI{1}{\second}}. The trajectory-tree is discretized at 4 steps per second.

We compare the results obtained with the trajectory-tree optimization versus two baselines. The first baseline (referred to as single-hypothesis) is a classical MPC approach where a sequential trajectory is planned. To make sure that no collision happens with pedestrians, the hypothesis used is the worst case: as long as the pedestrian's intention is uncertain, the car plans to stop in front of it. The second baseline (refered to as full-observability) is an idealized case, possible only in simulation, the perception module has full information: it knows in advance which pedestrians will cross. It outputs crossing probabilities which are either $0.0$ or $1.0$. This second baseline gives a lower-bound of the control costs. Only a sequential trajectory needs to be planned in this case.

\medskip
\noindent\textbf{System dynamics\normalfont{:}}
The system dynamics are described by the linear system:
\begin{align}
\begin{pmatrix} 
x_{t+1} \\
 v_{t+1}
\end{pmatrix} 
= 
\begin{pmatrix}
 1 & dt \\
 0 & 1
\end{pmatrix}
\begin{pmatrix} 
x_{t} \\
 v_{t}
\end{pmatrix} 
+
\begin{pmatrix} 
0 \\
dt
\end{pmatrix}
\begin{pmatrix} 
u_t
\end{pmatrix}\ , \label{eq:mpc_linear_dynamic}
\end{align}
 where $x$ and $v_t$ are respectively the longitudinal coordinate of the vehicle along the road and its velocity. The continuous state is the compound vector $(x, v)$. $u_t$ is the controlled acceleration.

\medskip
\noindent\textbf{Partially observable discrete state\normalfont{:}}
With $N_p$ pedestrians located at $x_0< ..<x_{N_p-1}$ ahead of the vehicle, the discrete state can be described by an integer $m \in [0..N_p]$ indicating the closest pedestrian who crosses. $m=N_p$ is the case where no pedestrian crosses.

We note $p_m$ the output of the simulated prediction module giving the crossing probability of the $m^{th}$ pedestrian. The $m^{th}$ pedestrian is the closest crossing pedestrian if: it crosses, and, the pedestrians before him do not cross, such that:
\begin{align}
p(m) = p_m\prod_{i=0}^{m-1}1-p_i \ .
\end{align}

\medskip
\noindent\textbf{Costs and constraints\normalfont{:}}
The trajectory-tree is optimized w.r.t the following trajectory costs:
\begin{itemize}
\item \textbf{Speed}: The matrix 
$\boldsymbol{Q}=\begin{pmatrix} 
0 & 0\\
0 & k_v \footnotemark
\end{pmatrix}$ penalizes the velocity difference between the vehicle speed and a given desired velocity.
\item \textbf{Acceleration}:
The square acceleration is penalized, $\boldsymbol{R}=\begin{pmatrix} k_u \footnotemark[\value{footnote}] \end{pmatrix}$.
\end{itemize}
In addition, the following constraints are applied:
\begin{itemize}
\item \textbf{Stop before the $i^{th}$ pedestrian}: A state inequality constraint applies to stop and keep a safe distance to the pedestrian, $x \leq x_{i} - d_{saftey} \footnotemark[\value{footnote}]$. This constraint is applied only when planning with respect to a state hypothesis in which the pedestrian intends to cross.
\item \textbf{Control bounds}: Longitudinal acceleration is constrained to stay between bounds $[-8.0, 2.0] m/{s^2}$.
\end{itemize}

\footnotetext{Table \ref{tab:acc_costs} is obtained with $k_u = 5.0$, $k_v = 1.0$, $d_{safety} = 2.5$ m}

\medskip
\noindent\textbf{Example of trajectory-trees\normalfont{:}}
Fig.~\ref{fig:control-tree-vs-sequential} shows a trajectory-tree obtained with a vehicle launched at $48$ km/h ($30$mph) in the presence of 3 pedestrians. Each pedestrian has a $0.15$ probability of crossing. This implies a probability of $0.61$ that the road is free. The trajectory-tree does not brake too hard, but still guarantees that it is possible to come to a stop in the worst case (see red curve in Fig.~\ref{fig:control-tree-vs-sequential}). On the other hand, the single hypothesis approach brakes much stronger.

\begin{figure}[!htb]
 \center{\includegraphics[width=0.45\textwidth]{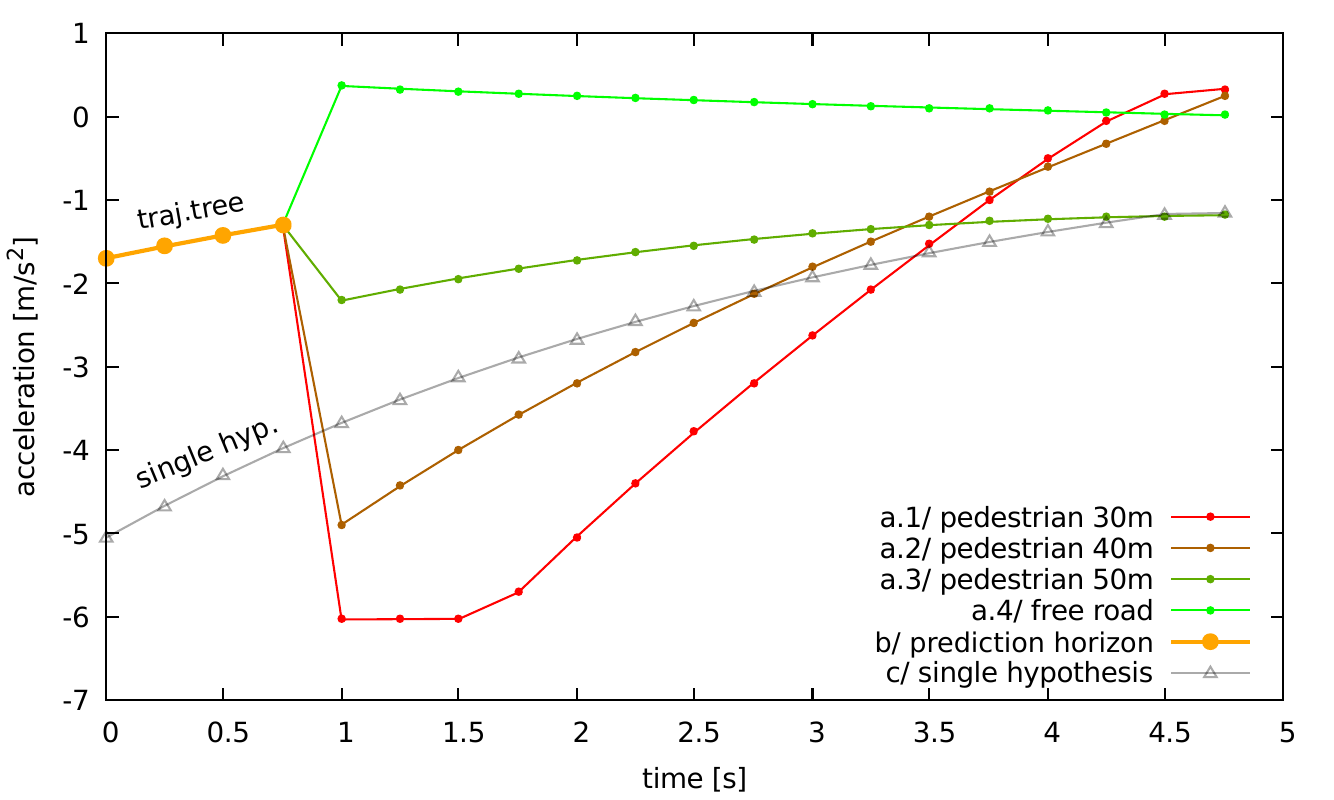}}
  \center{\includegraphics[width=0.45\textwidth]{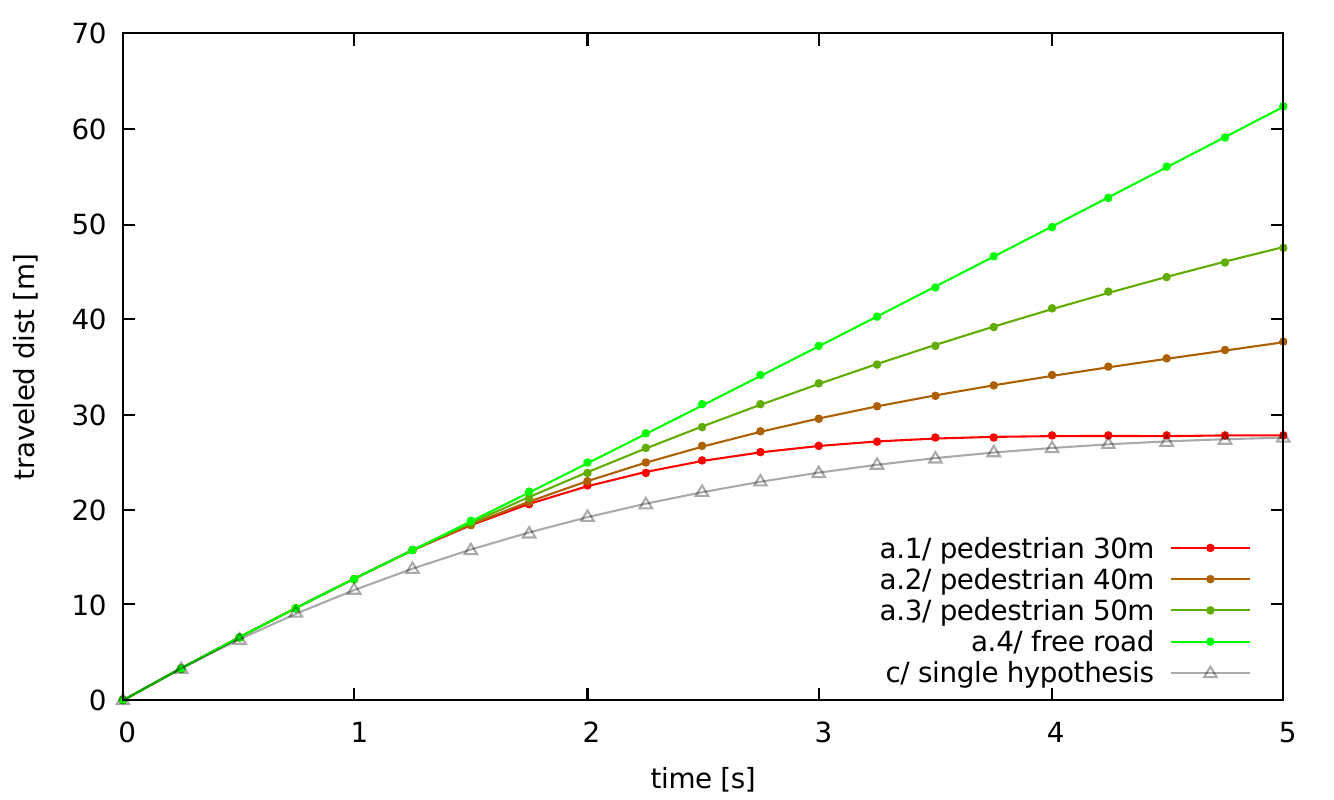}}
  \caption{Example of trajectory-tree when braking: a.1, a.2, and a.3 anticipate that a pedestrian crosses. a.4 corresponds to the free road scenario. c. is obtained with single hypothesis MPC assuming that the closest pedestrian crosses.}
\label{fig:control-tree-vs-sequential}
\end{figure}

\medskip
\noindent\textbf{Influence of the belief state\normalfont{:}}
Fig.~\ref{fig:belief-state} focuses on the branching horizon ($t\leq 1.0s$) and shows the influence of the crossing probability. When the probability is low, the planned control is more optimistic, whereas when this probability increases, the control policy becomes more conservative and tends to the single hypothesis case.

\begin{figure}[!htb]
 \center{\includegraphics[width=0.5\textwidth]{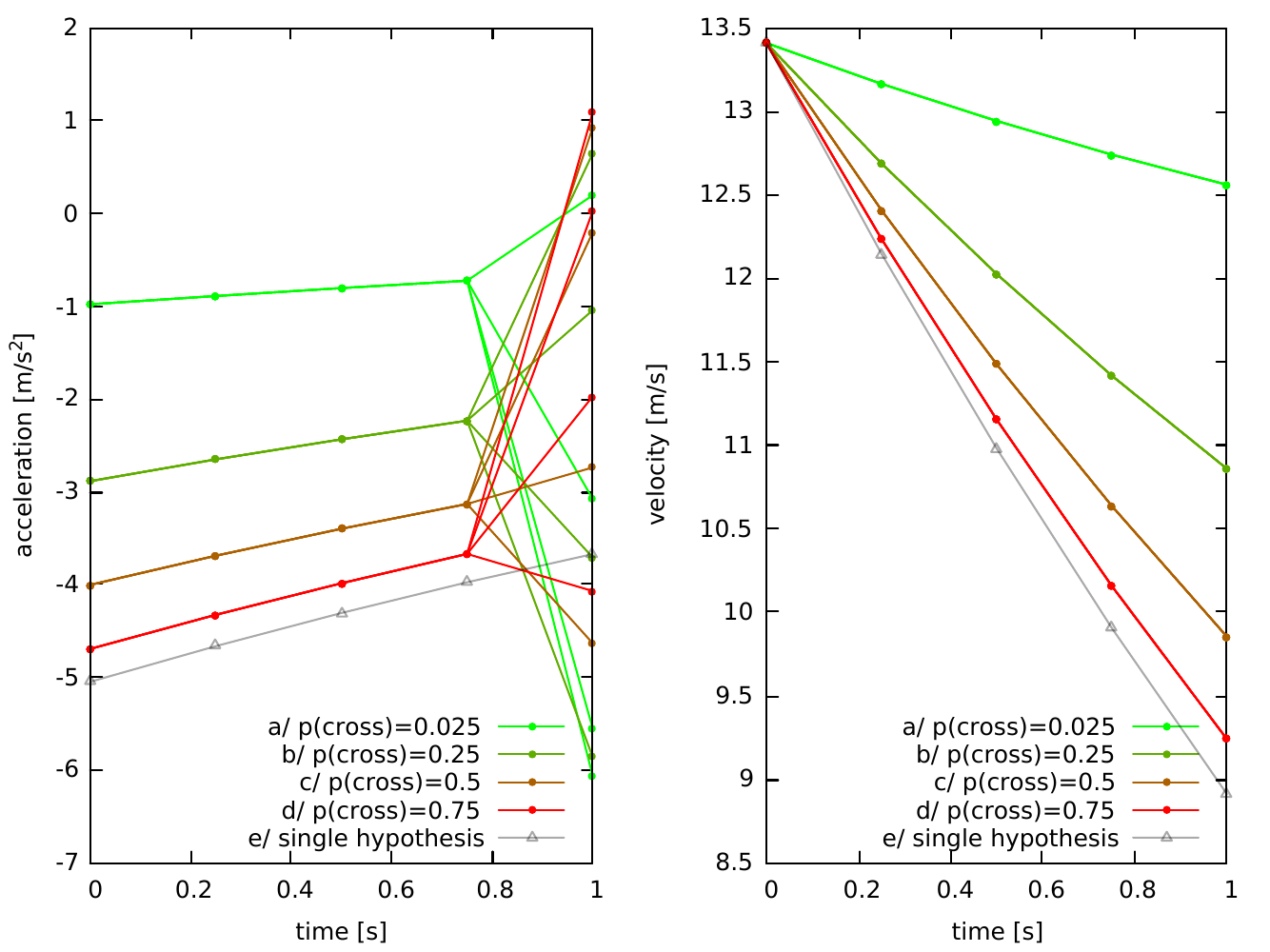}}
  \caption{Influence of the belief state on the braking within the branching horizon: Low crossing probabilities (e.g. a/, b/) lead to a more optimistic trajectory-tree.}
\label{fig:belief-state}
\end{figure}
\medskip
\noindent\textbf{Evaluation on random scenarios\normalfont{:}}
The algorithm is tested under various combinations of pedestrian density and pedestrian behavior (average crossing probability). Each run is performed over 30 minutes of simulated driving. We report in Table~\ref{tab:acc_costs} on the average costs (as defined by the matrices $\boldsymbol{Q}$ and $\boldsymbol{R}$) of the controls which are actually executed, until the next planning cycle occurs. To give a sense of the conservativeness of the car, we indicate the average velocity. 

In Table~\ref{tab:acc_costs}, we also indictate the performance obtained with simplified trajectory-trees. With 80 pedestrians per km, up to 4 pedestrians can enter the planning horizon such that 5 branches are needed (Tree-5). The variations (Tree-4, Tree-3 and Tree-2) are obtained with a simplified trees (having 4, 3 and 2 branches respectively) to evaluate the benefit of having larger trees versus the computation time. 
\begin{table}[h]
\footnotesize
\begin{center}
\begin{tabular}{|c||c|c||c|c||c|}
\hline
 & \thead{Pedestr.\\ per km} & \thead{Percentage\\of pedestr.\\crossing} & \thead{Avg.\\cost} & \thead{Avg.\\speed \\ (m/s)} & \thead{Planning\\time\\(ms)} \\
\hline
Tree-2 & 20 & 5\% & 30.8 & 10.7 & 5.62 \\
Single hyp. & 20 & 5\% & 65.9 & 7.8 & 2.77 \\
Full obs. & 20 & 5\% & 15.5 & 12.6 & 0.92 \\
\hline
Tree-2 & 20 & 25\% & 73.5 & 7.45 & 7.08 \\
Single hyp. & 20 & 25\% & 89.8 & 5.83 & 2.95 \\
Full obs. & 20 & 25\% & 68.3 & 8.25 & 2.02 \\
\hline
Tree-5 & 80 & 1\% & 46.5 & 8.61 & 20.0 \\
Tree-4 & 80 & 1\% & 47.5 & 8.59 & 16.45 \\
Tree-3 & 80 & 1\% & 48.4 & 8.53 & 13.7 \\
Tree-2 & 80 & 1\% & 53.2 & 8.24 & 10.1 \\
Single hyp. & 80 & 1\% & 93.5 & 5.02 & 4.08 \\
Full obs. & 80 & 1\% & 18.7 & 12.4 & 1.23 \\
\hline
Tree-5 & 80 & 5\% & 66.4 & 7.09 & 20.01 \\
Tree-4 & 80 & 5\% & 67.7 & 7.02 & 16.8 \\
Tree-3 & 80 & 5\% & 68.8 & 6.99 & 13.7 \\
Tree-2 & 80 & 5\% & 74.6 & 6.57 & 9.94 \\
Single hyp. & 80 & 5\% & 103.1 & 4.27 & 4.17 \\
Full obs. & 80 & 5\% & 48.3 & 9.90 & 2.23 \\
\hline
Tree-5 & 80 & 25\% & 115.7 & 3.92 & 21.1 \\
Tree-4 & 80 & 25\% & 117.7 & 3.71 & 16.4 \\
Tree-3 & 80 & 25\% & 116.4 & 3.80 & 13.4 \\
Tree-2 & 80 & 25\% & 120.0 & 3.47 & 9.11 \\
Single hyp. & 80 & 25\% & 127.6 & 2.86 & 4.11 \\
Full obs. & 80 & 25\% & 115.2 & 4.24 & 4.02 \\
\hline
\end{tabular}
\end{center}
\caption{Performance comparison: Trajectory-trees lead to lower costs than sequential trajectories (single hyp.). The full obs. baseline is an idealized case assuming perfect information.}
\label{tab:acc_costs}
\end{table}
\medskip
\noindent\textbf{Results interpretation\normalfont{:}}
Trajectory-trees lead to better control costs than the single hypothesis MPC (up to twice lower costs in the case of 20 pedestrians per km, and 5\% of crossing probability). The cost are significantly reduced and come closer to the ideal case with full observability. In particular, the car drives less conservatively, maintains a higher average velocity but still ensures that it is possible to come to a stop safely if a pedestrian would cross. The benefit is more visible when the crossing probability is low (1\% and 5\%). With 25\% of crossing probability and 4 pedestrians in the planning horizon, it is quite likely that at least one pedestrian will cross. This is reflected by the small difference between the single-hypothesis and full observable baselines.  


Trajectory-trees with the highest number of branches perform best. This is consistent since the problem structure is best modeled in that case. However, the cost improvement when adding a branch to the tree diminishes with the number of branches, e.g. the improvement is large between the Single-hypothesis case and Tree-2, but is small between Tree-4 and Tree-5. This suggests that, in this example, a simple tree can be a good trade-off between performance and computation time.

Trajectory-trees take longer to optimize, but computations remain fast and compatible with a real-time application. 

\subsubsection{Slalom among uncertain obstacles} \label{sec:mpc_slalom}
In this example, the ego-vehicle drives along a straight reference trajectory. Obstacles appear randomly. Obstacle detection is imperfect: there are \textit{false-positives} i.e. obstacles are detected although they do not exist. This captures a common problem when working with sensors like radars that can suffer from a high rate of false detections. The simulated detection module also outputs an existence probability for each detected object. The closer the car gets to the obstacle, the more reliable are the observations. Once the distance to obstacle becomes lower than a threshold (randomized in the simulation), the object becomes fully observed: it gets a probability of 1.0 or disappears.

Optimization is performed in configuration space using the \textit{K-Order-Motion-Optimization} (KOMO) formulation \citep{komo_solver, komo_17-toussaint-Newton}. \revise{The prediction horizon is set to \SI{5}{\second} with 4 steps per second, and a branching horizon of \SI{1}{\second}}.
\begin{figure}[ht]
        \centering
  \centering
  \includegraphics[width=0.95\linewidth]{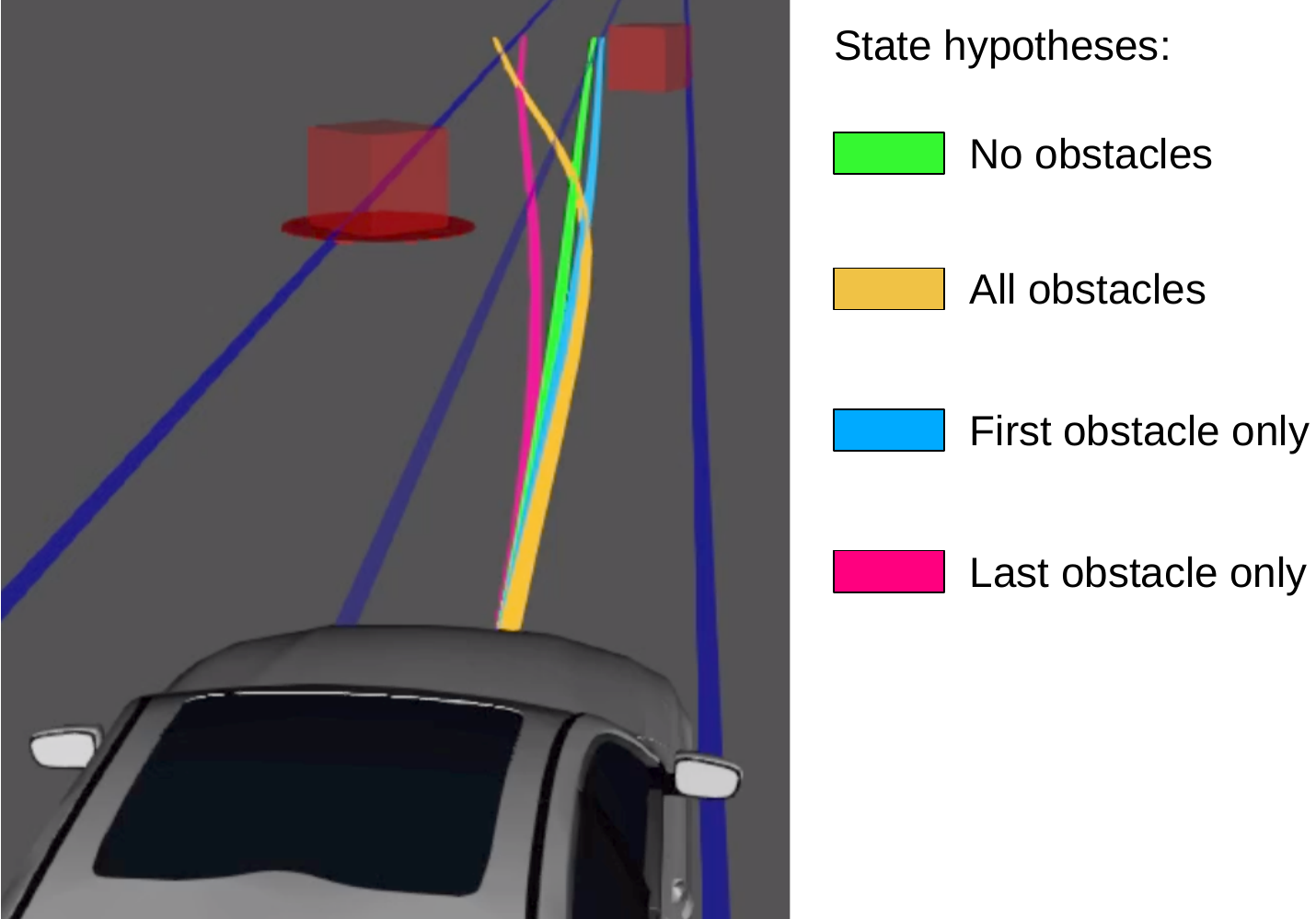}  
\caption{Avoidance of 2 uncertain obstacles: Each trajectory corresponds to a different combination of object presences.}
\label{fig:avoidance_tree_vs_linear}
\end{figure}

\medskip
\noindent\textbf{State space and kinematics\normalfont{:}} \label{section:non-holonomic}
Optimization is performed in $SE(2)$. No slippage is assumed, this is enforced by the following non-holonomic constraint:
\begin{subequations}
\begin{align}
 \dot{x}(t) cos(\theta) - \dot{y}(t) \sin(\theta) = 0, \forall{t}, \label{eq:non-holonomic}
\end{align}
\end{subequations}
where $\begin{pmatrix} x, y, \theta \end{pmatrix} $ is the vehicle pose.

\medskip
\noindent\textbf{Partially observable discrete state\normalfont{:}}
The discrete states represent the possible combinations of obstacle existences. With 2 uncertain obstacles in the planning horizon, there are 4 possible states (see Fig.~\ref{fig:avoidance_tree_vs_linear}).

\medskip
\noindent\textbf{Costs and constraints\normalfont{:}}
The following trajectory cost terms are minimized:
\begin{itemize}
\item \textbf{Acceleration}: The square accelerations $\ddot{x}$ and $\ddot{y}$ as well as the square angular acceleration $\ddot{\theta}$.
\item \textbf{Distance to centerline}: The square distance to a reference line.
\item \textbf{Speed}: The square difference to a desired velocity.
\end{itemize}

\noindent In addition, the following constraints apply:
\begin{itemize}
\item \textbf{Kinematics}: The non-holonomic constraint, see ~\eqref{eq:non-holonomic}.
\item \textbf{Collision Avoidance}: The distance to obstacles must stay greater than a safety distance.
\end{itemize}

Unlike the previous example, the non-holonomic and collision avoidance constraints make the problem \textit{non-convex}.

Gradients are computed analytically while we use the gauss-newton approximation of the \nth{2} order derivatives.

\medskip
\noindent\textbf{Evaluation on random scenarios\normalfont{:}}
Table~\ref{tab:table_obstacle_avoidance} gathers results obtained when simulating 30 minutes of driving with, on average, one potential obstacle every 17 meters resulting overall in 900 uncertain obstacles being encountered. As in the previous example, we compare against two baselines: First, the single-hypothesis MPC which does not use the belief state information and considers all obstacles the same way, regardless of their existence probabilities. Second, the idealized full observability case, without uncertainty. 
\begin{table}[h]
\begin{center}
\footnotesize
\begin{tabular}{|c||c|c||c|c||c|}
\hline
  & \thead{Percentage of \\ false positive} & \thead{Avg.\\cost} & \thead{Planning\\time (ms)} \\
\hline
Tree-4  & 90\% & 0.56 & 36.6 \\
Single hyp. & 90\% & 0.95 & 22.4 \\
Full obs. & 90\% & 0.35 & 11.5 \\
\hline
Tree-4  & 75\% & 1.21 & 49.1 \\
Single hyp. & 75\% & 1.79 & 30.0 \\
Full obs. & 75\% & 1.01 & 19.2 \\
\hline
\end{tabular}
\end{center}
\caption{Performance comparison: Trajectory-trees lead to significantly lower trajectory costs. The fully observable case is an idealized case with full information.}
\label{tab:table_obstacle_avoidance}
\end{table}

\medskip
\noindent\textbf{Results interpretation\normalfont{:}}
Trajectory-trees consistently result in lower costs within the branching horizon compared to the single-hypothesis baseline. When taking the idealized fully-observable case as a base, the cost reduction amounts to 65\% and 74\% in scenarios with false positive probabilities of 90\% and 75\%, respectively. The vehicle exhibits reduced lateral acceleration while successfully avoiding all true obstacles.


Computation time required for trajectory-trees is higher than for sequential trajectories but remains largely compatible with the planning frequency of \SI{10}{\hertz}. Optimization time is higher in the case with 75\% of false positive, reflecting the fact that more obstacles are encountered, which results in a higher number of iterations required by the solver to satisfy the collision avoidance constraint.

\subsubsection{Benefit of Optimization Decomposition (D-AuLa)}\label{sec:benefit_decomposition}
Optimization times are compared against the scenario where trajectory-tree optimization is not decomposed and treated as a single large joint optimization problem. The undecomposed problem is solved using the same solver, which in this case, effectively reduces to the standard Augmented Lagrangian method. Tests were conducted on a Intel\textsuperscript{\textregistered} Core\textsuperscript{\texttrademark} i5-8300H. The results are shown in Fig.~\ref{fig:daula_vs_undecomposed}.


\begin{figure}[ht]
    \centering
\subfloat[Pedestrian domain: The number of branches is proportional to the number of pedestrians\label{fig:pedesrian_runtimes}]{
\begin{tikzpicture}
    \centering
\begin{axis}[
	outer sep=0pt,
	inner sep=2pt,
	ymode=log, 
    width=0.56\linewidth,
    height=0.7\linewidth,
    title={Pedestrian domain},
    xlabel={Number of branches},
    ylabel={Optimization time (ms)},
    ymajorgrids=true,
    x label style={at={(axis description cs:0.5,0.05)},anchor=north},
    ylabel style={at={(0.1,0.5)}, anchor=north},
    xmax=45,
    legend columns=1,
    legend image post style={scale=0.3} 
]
   
        \addplot[blue, mark=o] table [col sep=space]{plots/1_pedestrian/execution_time_dec_qp_100.dat};
        
        \addplot[red, mark=o] table [col sep=space]{plots/1_pedestrian/execution_time_joint_qp_100.dat};

\end{axis}
\end{tikzpicture}
}
\hfill
\subfloat[Slalom domain: The number of branches is $2^n$ ($n$ is the number of obstacles)\label{fig:slalom_runtimes}]{\begin{tikzpicture}
\begin{axis}[
	outer sep=0pt,
	inner sep=2pt,
    width=0.56\linewidth,
    height=0.7\linewidth,
    xtick={2, 4, 8, 16, 32},
    ytick={0, 100, 250, 500, 1000},
    title={Slalom domain},
    xlabel={Number of branches},
    ymajorgrids=true,
    ymin=0,
    x label style={at={(axis description cs:0.5,0.05)},anchor=north},
    ylabel style={at={(0.05,0.5)}, anchor=north},
    legend columns=1,
    legend image post style={scale=0.3}, 
    legend style={at={(0.05,0.85)}, anchor=west, legend columns=1}
]
\addplot[blue, mark=o] table [col sep=space]{plots/2_slalom/execution_time_dec.dat};
\addlegendentry{D-AuLa}
    
\addplot[red, mark=o] table [col sep=space]{plots/2_slalom/execution_time_dec_joint.dat};
\addlegendentry{AuLa}
    
\end{axis}
\end{tikzpicture}}
\caption{\label{fig:daula_vs_undecomposed} Decomposed (D-AuLa) vs. undecomposed (AuLa) optimization: D-AuLa scales exponentially better in the pedestrian case \ref{fig:pedesrian_runtimes}. In the slalom case \ref{fig:slalom_runtimes}, it is faster by a factor 2.75.}
\end{figure}

D-AuLa scales nearly linearly with respect to the number of branches. This can be understood easily; one optimizes as many subproblems as branches in the tree, but the size of each subproblem does not change. 

In the pedestrian example, we use a single shooting transcription. When optimizing sequential trajectories this transcriptions leads to Hessian matrices of the Augmented Lagrangian which are dense. Since the $N$ subroblems are sequential trajectories, the $N$ Hessian matrices are dense. Optimization time is dominated by the computations and Cholesky decompositions these Hessian matrices, which reults in a complexity $\mathcal{O}(N \times T^3)$ which is  cubic with respect to the number of trajectory elements $T$, but linear with respect to the number of subproblems $N$.

When optimizing jointly (AuLa), we use the same solver setting with dense matrix arithmetic, the total number of elements optimized is $L + N \times (T - L)$, therefore leading to a complexity of $\mathcal{O}((T \times N)^3)$, i.e. cubic with respect to the number of branches $N$. It is however, worth noting, that the Hessian of the undecomposed problem becomes sparse for $N>1$, and sparsity increases with $N$, such that one could in principle use sparse matrix arithmetic to improve undecomposed optimization.

When optimizing the undecomposed problem with an off-the-shelf QP solver OSQP~\citep{stellato2020_osqp} specializing in large sparse QPs, we observe that D-AuLa is faster but only for a high number of branches ($N\geq400$). For $N=600$, D-AuLa is faster by a factor $7$ with an optimization time of $\SI{1.8}{\second}$ compared to $\SI{14.3}{\second}$ for OSQP. This updates the measurements obtained with OSQP provided in \citep{mpc_0_phiquepal2021control} which were including expensive matrix manipulations, taking place outside of the OSQP solver as part of the baseline measurement. The lower optimization times obtained with OSQP for smaller values of $N$ can be attributed, at least in part, to the fact that D-AuLa is not specifically tailored for QP problems and does not exploit certain structural properties inherent to QPs (e.g. constancy of Hessian). It therefore requires a high number of branches such that the benefits of the decomposition with D-AuLa overcome a specialized solver.

In the slalom example, D-AuLa is faster by a constant ratio (approximately 2.75), indicating that both exhibit the same complexity. It is consistent with the theoritical complexity of the trajectory-tree optimization with the T-KOMO transcription as will be explained in Section~\ref{sec:tree_komo}. Optimization time scales linearly with respect to the number of trajectory elements, regardless of the linear or tree structure. Decomposing the optimization brings the benefit of parralelization, but does not change the complexity.
\subsection{Discussion} \label{sec:mpc_discussion}
These experiments show that the PO-MPC approach leads to lower control costs than a sequential MPC approach, without compromising the robustness of the constraints satisfaction. This translates into a car driving less conservatively without compromising safety.

The increased optimization time is significantly mitigated by the D-AuLa method which leverages the decomposability of the PO-MPC control policies. Optimization time is compatible with real-time planning for a moderate number of branches in the trajectory-tree (e.g. five branches in the first example, and four in the second example). D-AuLa is a general method applicable both to linear and non-linear MPC.  

The first experiment shows that a large part of the performance gain is already obtained with a trajectory-tree with two branches, and the performance improvement is comparatively smaller with five branches. This suggests that, in general, adopting a simplified trajectory-tree, i.e. clustering modalities, can represent a favorable trade-off between performance and computation time.

An interesting extension of this work could involve specializing D-AuLa to QP problems, which are common in domains like autonomous driving. \revise{Another relevant extension would be to generalize the PO-MPC approach beyond deterministic continuous dynamics. This could be achieved by coupling the presented approach to Tube MPC techniques to account for continuous but statistically bounded disturbances within each modality.}

\added{Finally, the proposed framework could be extended by explicitly reasoning about future belief evolution during planning, rather than relying on a fixed determinization after a single branching stage. Such an extension would allow anticipated future observations as well as the effects of interactions with other agents to be incorporated into the predicted belief evolution, thereby supporting richer behaviors. This would generally lead to increasingly complex trajectory-tree structures. Scalable mechanisms for constructing trajectory-trees with partial contingency coverage would consequently become increasingly important. Possible directions toward such an extension are discussed further in Section~\ref{sec:towards_partial_contingencies}.}

\section{Task and Motion Planning (TAMP)} \label{sec:tamp}
We now develop the trajectory-tree optimization method for partially observable Task and Motion Planning problems. Unlike MPC where the task does not change over the planning horizon, there is here an additional layer of task planning. The goal of the presented approach (PO-LGP), is to determine both the symbolic actions, or policy, along with the corresponding continuous trajectory-tree which optimally solve the partially observable planning problem. 

\subsection{PO-LGP Problem Formulation}

We formulate the problem in terms of a decision tree and, for every action edge in this tree, cost and constraint objectives on the continuous trajectory. The decision tree is in belief space, including action and observation branchings.

To define a problem we first define the symbolic partially observable decision process that spans this tree. Second, we define the cost and constraints functions that define the optimization objectives for the continuous trajectory optimization problem associated with transitioning through this tree. 

\revise{
An optimal trajectory-tree then represents a reactive policy
that transitions the tree depending on observations, smoothly switching into different motion options.
}
\subsubsection{State Representation}
\revise{
We use the hybrid, partially observable state representation introduced in Section~\ref{sec:intro_trajectory_trees}.}

\medskip
\revise{\noindent\textbf{Continuous state\normalfont{:}} As typical in the LGP context, the continuous state is \( x \in \mathcal{X} \), where \( \mathcal{X} \) is the system configuration space. We can write \( \mathcal{X} = \mathbb{R}^n \times SE(3)^m \), for an \( n \)-DOF robot interacting with \( m \) rigid objects. While the continuous state space includes both the robot and environment configurations, only the robot’s joint trajectory is directly optimized. The environment evolves implicitly through kinematic constraints imposed by the symbolic plan structure\footnote{\revise{In scenarios involving underactuated dynamics---such as non-prehensile manipulation---the non-actuated DOFs are likewise included as optimization variables, subject to cost and constraint functions that model the physical interactions, as is standard in LGP, see e.g. \citep{tamp_lgp_3_18-toussaint-RSS}.}}. 
Although, this definition does not explicitly represent dynamic quantities such as velocity or acceleration, these are implicitly accounted for through cost and constraint functions that operate on consecutive configuration tuples along the trajectory, as further detailed in Section~\ref{sec:action_cost}.}

\medskip
\revise{\noindent\textbf{Symbolic state\normalfont{:}} The symbolic state $s$ represents discrete aspects of the planning problem. Unlike in the MPC case where the robot does not control the symbolic state, it is here coupled to a decision process with symbolic actions, as described in the next section.}

\medskip
\noindent \revise{In the example introduced in Fig.~\ref{fig:example_problem_tamp}, the continuous state is the robot's and blocks' configurations. The symbolic state models the color of a block’s colored side, as well as structural relations, such as which blocks are stacked on others. The orientation of the colored side of the block (a continuous variable) and its color (a symbolic variable) are not directly observable. Figure~\ref{fig:possible_initial_configurations} illustrates a state hypotheses in case of a scene with two blocks.
}

\begin{figure}[h]
\centering
\begin{minipage}[c]{0.25\linewidth}
    \centering
    \includegraphics[width=0.78\linewidth]{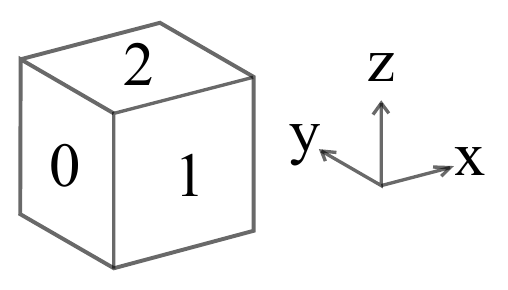}
    \includegraphics[width=0.78\linewidth]{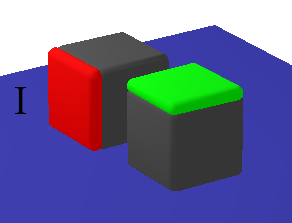}
    \includegraphics[width=0.78\linewidth]{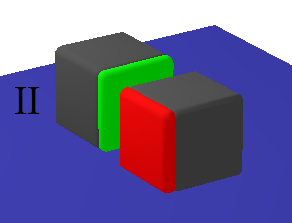}
\end{minipage}
\begin{minipage}[c]{0.7\linewidth}
    \centering
    \footnotesize
    \setlength\tabcolsep{3pt}
    \begin{tabular}{|c|c|c|}
        \hline
        Hyp. & \thead{Continuous \\ $x^{\text{latent}}$} & \thead{Symbolic \\ $s^{\text{latent}}$} \\ 
        \hline
        I & \makecell[l]{$\mathbf{n_0} = (-1, 0, 0)$\\ \\$\mathbf{n_1} = (0, 0, 1)$} & \makecell[l]{(colored block\_0 side\_0)\\(block\_0 red) \\ (colored block\_1 side\_2)\\(block\_1 green)} \\ 
        \hline
		II & \makecell[l]{$\mathbf{n_0} = (0, -1, 0)$\\ \\$\mathbf{n_1} = (-1, 0, 0)$} & \makecell[l]{(colored block\_0 side\_1)\\(block\_0 green) \\ (colored block\_1 side\_0)\\(block\_1 red)} \\
        \hline
    \end{tabular}
\end{minipage}

\caption{\revise{Example of hypotheses: The continuous part represents the normal vector of the colored face. It corresponds to a logical fact in the symbolic part indicating which side is colored. The symbolic state also contains the blocks' colors. There are 72 hypotheses (asuming each side can be the colored side).}}
\label{fig:possible_initial_configurations}
\end{figure}

\medskip
\revise{\noindent\textbf{Relation between the continuous and symbolic latent states\normalfont{:}} We assume that the continuous state hypotheses are linked to symbolic symbols. This linkage is necessary because the observation model of the decision process (described in the next section) is defined over symbolic states. Thus, inference about continuous aspects of the environment is performed through their symbolic counterparts. For example, in Fig.~\ref{fig:possible_initial_configurations}, the orientation of a block’s colored side is encoded symbolically. 
}




\subsubsection{Assumed Decision Process} \label{sec:tamp_decision_process}

We model the decision process by a 7-tuple $(S, \mathcal{X}, A, T, \Omega, O, C, G)$, where,
\begin{itemize}
\item $S$ is a finite set of symbolic states
\item $\mathcal{X}$ is the continuous state space 
\item $A$ is a finite set of symbolic actions
\item $T$ is a transition model giving the next state $s' = T(s, a)$ of $s$ after taking action $a$ 
\item $\Omega$ is a finite set of observations
\item $O$ is an observation model giving the observation $o = O(s, a)$ after taking action $a$ and reaching state $s$
\item \revise{$C$ is a cost model defined by tuples of cost and constraints functions $(c_a, h_a, g_a)$, which evaluate trajectories implementing an action $a \in A$
\item $G$ is a set of goal conditions defined over the symbolic state space $S$. A belief state is terminal if all its hypotheses with non-zero probability are terminal}
\end{itemize}
\revise{
This is a POMDP with the following distinctions:
\begin{itemize}
\item \textbf{Implicit cost-based objective}: The optimization objective is defined in terms of cost minimization rather than reward maximization. Importantly, the action costs are not specified numerically, but are instead defined implicitly through the cost and constraint functions applying on the trajectory-tree. Evaluating an action's cost thus entails solving a constrained optimization problem.
\item \textbf{Deterministic transitions}:
The action transition model $T$, as well as the observation model $O$, map each state-action pair to a next state and observation, respectively. This corresponds to the assumption that the action and observation processes are deterministic, which is less general than the standard POMDP formulation, where transitions and observations are typically stochastic.
\end{itemize}
The assumption of deterministic symbolic actions parallels the earlier assumption of deterministic continuous-state dynamics introduced in Section~\ref{sec:intro_trajectory_tree_definition} but at the symbolic level. The assumption of deterministic observations limits the number of reachable belief states, and thereby the size of trajectory-trees. It also allows the goal conditions $G$ to be defined over the state space directly, instead of being predicates in belief space. We discuss in Section~\ref{sec:tamp_generalizablization} how the framework can be generalized beyond those assumptions.
}

In the experiments, we use a first-order logic language, similar to PDDL to model the decision process. \revise{As is common in PDDL-based representations, actions are associated with explicit preconditions. For instance, a robot’s gripper must be free before a grasp action can be executed. These preconditions do not appear explicitly in the POMDP formalism introduced above, where the actions are assumed to be available in every state. However, preconditions can still be captured, by interpreting actions whose preconditions are not satisfied as incurring infinite cost.}
 
\subsubsection{Decision Tree}
Given an initial belief $b_0$, the decision process spans an AND/OR decision tree with each node corresonding to a belief state. The tree contains two kinds of nodes:
\begin{itemize}
\item In \emph{action nodes} the agent chooses an action 
\item In \emph{observation nodes} the agent receives an observation
\end{itemize}

At action nodes, the probability distribution of the belief state does not change, but the symbolic state of each hypothesis $m \in \mathcal{H}$ is updated by applying the action $a$, \revise{i.e,
\begin{align*}
s'_m = T(s_m, a).
\end{align*}}
\rrevise{At observation nodes, multiple observations may be possible, giving rise to distinct contingencies. This branching occurs despite the observation model being deterministic and reflects the underlying uncertainty in the belief state. Upon reception of an observation $o$,} \revise{the probability of each hypothesis $m$ is updated according to:
\begin{align*}
b'(m) = \frac{p(o \mid s'_{m}, a) b(m)}{\sum_{m \in H} p(o \mid s'_{m}, a) b(m)},
\end{align*}
where $b'(m)$ denotes the updated probability of hypothesis $m$, and $p(o \mid s'_{m}, a)$ is the probability of receiving observation $o$. Since the observation model $O$ is deterministic, $p(o \mid s'_{m}, a)$ is either $0.0$ or $1.0$.} \rrevise{ Hypotheses incompatible with the observation $o$ are assigned zero likelihood. The entropy of the belief state therefore decreases after each observation, and the belief eventually collapses to a single hypothesis with probability $1.0$ once all initially uncertain aspects have been observed.}\footnote{\rrevise{In small problems, a single observation may suffice to fully resolve the belief state, as in Fig.~\ref{fig:example_policy}. In general, however, multiple observation stages are needed to reveal all initially latent aspects (see Fig.~\ref{fig:planned_policies}).}} 

%
\revise{An equivalent view on the problem is to model it as a Markov Decision Process (MDP) in belief space, where observation branching is not formally linked to observations, but tied to the action stochasticity. While formally equivalent, we favor the description using the POMDP formalism (albeit with deterministic transition), as it highlights more clearly the explicit role of observations and information gathering in the decision process}.

\medskip
\noindent\textbf{Example of decision tree\normalfont{:}}
Consider a simplified version of the example introduced in Fig.~\ref{fig:example_problem_tamp} with only the green and blue blocks. The robot can pick up blocks, place them on the top of another block or on the table. The robot also has a \textit{Look} action to observe a block's color. The Fig.~\ref{fig:decision_tree} shows the decision tree at an early stage of its expansion.
At the node~(4), the color of the picked up blocked is observed which leads to two outcoming edges corresponding to the two possibilites.
In this example, observation branching is sparse. Only the observation nodes following a \textit{Look} action have more than one child.
\begin{figure}[ht]
    \centering
       \includegraphics[width=1.0\linewidth]{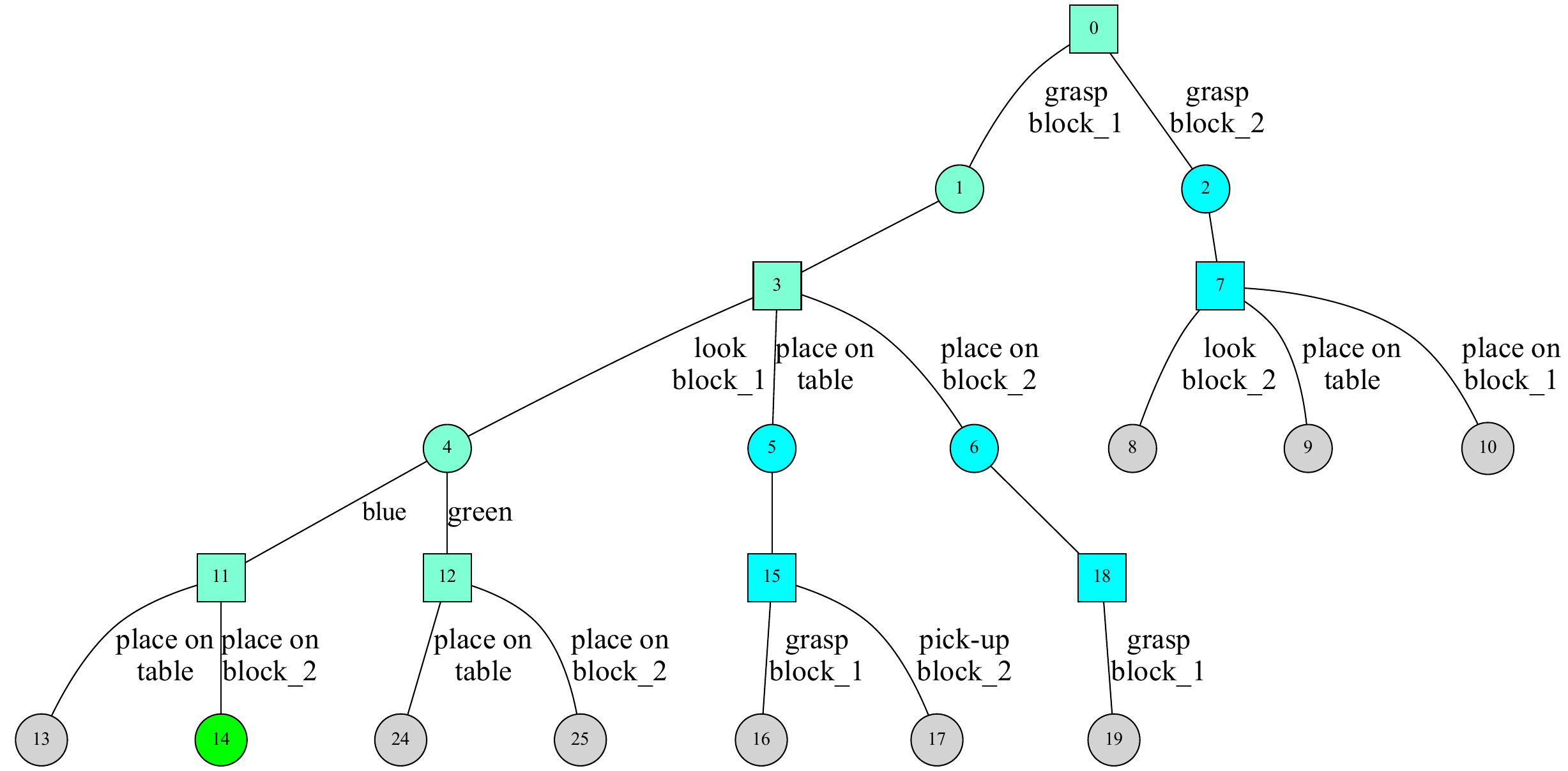}
  \caption{Decision tree for the block stacking problem with 2 blocks: The square nodes are action nodes, circular nodes are observation nodes. At the node (4) the color of the block is observed leading to two contingencies. The node (14) in green fulfills the goal condition. The nodes in gray are not yet expanded.}
  \label{fig:decision_tree} 
\end{figure}

\medskip
\noindent\textbf{Decision tree vs. decision graph\normalfont{:}} \label{sec:tree_vs_graph}
Some nodes in the decision tree may be symbolically equivalent like the root node~(0) and the node~(5) in Fig.~\ref{fig:decision_tree}: the robot picked up a block only to put it back on the table. However, in general, the geometric states are different in these two nodes since the optimal position of the block after placing it back on the table may differ from its initial position. We therefore represent the decision process as a tree, which allows us to attach geometric information to edges (cost and trajectory pieces as described in Section~\ref{sec:dynammic_programming} and \ref{sec:piecewise_optimization}) without loss of generality. This is in contrast to \citep{tamp_0_phiquepal2019combined} where it was represented as a graph, thereby limiting the geometric configurations explored during the search. The tree representation increases the search space for task planning which we mitigate by adopting a Monte-Carlo sampling (see Section~\ref{sec:sample_decision_tree}).

\medskip
\noindent\textbf{Candidate policies\normalfont{:}}
The node~(14) in Fig.~\ref{fig:decision_tree} is a terminal, since the block colors are known, and the blue block is stacked on the green one. Finding a terminal node is however not enough to solve the planning problem, a solution needs to cover all contingences. This implies that the policies will be themselves tree-like.

More formally, a policy $\pi$ is a mapping from a symbolic belief state to an action. Since the belief space dynamics are described by the decision tree, $\pi$ maps decision tree nodes to actions. A policy reaching the goal condition, on the symbolic level, is a sub-set of the decision tree, where each action node has only one outcoming edge (the action has been decided), and which leaves are terminal nodes. We call this a candidate policy. 
\begin{figure}[ht]
    \centering
       \includegraphics[width=1.0\linewidth]{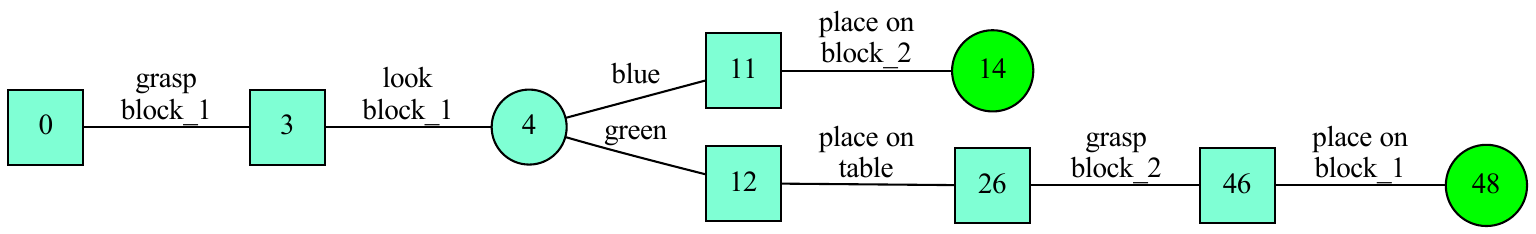}
  \caption{Candicate policy for the stacking problem with 2 blocks: A policy is a subset of the decision tree~\ref{fig:decision_tree}. Observation nodes with only one outcoming edge are omitted for clarity.}
  \label{fig:example_policy} 
\end{figure}
Fig.~\ref{fig:example_policy} shows a candidate policy. \revise{Candidate policies form the symbolic part of a trajectory-tree as defined in Section~\ref{sec:intro_trajectory_trees}. The procedure for extracting candidate policies from the decision tree is detailed in Section~\ref{sec:dynammic_programming}.
}

\subsubsection{Trajectory-Trees}\label{sec:action_cost}
The candidate policies defined above, reach the goal conditions on the symbolic level, but are not yet refined into a full trajectory-tree. \revise{As there are no costs or rewards defined solely at the symbolic level, candidate policies cannot be directly compared. However, at the geometric level, different candidate policies will lead to different trajectory costs, and some may prove infeasible. Here, we define the trajectory-tree optimization problem associated with a candidate policy.}

In typical trajectory optimization, the objective is given as a set of constraints and sum of cost terms along the trajectory. In our setting, we generalize this to a set of constraints and sum of cost terms for each action edge in the tree, weighted by the probability of being in the corresponding belief state.

\revise{Let $a \in A$ be the action taken by the agent in belief state $b$ during the time interval $[t_k, t_{k+1}]$. This action implies cost and constraint functions $c_a$, $g_a$ and $h_a$ on the trajectory.}
%

\revise{Let $x^{\text{known}}(t)$ denote the components of the state whose values are known at time $t$. This includes all observable components $x^{\text{observable}}(t)$, as well as latent components $x^{\text{latent}}(t)$ whose values have been resolved through belief inference. We then define the aggregated vector $\mathbf{x}^{\text{known}}(t)$ as the collection of $x^{\text{known}}(t)$ and its time derivatives up to order $k \in \mathbb{N}$, i.e.,
\begin{align*}
\mathbf{x}^{\text{known}}(t) = \begin{pmatrix}
x^{\text{known}}(t) \\
\dot{x}^{\text{known}}(t) \\
\vdots \\
x^{\text{known}(k)}(t)
\end{pmatrix}.
\end{align*}}

\noindent \revise{We define the cost of action $a$ in belief space $b$ as,
\begin{subequations} \label{eq:attachement_action}
\begin{align}  
 c(a, b) = \int_{t_{k}}^{t_{k+1}} &c_a(\mathbf{x}^{\text{known}}(t)) \ dt \\
 & \notag \\
 \text{if}\ \ & h_a(\mathbf{x}^{\text{known}}(t)) = 0,\label{eq:attachement_dynamic}\\
 & g_a(\mathbf{x}^{\text{known}}(t)) \leq 0,\label{eq:attachement_constraint-term}
\end{align}
\end{subequations}
and $c(a, b) = +\infty$ if any of the constraints are not satisfied.} By defining the costs over $\mathbf{x}^{\text{known}}$ we restrict the robot to act on parts of the system which are either fully observable, or which have been discovered earlier on the trajectory-tree via observations. \revise{The number of derivative $k$ is a global parameter. Actions may be primarily defined over the configurations (i.e., zero-order), as is the case for the observation actions used in the experiments (Section~\ref{sec:observative_actions}). However, all actions typically also involve higher-order cost and constraint terms to ensure smoothness and adherence to physical limits. In the experiments, we use $k=2$, and squared joint accelerations are minimized throughout the entire trajectory-tree, alongside action-specific costs and constraints.}

This definition links the symbolic decision making in belief space to trajectory costs.
A policy $\pi$, which is a tree of actions defines an optimization problem on a trajectory-tree. We use $\psi$ to denote a continuous trajectory-tree.

\revise{The cost and constraints functions are defined by the symbolic action $a$, in contrast to the PO-MPC method, where they are directly linked to the state $s$. This reflects the fact that in PO-MPC, the agent does not take symbolic actions, but rather plans reactively based on the current belief state. In contrast, PO-LGP explicitly incorporates symbolic actions, which serve as the primary drivers of the robot’s motion. The dependency on the state remains, but is implicit via the preconditions necessary to undertake an action $a$ in state $s$.}
\subsubsection{Optimal Trajectory-Tree}
We can now define the problem as finding a symbolic
policy $\pi$ and a continuous trajectory-tree $\psi$ that minimize the expected costs given the initial belief state $b_0$,

\begin{equation} \label{eq:tamp_optimization_objective}
 \Pi^{\star} = \min_{\pi, \psi}\ \sum_{b \in \pi} p(b, b_0) c(\pi(b), \psi(b)).
\end{equation}
Here, the expectation is with respect to the probability $p(b, b_0)$ of visiting a belief node in the decision tree.
Together $(\pi^{\star}, \psi^{\star})$ form the optimal trajectory-tree $\Pi^{\star}$.
\subsection{PO-LPG Planner}
We propose a planner that works in three stages, schematized in Fig.~\ref{fig:algo} and detailed in Algorithm~\ref{alg:tamp_outer_loop}. First, the decision tree is expanded (line~\ref{alg:sample_decision_tree}, and developed in Section~{\ref{sec:sample_decision_tree}). Second, we alternate Dynamic Programming on the decision tree and piecewise trajectory optimization to compute candidate policies $\pi$ along with a set of trajectory pieces $\psi$. This corresponds to the lines~\ref{alg:dynammic_programming} to \ref{alg:stop_criterion} and detailed in Sections~\ref{sec:dynammic_programming} and \ref{sec:piecewise_optimization}. These pieces do not yet form a globally optimal trajectory-tree, but inform the policy optimization about the cost and feasibility associated with actions. In the third stage we fix $\pi^{\star}$ and optimize the full continuous trajectory-tree $\psi^{\star}$ jointly (line \ref{alg:joint_optimization} detailed in Section~\ref{sec:tree_komo}).

\begin{figure}
\begin{tikzpicture}
  \node[block] (a) {Decision Tree Sampling};
  \node[block, right=0.5cm of a] (b) {Dynamic Programming};
  \node[block, right=2cm of b] (c) {Piece-wise Motion Planning};
  \node[block, right=0.5cm of c] (d) {Joint Motion Planning};
  \draw[line] (a)-- (b);
  \draw[line] ([yshift=0.5cm] b.east)-- ([yshift=0.5cm] c.west) node[pos=0.5,above]{candidate} node[pos=0.5,below]{policies};
  \draw[line] ([yshift=-0.5cm] c.west)-- ([yshift=-0.5cm] b.east) node[pos=0.5,above]{informed} node[pos=0.5,below]{policies};
  \draw[line] (c)-- (d);
  \draw [draw=black,dashed] ([xshift=-0.95cm, yshift=0.35cm] b.north) rectangle ([xshift=0.95cm, yshift=-0.35cm,label=$m$] c.south);
\node[text width=2cm] at (3.6,1.3) {Policy Iteration};
\end{tikzpicture}
\caption{TAMP solver: After the decision tree expansion, candidate policies are generated using dynamic programming and are optimized piecewise to inform about the actual action costs. The final trajectory-tree is re-optimized jointly.}
\label{fig:algo}
\end{figure}

\begin{algorithm}[H]
\caption{TAMP outer loop}
\label{alg:tamp_outer_loop}
\begin{algorithmic}[1]
\Function{Plan}{$b_0$} 
\Comment{Parameters: $c_{mcts}$, $c_0$}
	\State $\mathcal{T} \gets $ \Call{SampleDecisionTree($c_{mcts}$)}{} \label{alg:sample_decision_tree}
	\State $\pi \gets $ \Call{DynamicPrograming}{$\mathcal{T}$, $c_0$} \label{alg:dynammic_programming}
	\Repeat
		 \State $\pi_{prev} \gets \pi$ \Comment{Save $\pi$ for comparison line \ref{alg:stop_criterion}}
		 \State $\psi \gets $ \Call{PieceWiseOptimization}{$\pi$} \label{alg:piecewise_opt}
		 \State $\mathcal{T}.$\Call{IntegrateCosts}{$\psi$}
		 \State $\pi \gets $ \Call{DynamicPrograming}{$\mathcal{T}$}
	\Until{$\pi = \pi_{prev}$} \label{alg:stop_criterion}
	\State $\pi^{\star} \gets \pi$
	\State $\psi^{\star} \gets $ \Call{JointOptimization}{$\pi^{\star}$, $\psi$} \label{alg:joint_optimization}
\EndFunction
\end{algorithmic}
\end{algorithm}
The optimization of trajectory pieces in line \ref{alg:piecewise_opt} raises substantial computational costs, especially since it is performed in a loop for each candidate policy. Our solver involves several mechanisms to decide whether it is worth computing these trajectory pieces. These mechanisms include: (1) controlling the size of the decision tree via a parameter $c_{mcts}$, and (2) a mechanism akin to Rmax~\citep{rmax_brafman2002r}, controlled by a parameter $c_0$, to decide if an action of the decision tree should be “explored”.

\subsubsection{Sampling of the Decision Tree} \label{sec:sample_decision_tree}
\deleted{The decision tree is expanded from the start belief state using a Monte-Carlo based  Partially Observable Upper Confidence Trees algorithm (PO-UCT) described in \mbox{\citep{pomcp_silver2010monte}}.}
\added{
The decision tree is expanded from the start belief state using the Monte-Carlo-based Partially Observable Upper Confidence Trees (PO-UCT) algorithm described in \citep{pomcp_silver2010monte}, which forms the tree-search procedure of the POMCP algorithm.}

A key aspect of PO-UCT is that a state is sampled from the belief state at the begining of each iteration. In our case with deterministic transitions and observation models, this implies that traversing the decision tree  becomes fully deterministic without branching, as in the fully observable case. At action nodes, the action minimizing the Upper Confidence Bound $-\mathcal{C}(b, a) + c_{mcts} \sqrt{\log(\frac{N(b)}{N(b, a)})}$ is selected, where $c_{mcts}$ is an exploration parameter as mentioned in Algorithm~\ref{alg:tamp_outer_loop}. $\mathcal{C}(b, a)$ is the current estimate of the expected costs to go. $N(b)$ and $N(b, a)$ are the visit counters. This is the standard upper-confidence-bound used here for cost minimization instead of reward maximization. Rollouts are performed using random actions down to a given maximal depth (typically 50 actions). At this step, no motions have been planned such that the true trajectory costs, and the feasibility of actions are not known. We assume that all actions have equal costs, which directs the expansion towards solutions minimizing the number of actions. Iterations are stopped once the decision tree contains at least one solution policy, and once a minimum number of iterations is reached.

\deleted{Unlike in a typical usage of PO-UCT where the end result is the solution policy, we are interested in the expanded decision tree, which we will use as search space in the next stages}\added{
Unlike in POMCP, where PO-UCT is used to select the next action online, we use PO-UCT solely to incrementally construct a decision tree, which serves as the search space for the subsequent generation of candidate policies.} The goal is to expand the decision tree enough such that it contains enough candidate policies for the policy improvement phase, without resorting to a systematic uninformed expansion which would result in an unnecessarily large decision tree.

\subsubsection{Dynamic Programming on the Decision Tree} \label{sec:dynammic_programming}
We assume that at any point in time we have cost estimates
$c(a, b)$ for each action $a$ in belief state $b$ in the decision tree. These costs are first initialized with a low, optimistic value $c_0$ as mentioned in Algorithm~\ref{alg:tamp_outer_loop}. Only when an candidate policy makes it likely to
actually visit the edge $(b, a)$ do we compute more precise estimates
using piecewise trajectory optimization, as described below.
Given the current cost estimates, an exact expectation of the costs to goal is computed. This is performed by applying dynamic programming as described in Algorithm~\ref{alg:expected_costs}.

\begin{algorithm}[H]
\caption{Computation of the expected costs}
\label{alg:expected_costs}
\begin{algorithmic}[1]
\Function{ComputeExpectedCost}{$\mathcal{T}$}
	\State \textcolor{cyan}{\footnotesize/* Initialization */}
	\State $Q \gets $ \Call{PriorityQueue()}{}  \label{alg:init_start}
	\For{$n \in$ $\mathcal{T}.nodes$}
		\If{\Call{IsFinal}{$n$}}
			\State $\mathcal{C}[n] \gets 0.0$
			\State $Q$.\Call{Push}{$n, 0.0$}
		\Else
			\State $\mathcal{C}[n] \gets +\infty$
		\EndIf
	\EndFor \label{alg:init_end}
	\State \textcolor{cyan}{\footnotesize/* Apply Bellman updates */}
	\While{$\neg$ \Call{IsEmpty}{$Q$}}
		\State $w \gets$ \Call{Pop}{$Q$} \label{alg:candidate_cost_start}
		\State $v \gets$ \Call{ObservationParent}{$w$}
		\State $u \gets$ \Call{ActionParent}{$v$}
		\State $\mathcal{W} \gets$ \Call{ActionChildren}{$v$}
		\State $\mathcal{C}_{new} \gets$ \Call{\scriptsize{c}}{$u, v$}$ + \sum_{\nu \in \mathcal{W}}{ p(\nu | u) \times \mathcal{C}[w]} $ \label{alg:candidate_cost_end}
		\If{$\mathcal{C}_{new} < \mathcal{C}[u]$} \label{alg:select_start}
			\State $\mathcal{C}[u] \gets \mathcal{C}_{new}$
			\State $Q$.\Call{Push}{$u, \mathcal{C}_{new}$} \label{alg:select_end}
		\EndIf
	\EndWhile
\EndFunction
\end{algorithmic}
\end{algorithm}

First, a queue of action nodes of the decision tree is built, where nodes are ranked by increasing costs to a terminal node. Terminal nodes are placed in the queue. The expected costs to goal are initialized to infinity for non-terminal nodes (lines \ref{alg:init_start} to \ref{alg:init_end}). 

Next, a node $w$ is taken from the queue and a new expected cost is computed for its action parent $u$ (lines \ref{alg:candidate_cost_start}  to \ref{alg:candidate_cost_end}). If the new expected cost improves the current estimate, it is saved, and $u$ is placed on the queue (lines \ref{alg:select_start} to \ref{alg:select_end}). This procedure is repeated until the queue is empty.

The computation of the new costs (line~\ref{alg:candidate_cost_end}), together with the comparison to the current estimate (line~ \ref{alg:select_start}) corresponds to the application of the following Bellman update:
\begin{equation} \label{eq:bellman_update}
\mathcal{C}_{i+1}(b) \leftarrow \min_{a} \left[c(a, b)+\sum_{o \in O(b, a)}{O(o|b, a) \mathcal{C}_{i}(T(b, a, o))} \right],
\end{equation}
where $O(b, a)$ is the set of possible observations received after executing the action $a$. $O(o|b, a)$ is the probability of receiving the observation $o$ after executing $a$ (corresponding to $p(v|u)$ line \ref{alg:candidate_cost_end}). $T(b, a, o)$ 
indicates the successor of $b$ after taking action $a$ and receiving observation $o$. These nodes correspond to $\mathcal{W}$ in line~\ref{alg:candidate_cost_end}.

This algorithm takes advantage of the tree structure to update the expected costs from the leaves to the root, minimizing the number of Bellman updates compared to a standard Value Iteration algorithm as used in \citep{tamp_0_phiquepal2019combined}. 
\revise{Nodes that do not lead to any leaf---thereby constituting dead ends---retain their initial infinite cost throughout the procedure, as no Bellman update assigns them a finite value. The same applies to nodes that lead to a leaf only through an action edge with infinite cost, are assigned an infinite cost by the Bellman update. This case occurs when the piecewise trajectory optimization reported infeasibility of an action, as detailed in the next section. The Bellman updates propagate the infeasibility information throughout the relevant portions of the decision tree.}

A candidate policy $\pi$ is straightforwardly extracted by traversing the decision tree from the root and choosing at each action node the action with the lowest expected costs. 

\revise{An infinite cost at the root node at the end of the procedure indicates that the decision tree does not contain a valid solution. Either the decision tree has not been expanded enough or the planning problem is infeasible. }
\subsubsection{Piecewise Trajectory Optimization}\label{sec:piecewise_optimization}
A given policy $\pi¸$ transitions only a small subset of action
edges of the full decision tree. For this set of action edges we compute $c(a, b)$ (if it was not already computed in previous iterations). For the sake of computational efficiency, \rrevise{$c(a, b)$ is estimated} in two stages:

We first optimize key-frames only (robot pose at each
node). This step is much quicker than optimizing a full trajectory piece. If an action is impossible at this stage, the optimization is not pursued further. Infinite costs are returned, thereby making this node and sub-tree unreachable by the candidate policies. The pose feasibility check is optimistic, it might succeed even if the path itself is infeasible (e.g. if there is no possible trajectory without collision between two key-frames).

If pose optimization reports feasibility, we then optimize the trajectory piece, minimizing (\ref{eq:attachement_action}), to get the cost estimate $c(a, b)$. We use a time discretization of 20 frames per action. Optimization is performed using the \textit{K-Order-Motion-Optimization} method (KOMO) adopted from 
\citep{tamp_lgp_2_toussaint2017multi, tamp_lgp_1_toussaint2015logic}. We save the computed trajectory piece and its cost $c(a, b)$ which will be used in the next round of Dynamic Programming. The final configurations of the trajectory piece are used as initial state for the following action edges. \revise{If piecewise trajectory failed to converge to a feasible solution, the cost $c(a, b)$ is set to infinity, as defined in (\ref{eq:attachement_action}). In the case of observation actions, as described in Section~\ref{sec:observative_actions}, infeasibility arises when the robot is unable to position its sensor within the visibility cone of the target object.}

There may be a strong overlap between candidate policies (same edge in many candidate policies). This is especially the case in the last iterations
of Policy improvement, saving computational costs as computing $c(a, b)$ is performed only once. Intuitively, as we alternate between Dynamic Programming and Piecewise trajectory planning, the decision tree is informed with precise estimates of the costs of actions until convergence to a final policy $\pi^{\star}$.

\subsubsection{Choice of $c_0$ and Exploration vs. Explotation tradeoff}

The use of optimistic initializations $c_0$ of $c(a, b)$ is analogous to the R-Max algorithm~\citep{rmax_brafman2002r} and allows us to control exploration vs. exploitation within the policy optimization. An optimistic initial $c_0$ (e.g., zero costs) encourages exploration, since unexplored actions will appear advantageous compared to actions associated with costs resulting from the piecewise trajectory optimization.

On the other hand, when chosing $c_0$ less optimistically, we lose the guarantees that come with admissible heuristics, but may converge faster to reasonable policies. Our experiments will investigate the influence of the cost initialization on the number of iterations.

Convergence to the best policy contained in the decision tree is guaranteed if $c_0$ is chosen as a lower bound of the piecewise trajectory costs.

\subsubsection{Joint Trajectory-Tree Optimization (T-KOMO)} \label{sec:tree_komo}
In the third stage of the solver, we fix the symbolic policy $\pi^{\star}$ found, as described above, and focus on the joint optimization of the trajectory-tree $\psi$. So far we have only optimized pieces for each action independently. Concatenating these  pieces cannot capture long-term dependencies in the trajectories, e.g. when final actions influence earlier parts of the trajectory. The joint optimization of the trajectory-tree leads to better and smoother motions as the experiments show. It is performed only once for the best symbolic policy $\pi^{\star}$.

\medskip
\noindent\textbf{Joint optimization objective\normalfont{:}}
Let $\psi$ be the trajectory-tree composed of a total of $N$ configurations. $\psi$ implements the policy $\pi$. We index the trajectory elements in a depth-first way. 
We note $p_i$ the probability to reach the $i^{\text{th}}$ configuration. By definition, $p_0 = 1.0$, and this probability does not change between two consecutive configurations on a sequential part of the trajectory. The probability evolves at observation branchings based on the branching probability derived from the observation model $\mathcal{O}$.
Fig.~\ref{fig:t-komo_tree} illustrates the indexing of $\psi$ and how cost and constraints functions apply on consecutive configurations. We note $\alpha(j, n)$ the $n^{\text{th}}$ ancestor of the $j^{\text{th}}$ configuration. 
To keep notations compact, we note $\psi_{j-k:j}$ the $k$-th-tuple composed of the configurations leading to $j$ for a given motion order $k$, i.e. $\psi_{j-k:j} = (\psi_{\alpha(i, k-1)}, ..., \psi_{\alpha(i, 1)}, \psi_i)$. The prefix $\psi_{k-1:0}$ are the configurations before the trajectory-tree; assuming them to be known simplifies the notation, without the need to introduce a special notation for the first $k$ terms. 
\begin{figure}[ht]
    \centering {%
    \includegraphics[width=0.9\linewidth]{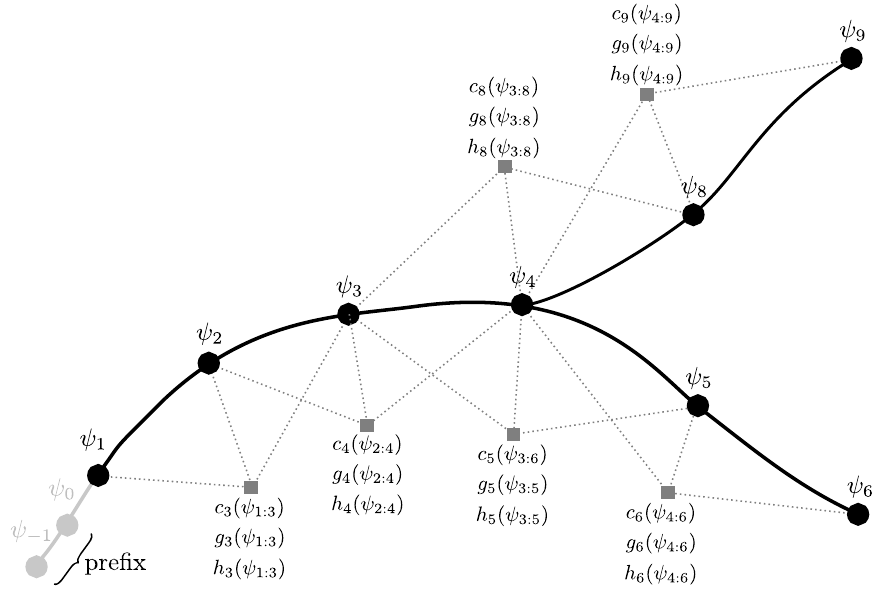}}
  \caption{Illustration of the structure implied by the T-KOMO formulation. $c$, $g$ and $h$ represent $2^{\text{nd}}$ order functions applying on 3 consecutive configurations.}
  \label{fig:t-komo_tree} 
\end{figure}

The $k$-order trajectory-tree optimization problem is formulated as,
\begin{subequations}
\begin{alignat}{2}  
  \min_{\psi} \quad & \sum_{i=1}^{N} {p_i { c_i(\psi_{i-k:i}) } }, \label{eq:ktreekomo}\\
  \text{s.t.} \quad & h_i(\psi_{i-k:i}) = 0\ &\forall i,\\
                    & g_i(\psi_{i-k:i}) \leq 0\ &\forall i,
\end{alignat}
\end{subequations}
where the functions $c$, $g$ and $h$ are the functions defined in~\eqref{eq:attachement_action}. These functions are indexed by $i$ and not by the symbolic action $a$ since at this stage motions are discretized (typically 20 configurations per actions). This optimization objective corresponds to the initial problem formulation~\eqref{eq:tamp_optimization_objective} for a fixed candidate policy $\pi$ and over a discretized representation of the trajectory-tree. The cost terms $c_i(\psi_{i-k:i})$ are weighted by the probability to reach a given configuration of the tree, which leads to trajetory-trees more optimized towards likely configuration than unlikely ones.
The motion order $k$ is the number of consecutive configurations needed by the cost and constraints functions, e.g. $k=2$ for costs defined on the acceleration.

This optimization problem formulation generalizes the K-order Motion Optimization (KOMO) to tree-like trajectories.

\medskip
\noindent\textbf{Optimizing the trajectory-tree\normalfont{:}}
The optimization objective~\eqref{eq:ktreekomo} is a constrained optimization problem that we optimize with the D-AuLa solver introduced in Section~\ref{sec:d-aula-solver} but without decomposition into subproblems (i.e. with a number of subproblems equal to one). \revise{In that case, the D-AuLa algorithm effectively reduces to the standard Augmented Lagrangian method (see Appendix~\ref{sec:background_aula})}. We do not apply decomposition by default since, unlike in the MPC use case, there is no decomposition generally applicable to TAMP problems. Moreover, runtime is less critical in the TAMP use case since the joint optimization is performed only on one policy. A comparison of the optimization time with and without problem decomposition is provided in Table~\ref{table:distrib_vs_joint}.

As mentioned in the solver section, the optimization procedure consists in performing several Newton minimizations of the Augmented Lagrangian using a Gauss-Newton approximation of the Hessian. A computationally intensive part is the Hessian Cholesky decomposition. In the sequential trajectory optimization, the KOMO problem formulation leads to a banded-symmetric Hessian with bandwidth $(2k+1)n$ where $n$ is the number of degress of freedom. This results in a complexity which is only linear in the number of timsteps~$N$ for the Cholesky decomposition. We refer the reader to \citep{komo_17-toussaint-Newton} for more details. When optimizing a trajectory-tree, the Hessian is not banded symmetric anymore, as shown in Fig.~\ref{fig:hessian_matrices}.
\begin{figure}[ht]
    \centering
       \subfloat[Trajectory-tree\label{h_traj_tree}]{%
       \includegraphics[width=0.47\linewidth]{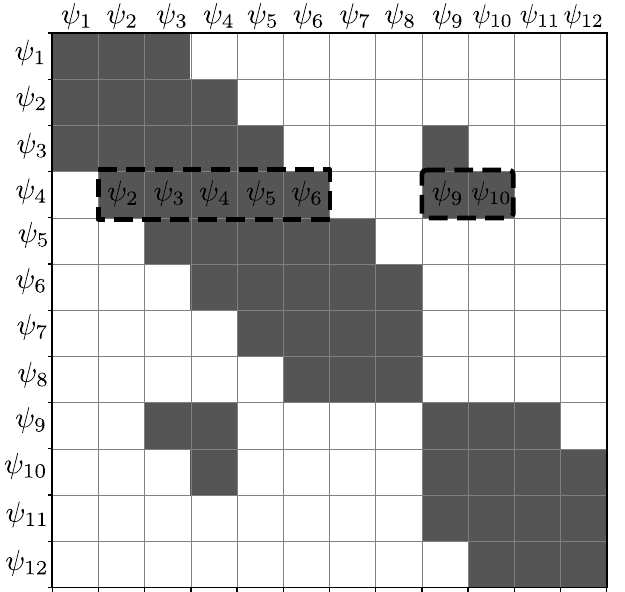}}
           \hfill
       \subfloat[Sequential trajectory\label{h_linear_traj}]{%
       \includegraphics[width=0.47\linewidth]{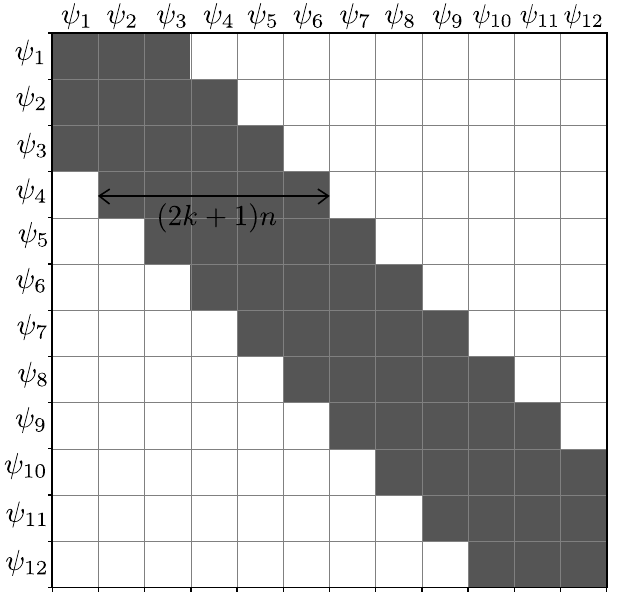}}
  \caption{Comparison of Hessian matrices: When optimizing a trajectory-tree, the Hessian~(a) is not banded-symmetric, unlike in the sequential trajectory case~(b). This reflects the branching illustrated in Fig.~\ref{fig:t-komo_tree}. The trajectory element $\psi_4$ is not only connected to $\psi_{2:6}$ but also to $\psi_8$ and $\psi_9$.}
  \label{fig:hessian_matrices} 
\end{figure}

However, the Hessian is still as sparse as in the sequential trajectory case. We resort to sparse matrix arithmetic for the Hessian decomposition (instead of banded-symmetric) which leads to similar computation time in practice. In other words, optimizing a trajectory-tree of $N$ elements is as complex as optimizing a sequential trajectory with the same number of elements. \revise{The linear complexity with respect to the number of trajectory elements is verified empirically (see Fig.~\ref{fig:scalability:joint}).}

To reduce computation time, the optimization is warm-started with the results of the piecewise optimization.

\subsubsection{Observation Actions} \label{sec:observative_actions}
In the examples we consider in the experiments, not all actions will provide observations. We use a dedicated \textit{Look} action instead. Fig.\ref{fig:look_action_grounding} shows the grounding we use for this action. The cost and constraints functions determine where the sensor should be placed such that, at execution time, knowledge is gained. 
\begin{figure}[htbp]
    \centering
    \begin{minipage}{0.45\linewidth}
        \centering
        \includegraphics[width=\linewidth]{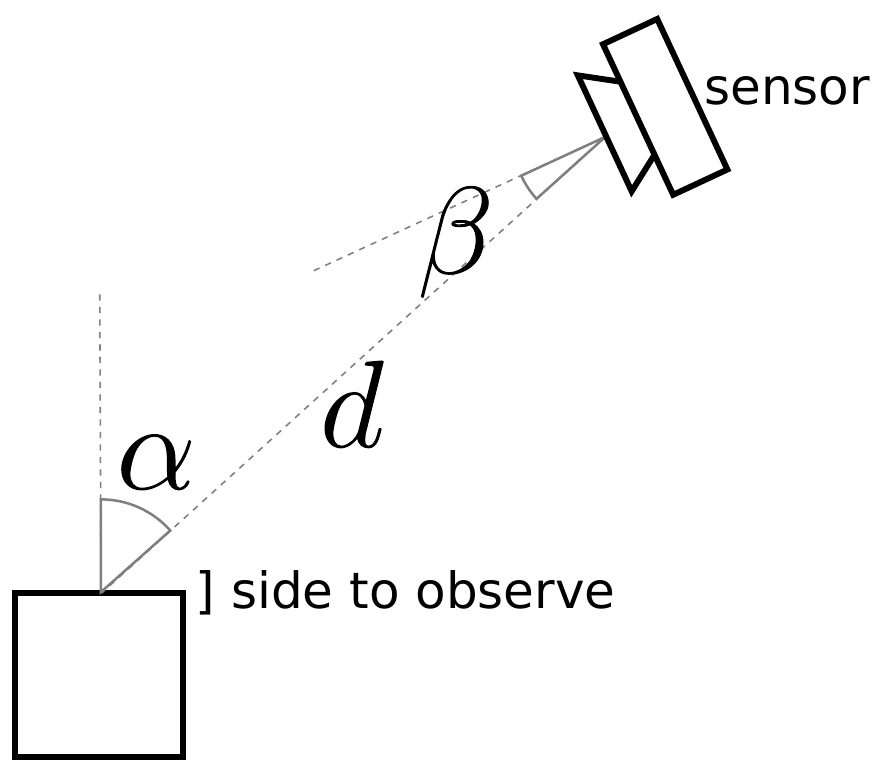}
        \label{fig:example-figure}
    \end{minipage}%
    \hfill 
    \begin{minipage}{0.54\linewidth}
        \centering
\begin{subequations}
\begin{align}  
 c_{look}(x)\; =\; &(d(x) - d_{desired})^2 \notag \\
 \text{s.t.} \; \lvert\alpha(x)\rvert \; \leq \; &\alpha_{max}, \label{eq:incidence_angle}\\
 \beta(x)\; = \; & 0 \label{eq:fov_centering}.
\end{align}
\end{subequations}
    \end{minipage}
\caption{Example of \textit{Look} action: The robot is incentivized to place its sensor at a distance $d_{desired}$ from the object to observe. The angle w.r.t. the surface should remain smaller than $\alpha_{max}$~\eqref{eq:incidence_angle} and the side center shall be centered in the sensor's field of view~\eqref{eq:fov_centering}.\label{fig:look_action_grounding}}
\end{figure}
This grounding is formulated based on the relative pose between the sensor and the object to observe. It does not determine, by itself, if the sensor or the object should be moved during the action. The possible and most advantageous joints to action for fulfilling the action depends on the scene kinematic, and will result from the optimization. In the examples using the Baxter, this leads the robot to move both its head and arm, as shown in Fig.~\ref{fig:example_problem_tamp:look}.

\rrevise{Under this formulation, trajectories that satisfy the geometric constraints associated with observation actions are assumed to lead to an observation according to the observation model $O$.}
\rrevise{In this example, this implicitly assumes that sufficient conditions for successful object detection are known and can be encoded through geometric constraints. In practice, additional factors (e.g., lighting conditions) may affect perception performance, potentially resulting in non-detections even when the sensor viewpoint satisfies the specified constraints. Such events fall outside the planning assumptions and would require replanning, analogously to failures of other symbolic actions such as grasping.}

\subsubsection{Implications of Policy Selection Based on Piecewise Optimization}
\label{sec:implications}

\rrevise{The planning procedure described in the previous sections determines the final symbolic policy based on piecewise trajectory optimization, where actions are optimized independently. This constitutes a trade-off between computational efficiency and completeness or optimality. It is efficient because an action edge of the decision tree shared among multiple candidate policies needs to be optimized only once. This also enables the use of Dynamic Programming for generating candidate policies.}

\rrevise{However, this decomposition implies that trajectory pieces of early actions are fixed before subsequent planning, which may create unfavorable conditions for later actions, potentially resulting in infeasibility or high costs. In such cases, the propagation of infinite or high costs during Dynamic Programming may rule out otherwise viable parts of the decision tree, potentially resulting in a suboptimal final policy or, in the worst case, no feasible policy being found.}

\rrevise{This limitation is partially mitigated by the final joint re-optimization of the selected policy, although this re-optimization does not alter the choice of symbolic policy itself.}

\rrevise{In domains where strong causal dependencies between geometric decisions are critical, several directions may be considered:}
\begin{itemize}
\item \rrevise{Introduce symbolic action variants (e.g., grasp-from-top vs.\ grasp-from-side), thereby exposing geometric alternatives at the logic level. This is already supported within the current framework but only allows predetermined alternatives.}
\item \rrevise{Perform the search directly over complete policies, as in classical LGP in the fully observable case~\citep{tamp_lgp_1_toussaint2015logic}, thereby relying more heavily on joint trajectory-tree optimization.}
\item \rrevise{Introduce mechanisms to reconsider piecewise trajectories upon failure of subsequent actions. This requires varying the optimization problem of an action \eqref{eq:attachement_action}, for example by introducing additional parameters that may be sampled (e.g. goal pose for grasping). Related ideas of interleaving refinement and policy recomputation have been explored in~\citep{shah2020anytime}.}
\end{itemize}
%
%
%
%
\subsection{Experiments} \label{sec:tamp_experiments}
The solver is implemented in C++. The source code and a supplementary video are available for reference.\footnotemark \footnotetext{\url{https://github.com/cambyse/trajectory_tree_tamp}}


\subsubsection{Planning Problems} \label{sec:planning_problems}
We consider the Baxter and Franka robots with the task to stack blocks in a given color order (blue, green and red on top). The blocks only have one side colored. The robot knows where blocks are, but it cannot see the colored side from the initial position, and therefore has to explore to identify the blocks and build the stack.

We plan using using the Baxter's right arm (7 DOF)  and assume a sensor is placed on the Baxter's head which is itself articulated leading to a total of 8 DOF. For Franka, we assume the sensor is mounted on the gripper resulting in 7~DOF.

\noindent There are 3 actions:
\begin{itemize}
\item \textbf{Look} at a block: the robot must align its sensor with the colored side of the block. This is modeled by the cost and constraints functions described in Section~\ref{sec:observative_actions} with a maximum observation angle $\alpha_{max}=\ang{45}$, and a desired observation distance $d_{desired} = \SI{20.0}{\centi\meter}$. \rrevise{These are applied during the final $\SI{200}{\milli\second}$ of the action, ensuring that the colored side is visible at the end of the sensor placement trajectory.} In the case of Franka, the sensor is highly mobile, allowing it to move to various observation points. For Baxter, however, the sensor's movement is more restricted, often requiring the robot to move both its head and arm simultaneously (see Fig.~\ref{fig:example_problem_tamp:look}). An observation is received after this action indicating the block's color.
\item \textbf{Grasp} a block: only the right Baxter's arm can grasp.
\item \textbf{Place} a block at a location: the block is placed on the table, or onto another block.
\end{itemize}
In addition to the cost and constraints specific to each action, the squared joint acceleration is minimized.

The variations of the planning problems as well as the experiments specific purposes are summarized in Table~\ref{table:problems}.
In the subproblems using the Baxter, we assume that the colored side can be only the side opposite the robot, such that the robot knows already which side to observe, thereby largely reducing the dimensionality of the belief state. 
In the subproblems \deleted{with Franka-A and Franka-B}\added{using Franka}, we do not make this assumption and the belief state size quickly becomes large (e.g. Franka-B).

The limits of tractability are quickly reached in such a combinatorial domain, \deleted{as we see with the problem Franka-B }\added{as illustrated by the Franka-B problem, which was specifically designed to expose this scalability limitation}. \deleted{To scale further }\added{As one practical strategy to improve scalability}, we introduce the problem Franka-C$\times$A', where the problem is decomposed by optimizing: (1) an overarching policy (Franka-C) which rearranges blocks without implementing the observations of the blocks's side, and (2) a policy which identifies a block (Franka-A'), which used as a explorative macro-action. Franka-A' is similar to Franka-A but is optimized with a fixed goal configuration, \rrevise{enabling the handover to Franka-C once the block's color has been identified}. Composed together, this gives solution policies for the combined problem (Franka-C$\times$A'). The composition breaks down the complexity of the combined problem, which would be otherwise intractable with a belief state size of 1296.

The problem Baxter-D is to evaluate robustness against motion planning failures. In Baxter-B and C, the \textit{Look} action has a precondition: the robot should have a block in hand before looking at it. This precondition is removed in the variation D. This causes the \textit{Look} action to be symbolically possible more often. However, if the robot does not hold the block, no robot motion allows sensor alignment with the colored side causing the motion planning to fail.

\begin{center}
\footnotesize
\setlength\tabcolsep{3pt} 
\begin{tabular}[ht]{|c|c|c|c|}
\hline
\thead{Problem} & \thead{Belief state\\ size} & \makecell[l]{Description} & \makecell[l]{Purpose} \\
\hline
\hline
Baxter-A                     & 1 & \makecell[l]{3/3 blocks known} & \makecell[l]{Fully observable \\ baseline} \\
\hline
Baster-B                     & 2 & \makecell[l]{1/3 blocks known}  &\makecell[l]{Short policies} \\
\hline
Baxter-C                     & 6 & \makecell[l]{0/3 blocks known}  &\makecell[l]{Larger policies}\\
\hline
Baxter-D                     & 6 & \makecell[l]{0/3 blocks known \\Relaxed logic}  &\makecell[l]{Influence of \\ motion planning \\ failures}\\ 
\hline
\hline
Franka-A                     & 6 & \makecell[l]{1 block\\Colored side unknown}  &\makecell[l]{\rrevise{Policy with}\\\rrevise{cascading structure}}\\
\hline
Franka-B                     & 72 & \makecell[l]{2 blocks\\Colored side unknown}  &\makecell[l]{Show scalability\\issue with large\\ combinatorial space}\\
\hline
\hline
Franka-C                     & \rrevise{6} & \makecell[l]{3 blocks\\Defered observations\\Colored side unknown}  &\makecell[l]{Overarching policy\\for Franka-C$\times$A'}\\
\hline
\rrevise{Franka-A'}                     & 6 & \makecell[l]{Similar as Franka-A\\Fixed goal configuration}  &\makecell[l]{Macro-action\\for Franka-C$\times$A'}\\
\hline
Franka-C$\times$A'                     & 1296 & \makecell[l]{3 blocks\\Colored side unknown} &\makecell[l]{Scalability through \\composed policy}\\
\hline
\end{tabular}
\captionof{table}{\label{table:problems}Overview of the planning problem variations} 
\end{center}
\subsubsection{Planning Results}
\noindent\textbf{Planned policies\normalfont{:}}
Fig.~\ref{fig:planned_policies} shows example of planned policies. The planning problems with the Baxter result in policies with a sparse branching with sequences of several actions between observations. This is visible in Fig.~\ref{fig:policy-baxter-b} for Baxter-B.
This sparse branching reflects the assumption that the colored side is opposite the robot, which limits the number of contingencies. In contrast, the policies for the Franka problems have denser branching, as Fig~\ref{fig:policy-franka-a} shows. The combinatorics quickly becomes large, the policies for Franka-B contains 72 branches, with 305 actions and 43 branching points. For space considerations, the policies for Baxter-C and Franka-C$\times$A' are given in Appendix~\ref{sec:appendix_tamp_policies}, and the corresponding trajectory-tree executions are shown in the accompanying video provided as a multi-media extension.

\begin{figure}[ht]
    \centering
    \subfloat[Sequential policy for problem Baxter-A (full observability)]{
    \includegraphics[width=\linewidth]{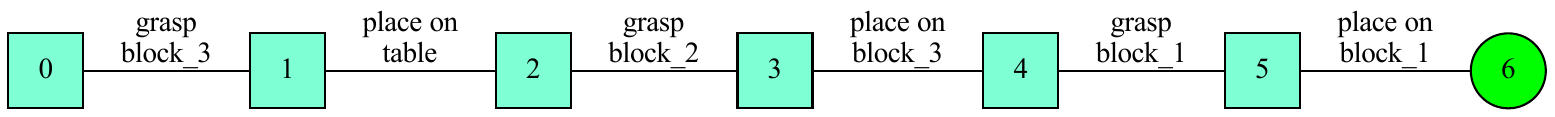}
    }
    \\
    \subfloat[Policy with one observation branching for Baxter-B]{
    \includegraphics[width=\linewidth]{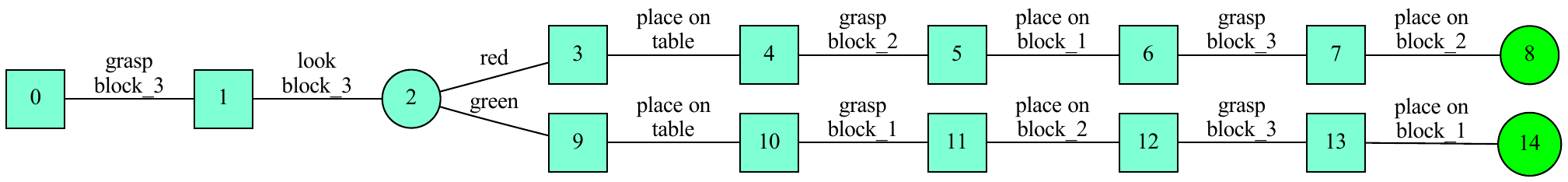}\label{fig:policy-baxter-b}
    }
    \\
    \subfloat[Policy with a cascading structure for Franka-A]{
    \includegraphics[width=\linewidth]{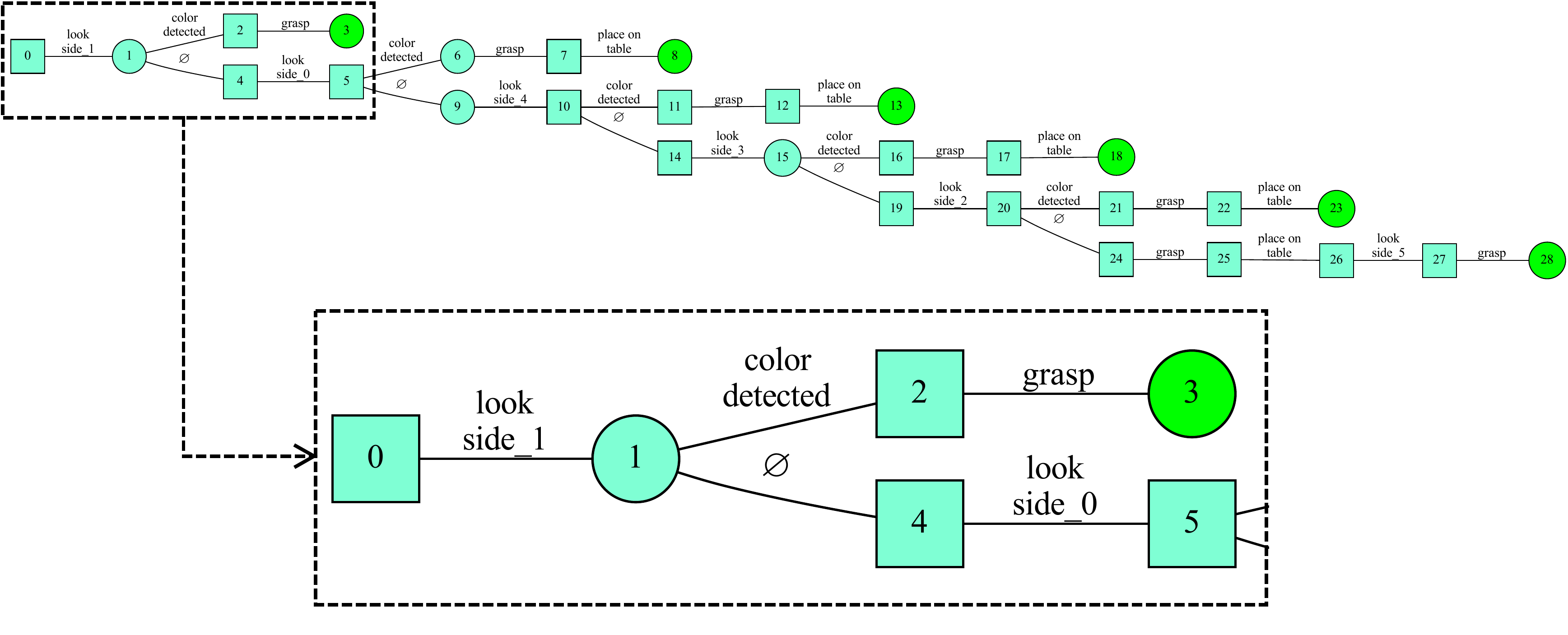}\label{fig:policy-franka-a}
    }
    \caption{Policies obtained for the problems Baxter-A, Baxter-B and Franka-A.\label{fig:planned_policies}}
\end{figure}

Fig.~\ref{fig:example_config_franka} and Fig.~\ref{fig:example_problem_tamp} show robot configurations at different stages of the policy execution for the Baxter and Franka problems respectively. Fig.~\ref{fig:3_successful_look} shows a configuration where the optimized observation pose is close to the boundary of the inequality constraints, the sensor's line of sight has an orientation of approximately \ang{45} with respect to the observed side. Observing the side with a lower relative angle would require the robot to further lower the gripper position which is not advantageous in terms of trajectory-cost.

\begin{figure}[ht]
    \centering
       \subfloat[Start of Franka-C$\times$A'\label{fig:0_start}]{%
       \includegraphics[width=0.3\linewidth]{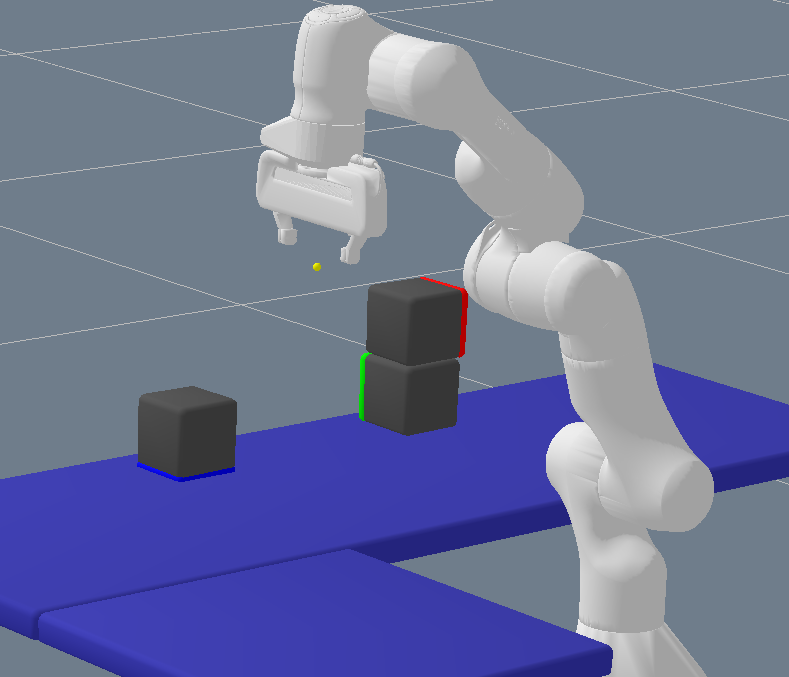}}
           \hfill
       \subfloat[Start pose for explorative policy (Franka-A')\label{fig:1_start_franka_a}]{%
       \includegraphics[width=0.3\linewidth]{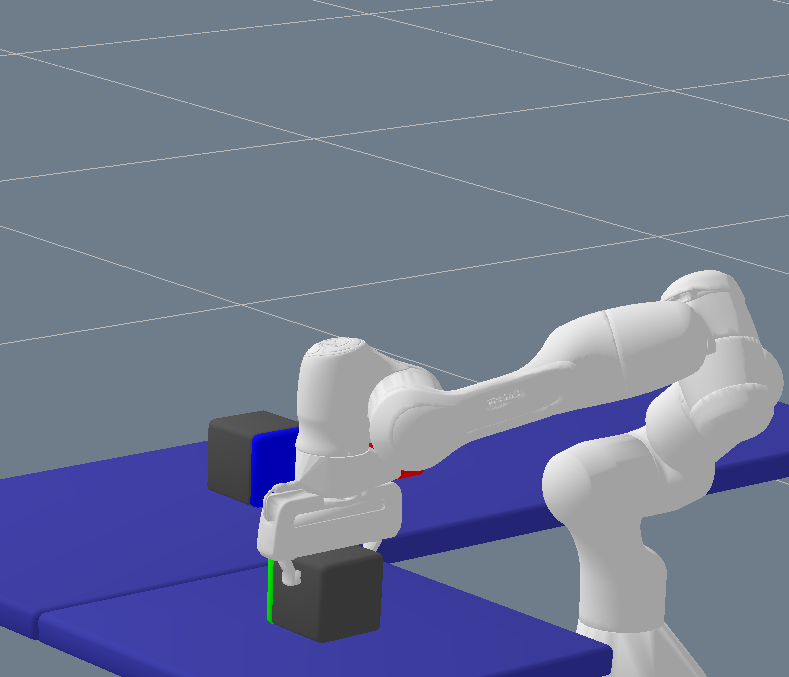}}
           \hfill
       \subfloat[\textit{Look} action detecting no color\label{fig:2_unsuccessful_look}]{%
       \includegraphics[width=0.3\linewidth]{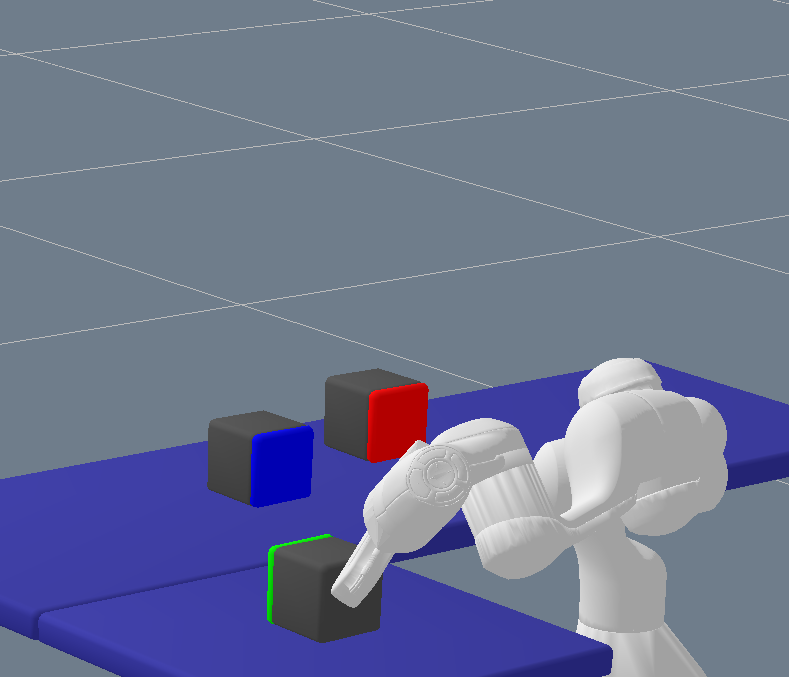}}
       \\
              \subfloat[\textit{Look} action identifying the block\label{fig:3_successful_look}]{%
       \includegraphics[width=0.3\linewidth]{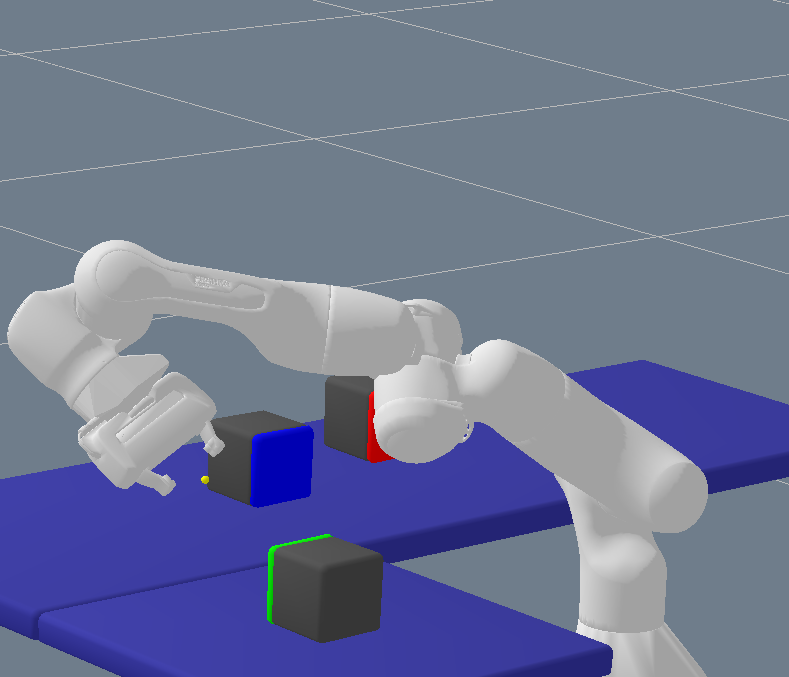}}
           \hfill
       \subfloat[End of Franka-A, block is identified and re-arranged\label{fig:4_end_explo}]{%
       \includegraphics[width=0.3\linewidth]{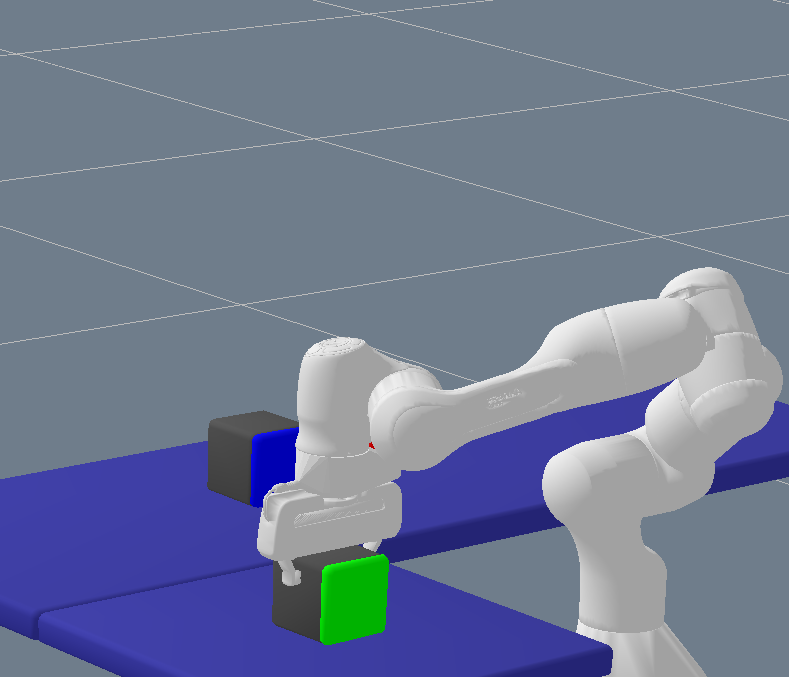}}
           \hfill
       \subfloat[Goal state of Franka A$\times$C\label{fig:5_goal_fanka_a_c}]{%
       \includegraphics[width=0.3\linewidth]{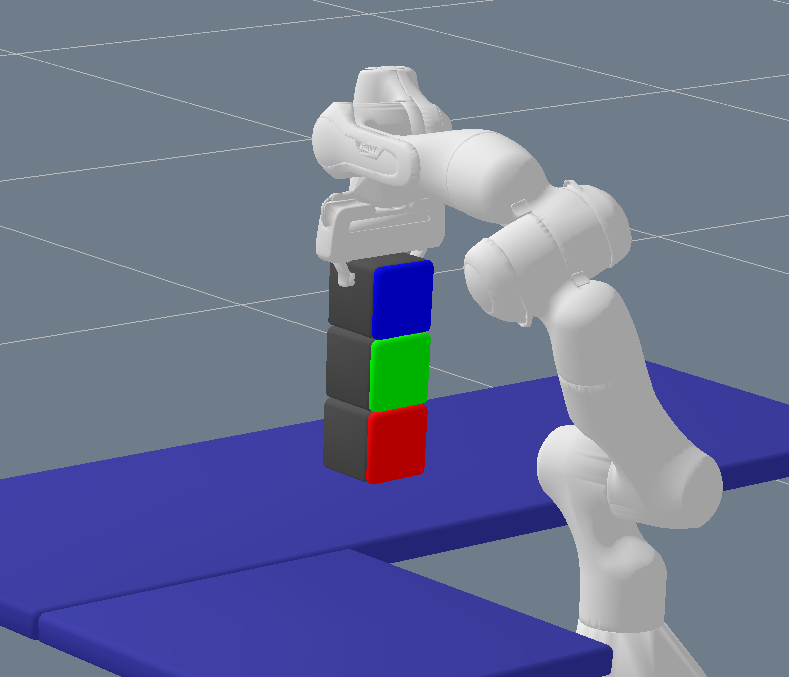}}
  \caption{Examples of configurations for Franka C$\times$A': Blocks colors are unknown at the start \ref{fig:0_start}. Blocks are brought to a fixed position \ref{fig:1_start_franka_a}. From there,
the explorative policy Franka-A' observes the sides \ref{fig:2_unsuccessful_look}, \ref{fig:3_successful_look} and rearranges the block \ref{fig:4_end_explo}. The goal state is \ref{fig:5_goal_fanka_a_c}.} 
  \label{fig:example_config_franka} 
\end{figure}

\noindent\textbf{Planning time\normalfont{:}}
Table~\ref{table:times} gives an overview of the planning time. \rrevise{For each standalone (i.e., non-composed) problem, we consider two variations of $c_0$ to either aim for short planning times or optimal policies. For Franka-C$\times$A', we report the metrics for Franka-C and Franka-A' separately, as well as the total planning time, which amounts to their sum, since planning for Franka-C and Franka-A' needs to be performed only once to obtain a complete composed policy valid in all contingencies.} Planning for each variation is carried out five times and we report on the average planning times and iterations. Fig.~\ref{fig:iteration_cost_scatter_plot} provides an indication of the dispersion around the average.

\begin{center}
\footnotesize
\setlength\tabcolsep{3pt} 
\begin{tabular}{|c|c|c||c|c|c|c|c|c|}
\hline
\thead{Problem} & \thead{$c_0$} & \thead{Iter-\\ations}  & \thead{Decision\\tree} & \thead{Dyn.\\prog.} & \thead{Piecewise\\motion\\planning} & \thead{Joint\\motion\\planning} & \thead{Total\\(s)}\\
\hline
\hline
Baxter & 10.0 & 1 & 0.156 & 0.0002 & 0.52 & 0.56 & 1.24 \\
A      & 0.1 & 4 & 0.157 & 0.0012 & 1.22 & 0.563 & 1.94 \\
\hline
Baxter & 10.0 & 1 & 0.34 & 0.0003 & 1.28 & 1.57 & 3.21 \\
B      & 0.1 & 6 & 0.35 & 0.002 & 3.90 & 1.24 & 5.51 \\
					    
\hline
Baxter & 10.0 & 4 & 1.04 & 0.007 & 8.74 & 5.48 & 15.25 \\
C      & 0.1 & 13 & 1.02 & 0.024 & 27.0 & 5.33 & 33.4 \\
\hline
Baxter & 0.1 & 57 & 12.4 & 5.94 & 33.9 & 5.03 & 57.39 \\
D      & 10.0 & 9 & 12.5 & 0.97 & 6.89 & 5.21 & 25.6 \\
\hline
\hline
Franka & 10.0 & 1 & 0.35 & 0.003 & 1.63 & 1.11 & 3.10 \\
A      & 1.0  & 16 & 0.40 & 0.059 & 20.7 & 1.86 & 23.1 \\
\hline
Franka & 10.0 & 1 & 25.2
 & 0.018 & 62.1 & 129.1 & 216.9 \\
B      & 1.0 & 1 & 25.5 & 0.021 & 64.1 & 193.6 & 283.2 \\
\hline				    
\hline
\thead{\rrevise{Franka C}} 
      & 1.0 & 12 & 0.97 & 0.016 & 15.4 & 5.01 & 21.4 \\
\hline
\thead{\rrevise{Franka A'}}       & 5.0 & 13 & 0.31 & 0.042 & 13.04 & 1.27 & 14.7 \\
\hline
\thead{\rrevise{C$\times$A'}} & - & - & \thead{1.28} & \thead{0.058} & \thead{28.34} & \thead{6.28} & \thead{36.0}\\
\hline
\end{tabular}
\captionof{table}{\label{table:times}Number of iterations and planning times. For Franka~C$\times$A' (denoted C$\times$A' for compactness) the planning time is the sum of Franka-C and Franka-A'.} 
\end{center}
For all problems, planning time is predominantly consumed by motion planning (encompassing both piecewise and joint motion planning), while task planning (comprising decision tree creation and dynamic programming) accounts for a comparatively small portion. However, Task Planning becomes significant in two specific cases: (1) In Baxter-D, this is because the exploration parameter $c_{mcts}$ and the number of iterations for decision tree creation are significantly increased to enable sucessfull planning despite incomplete logic, and is discussed further in Section~\ref{sec:influence_of_motion_planning_failures}; and (2) in Franka-B, this is due to the first order logic engine employed which relies heavily on string manipulations and lacks optimization for speed, leading to notable task planning times in problems with high combinatorial complexity. 

For a very small number of policy iterations (up to approximately 5), the majority of motion planning time is spent on the final step of joint trajectory optimization. However, since joint optimization is performed only once, irrespective of the number of policy iterations, piecewise trajectory optimization becomes the dominant component as the parameter $c_0$ is adjusted to allow for greater exploration, resulting in an increased number of iterations.
\subsubsection{Influence of $c_0$, Exploration vs. Exploitation}
The main parameter influencing planning is $c_0$.
A low (optimistic) value for $c_0$ leads to candidate policies with an \enquote{hypothetized} expected cost underestimating the actual piecewise trajectory costs. Once the piecewise trajectory costs are integrated into the decision tree, the next round of dynammic programming will tend to output a candidate policy largely composed of \enquote{unexplored} actions. This is visible in Fig.~\ref{fig:convergences}  with $c_0 = 0.15$. With more exploration, the search converges to policies with a lower cost. The opposite phenomenon takes place when $c_0$ is an overestimate. Actions with a cost estimate resulting from the piecewise trajectory optimization appear comparatively advantageous and are used for the next candidate policy leading to a fast convergence to a final policy (e.g.  with $c_0 = 1.5$ in Fig.~\ref{fig:convergences}), at the expense of optimality.

\begin{figure}[ht]
    \centering
\begin{tikzpicture}
\begin{axis}[
    width=0.9\linewidth,
    height=0.6\linewidth,
    title={Hypothesized cost of candidate policies (Franka-A)},
    xlabel={Iterations},
    xmode=log, 
    ylabel={policy hypothesized cost},
    ylabel style={at={(0.05,0.5)}, anchor=north},
    ymajorgrids=true,
    legend pos=south west,
    legend style={font=\fontsize{7}{8}\selectfont},
    legend cell align={left},
    x label style={at={(axis description cs:0.5,0.05)},anchor=north},
    legend style={at={(0.5,1.0)}, anchor=north},
    legend columns=2
]
   
\addplot[ 
    color=purple, 
    ]
    table [col sep=comma]{tamp_data/C0_analysis_1/1.5/policy-candidates.data};
    \addlegendentry{$C_0=1.5$}

\addplot[ 
    color=blue,
    ]
    table [col sep=comma]{tamp_data/C0_analysis_1/1.0/policy-candidates.data};
    \addlegendentry{$C_0=1.0$}  
    
\addplot[ 
    color=orange,
    ]
    table [col sep=comma]{tamp_data/C0_analysis_1/0.5/policy-candidates.data};
    \addlegendentry{$C_0=0.5$}  

\addplot[ 
    color=green,
    ]
    table [col sep=comma]{tamp_data/C0_analysis_1/0.15/policy-candidates.data};
    \addlegendentry{$C_0=0.15$}
          
\end{axis}
\end{tikzpicture}

\caption{\label{fig:convergences} Evolution of the hypothesized cost of the candidate policies: Over the iterations, the cost estimate of the candidate policies is refined by the result of the piecewise optimization. 
}
\end{figure}
Some actions may be infeasible leading to infinite costs, such that any choice of $c_0$ still allows the algorithm to iterate enough to generate candidate policies circumventing infeasible actions, within the limits of decision tree. 
In practice, $c_0$ is chosen empirically. Fig.~\ref{fig:iteration_cost_scatter_plot} shows the relation between $c_0$ the final piecewise trajectory costs, and the number of iterations. Planning is performed 5 times for each value of $c_0$. For low values of $c_0$, the final policy cost exhibits minimal dispersion, although the number of iterations required for convergence shows some variability. Conversely, higher values of $c_0$ lead to reduced variability in the number of iterations needed for convergence but increase the variability of the final policy cost.

\begin{figure}[ht] 
    \centering
\begin{tikzpicture}
\begin{axis}[
        ylabel={Trajectory cost},
        xmode=log, 
        ymin=0, ymax=11, 
        axis y line*=left, 
        axis x line=none, 
 		ylabel style={at={(0.125,0.5)}, anchor=south},
 		width=0.9\linewidth 
     ]
\addlegendimage{only marks, mark=square, color=blue}\addlegendentry{iterations}

\addplot+[ 
    only marks,
    mark=triangle,
    color = orange
    ]
    table [col sep=comma]{tamp_data/C0_analysis/c0_to_piecewise_cost.data};
    \addlegendentry{piecewise cost}
    
\addplot+[ 
    only marks,
    mark=o,
	color=yellow,
    ]
    table [col sep=comma]{tamp_data/C0_analysis/c0_to_joint_cost.data};
    \addlegendentry{joint opt. cost}

\end{axis}
    
\begin{axis}[
    title={Number of iterations and trajectory costs (Franka-A)},
    xlabel={$c_0$ (log scale)},
    ylabel={Number of iterations},
    xmode=log, 
    ymajorgrids=true,
    error bars/y dir=both, 
    error bars/y explicit, 
    ymin=0, ymax=105, 
    axis y line*=right, 
    ylabel style={at={(1.3,0.5)}, anchor=north},
    width=0.9\linewidth,
    legend to name=sharedlegend, 
    legend style={at={(0.5,1.0)}, anchor=north, legend columns=1}, 
]

\addplot+[ 
    only marks,
    mark=square,
    ]
    table [col sep=comma]{tamp_data/C0_analysis/c0_to_it.data};
\end{axis}

\end{tikzpicture}  
\caption{\label{fig:iteration_cost_scatter_plot}Influence of $c_0$ on the trajectory cost of $\pi^{\star}$ and the total number of iterations: A low $c_0$ (optimistic) leads to better trajectory-trees at the expense of the number of iterations.}
\end{figure}
For the problem Franka-A, optimal policies are found when $c_0$ is below $1.0$.

\subsubsection{Influence of Motion Planning Failures} \label{sec:influence_of_motion_planning_failures}
In the problem Baxter-D the logic definition is relaxed to make the \textit{Look} actions available also when the robot does not grasp the block to observe, as described in Section~\ref{sec:planning_problems}. This increases the branching factor. In addition, the large majority of the planned candidate policies do not grasp the block \rrevise{before executing a \textit{Look} action, since it is symbolically not advantageous to do so. Piecewise trajectory optimization for such policies results in infinite costs for the \textit{Look} action, because the constraints~\eqref{eq:incidence_angle} and \eqref{eq:fov_centering} ensuring observability of the colored side cannot be satisfied unless the object has been grasped beforehand. The propagation of infinite costs during dynamic programming effectively rules out large portions of the decision tree, eventually favoring candidate policies that grasp the block before executing a \textit{Look} action, for which trajectory optimization succeeds.} 
The algorithm still converges to a feasible policy as good as in Baxter-C, when $c_{mcts}$ and the number of iterations for the decision tree creation are large enough. 

This is an important quality of the proposed solution. Adding domain specific knowledge in the task planning (to ensure that motion planning will succeed) will, in general, speed up the search.
However, we think that is it not always possible, nor convenient to incorporate geometric reasoning
(reachability of a view point, reachability of an object) in
the logical reasoning.

\subsubsection{Benefits of the Joint Optimization}
Joint optimization results in trajectory-trees being optimized across observation branchings, and are more optimized towards likely contingencies as it is visible in Fig.~\ref{fig:joint-obs-branching}. Furthermore, the optimization spans actions and kinematic switches, like in the fully observable LGP framework. This allows trajectory elements corresponding to a given action to be influenced by subsequent actions, ultimately resulting in lower overall trajectory costs compared to piecewise optimization. This is shown in Fig.~\ref{fig:joint-obs-anticipation}.

\begin{figure}[ht]
    \centering
\subfloat[The jointly optimized traj.-tree for policy \ref{fig:policy-franka-a} is more optimized towards the likely branch.\label{fig:joint-obs-branching}]{
\begin{tikzpicture}
\begin{axis}[
    width=0.56\linewidth,
    height=0.6\linewidth,
    outer sep=0pt,
	inner sep=2pt,
    title={Franka base joint n°1},
    xlabel={Timestep},
    ylabel={Joint angle configuration},
    xtick={0, 20, 40},
    ymajorgrids=true,
    x label style={at={(axis description cs:0.5,0.05)},anchor=north},
    ylabel style={at={(0.125,0.5)}, anchor=north},
    xmax=45,
    legend columns=1,
    legend image post style={scale=0.3} 
]
   
        \addplot[blue] table [col sep=comma]{tamp_data/joint_vs_markov/franka_joint/joint_1_branch_5_q.data};
        \addlegendentry{joint opt.}
        \addplot[blue, forget plot] table [col sep=comma]{tamp_data/joint_vs_markov/franka_joint/joint_1_branch_4_q.data};

        \addplot[red] table [col sep=comma]{tamp_data/joint_vs_markov/franka_markov/joint_1_branch_5_q.data};
        \addlegendentry{piecewise.}
        \addplot[red, forget plot] table [col sep=comma]{tamp_data/joint_vs_markov/franka_markov/joint_1_branch_4_q.data};
        
        \node[anchor=west] at (axis cs:27.0, 0.12) {$p=5/6$};
        \node[anchor=west] at (axis cs:18.0, -0.1) {$p=1/6$};

\end{axis}
\end{tikzpicture}
}
\hfill
\subfloat[The jointly optimized traj.-tree for \ref{fig:policy-baxter-b} anticipates the \textit{Look} action for t~$\geq$~20.\label{fig:joint-obs-anticipation}]{\begin{tikzpicture}
\begin{axis}[
    width=0.56\linewidth,
    height=0.6\linewidth,
    outer sep=0pt,
	inner sep=2pt,
    title={Baxter head joint},
    xlabel={Timestep},
    ymajorgrids=true,
    x label style={at={(axis description cs:0.5,0.05)},anchor=north},
    xmax=45
]
\addplot[blue] table [col sep=comma]{tamp_data/joint_vs_markov/baxter_joint/joint_0_branch_0_q.data};
    
    \addplot[red] table [col sep=comma]{tamp_data/joint_vs_markov/baxter_markov/joint_0_branch_0_q.data};
    
\end{axis}
\end{tikzpicture}}
\caption{\label{fig:joint_vs_piecewise_optimization} Comparison of joint vs. piecewise optimization: The joint optimization accounts for the branching probability (based on the belief state) and optimizes the trajectory globally over all actions. This leads to smoother motions.}
\end{figure}

Overall, the expected costs of the jointly optimized trajectory-trees are 40\% inferior compared to the piecewise trajectory-trees, as illustrated in Fig.~\ref{fig:iteration_cost_scatter_plot}. Similar cost reduction is observed for the problems using the Baxter. Importantly, we observe that the lowest joint trajectory costs are obtained with the policies having the lowest piecewise trajectory costs. This validates empirically the approach of selecting a policy on the basis of the piecewise cost only, and performing the joint optimization as a final step. Performing the joint optimization appear particularly advantageous when optimizing a macro-action like Franka-A', since it is typically used multiple times by the overarching task plan (Franka-C). Optimizing macro-actions to a high degree is therefore especially meaningful. 
\begin{figure}[htbp]
    \centering
    \begin{minipage}{0.475\linewidth}
\begin{center}
\footnotesize
\setlength\tabcolsep{1pt} 
\begin{tabular}{|c||c|c|c|}
\hline
\thead{} & \thead{No\\decomp.} & \thead{Subtrees} & \thead{Branches}\\
\hline
Baxt.-B & 1.24 & 0.96 & 1.19 \\
\hline
Baxt.-C & 5.28 & 4.81 & 6.81 \\
\hline
Frank.-A  & 1.38 & 2.22 & 2.99 \\
\hline
Frank.-C & 4.74 & 3.20 & 3.8 \\					    
\hline
\end{tabular}
\captionof{table}{Optimization time for different decomposition schemes (in seconds)\label{table:distrib_vs_joint}} 
\end{center}
    \end{minipage}
    \hfill 
    \begin{minipage}{0.475\linewidth}
        \centering
        \includegraphics[width=\textwidth]{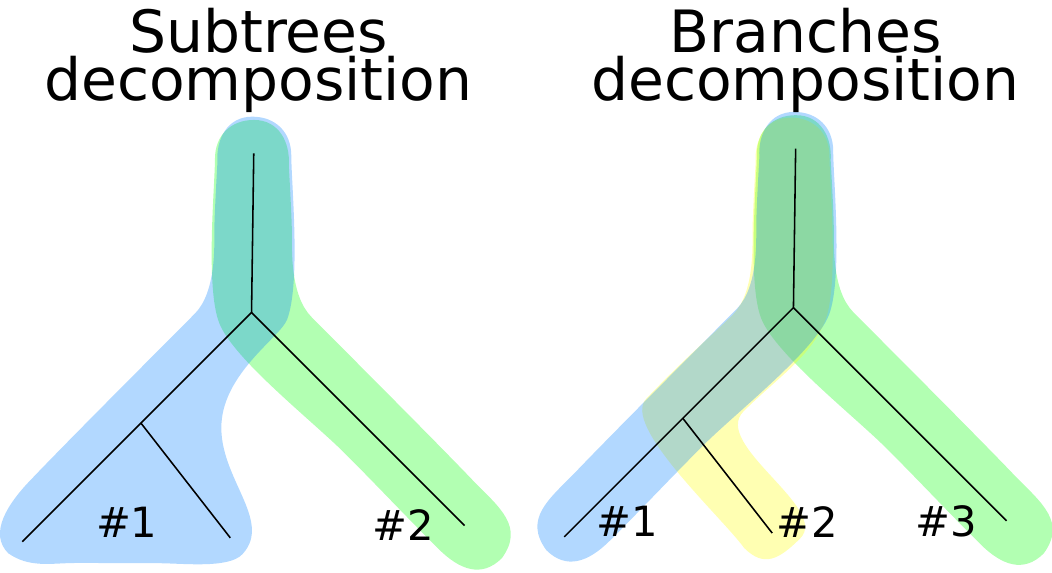} 
        \caption{Illustration of the optimization decomposition schemes\label{fig:decomposition_schemes}}
    \end{minipage}
\end{figure}

With the D-AuLa solver, the joint optimization can be decomposed into sub-optimization problems, as shown in Fig.~\ref{fig:decomposition_schemes}. Optimization time are indicated in Table~\ref{table:distrib_vs_joint}. Unlike the MPC case, the decomposition does not demonstrate a clear advantage. This can be understood by considering that the planned policies are typically less decomposable than in MPC: there are potentially multiple branchings, which can take place at any depth on the tree. Franka-C is the problem which would benefit the most from optimization decomposition (with the Subtree-decomposition), but decomposition can be detrimental, e.g. for Franka-A with dense branching structure (see Fig.~\ref{fig:policy-franka-a}). We therefore do not decompose by default as explained in Section~\ref{sec:tree_komo}.

\revise{\subsubsection{Scalability}
Fig.~\ref{fig:scalability:combined_figure} shows the planning times observed with variations of the Baxter problems. The measurements for the  belief state sizes of one, two, and six  are obtained with the problems Baxter-A, Baxter-B and Baxter-C from the Table.~\ref{table:problems}. Additional data points for belief state sizes of three, four, and five were measured by removing hypotheses from the initial belief state of the Baxter-C problem, thereby enabling a finer-grained analysis of scalability between the Baxter-B and Baxter-C problem variations.}
\begin{figure}[ht]
    \centering
    \subfloat[\revise{Cumulated planning time and trajectory-tree size vs. the number of hypotheses~$|\mathcal{H}|$.}\label{fig:scalability:overall}]{%
        \begin{tikzpicture}
        \begin{axis}[
            ybar stacked,
            bar width=15pt,
            enlarge x limits=0.1,
            ymin=0,
            title={Planning time for Baxter problems, $c_0 = 0.1$ },
            ylabel={Planning time (s)},
            xlabel={Belief state size (number of hypothese $|\mathcal{H}|$)},
            xtick={1,2,3,4,5,6},
            xticklabel style={font=\small},
            ylabel style={at={(0.125,0.5)}, anchor=south},
            width=0.9\linewidth,
            legend style={at={(0.5,0.975)}, anchor=north, legend columns=2},
            ymajorgrids=true,
        ]

        \addplot+[ybar, fill=blue!60, color=blue!60] coordinates         {
        (1, 0.1583932)
        (2, 0.3545638)
        (3, 0.647826)
        (4, 0.8311378)
        (5, 0.837203)
        (6, 1.1009498)
        };
        \addlegendentry{decision tree}
        
        \addplot+[ybar, fill=orange!60, color=orange!60] coordinates {
        (1,1.265966)
        (2,4.322138)
        (3,11.762)
        (4,12.05)
        (5,18.42968)
        (6,28.1242)
        };
        \addlegendentry{piecewise}

        \addplot+[ybar, fill=yellow!60, color=yellow!60] coordinates {
        (1, 0.5664732)
        (2, 1.294632)
        (3, 2.684448)
        (4, 3.720284)
        (5, 4.093164)
        (6, 5.335992)
        };
        \addlegendentry{joint opt.}
       
        \addlegendimage{only marks, color=green, legend image post style={yshift=-1.1ex}}
        \addlegendentry{\# actions}

        \end{axis}

        \begin{axis}[
            axis y line*=right,
            axis x line=none,
            ymin=0,
            ylabel={Number of actions (trajectory-tree size)},
            ylabel style={at={(1.3,0.5)}, anchor=north},
            yticklabel style={font=\small},
            tick align=outside,
            width=0.9\linewidth,
        ]
        \addplot+[only marks, mark options={fill=green}, color=green] coordinates {
        (1, 6)
        (2, 12)
        (3, 21.4)
        (4, 27)
        (5, 29.6)
        (6, 35.4)
        };
        \end{axis}
        \end{tikzpicture}
    }
	\hfill
    \subfloat[\revise{Piecewise optimization: Performed once per iteration; the plot shows the average over iterations.}\label{fig:scalability:piecewise}]{%
        \begin{tikzpicture}
    \centering
\begin{axis}[
	outer sep=0pt,
	inner sep=2pt,
    width=0.57\linewidth,
    title={Piecewise opt. (average over iterations)},
    xlabel={Belief state size},
    ylabel={Optimization time (s)},
    xtick={1, 2, 3, 4, 5, 6},
    ymajorgrids=true,
    x label style={at={(axis description cs:0.5,0.05)},anchor=north},
    ylabel style={at={(0.1,0.5)}, anchor=north},
    legend columns=1,
    legend image post style={scale=0.3} 
]
   
      \addplot+[only marks, mark options={fill=orange!60}, color=red] coordinates {
        (1, 0.31)
        (2, 0.617448285714286)
        (3, 1.36767441860465)
        (4, 1.67361111111111)
        (5, 1.53580666666667)
        (6, 2.06795588235294)
        }; 

\end{axis}
\end{tikzpicture}
    }
    \subfloat[\revise{Joint optimization: Scales linearly with the number of actions in the trajectory-tree.}\label{fig:scalability:joint}]{%
       \begin{tikzpicture}
    \centering
\begin{axis}[
	outer sep=0pt,
	inner sep=2pt,
    width=0.57\linewidth,
    title={Joint opt.},
    xlabel={Number of actions},
    ylabel={},
    ymajorgrids=true,
    x label style={at={(axis description cs:0.5,0.05)},anchor=north},
    ylabel style={at={(0.1,0.5)}, anchor=north},
    legend columns=1,
    legend image post style={scale=0.3} 
]
   
      \addplot+[only marks, mark options={fill=yellow}, color=orange] coordinates {
        (6, 0.56)
        (12, 1.29)
        (21.4, 2.68)
        (27, 3.72)
        (29.6, 4.093164)
        (35.4, 5.33)
        }; 

\end{axis}
\end{tikzpicture}
    }

    \caption{\revise{Planning time for variations of the Baxter problems: Planning time is dominated by the iterations of piecewise optimization. Reported values are averaged over five runs.}}
    \label{fig:scalability:combined_figure}
\end{figure}

\revise{The joint optimization time exhibits linear scaling with the number of hypotheses $|\mathcal{H}|$ of the belief state. This linear relationship is consistent with the theoretical scalability analysis discussed in Section~\ref{sec:tree_komo}. It becomes even more apparent in Fig.~\ref{fig:scalability:joint}, where the data is plotted against the number of actions in the trajectory-tree. While the absolute number of actions remains relatively modest, each action comprises 20 configurations, with each configuration represented by an 8-dimensional vector (for the robot's 8 degrees of freedom). As a result, a trajectory-tree with 35 actions leads to a joint optimization problem with 5600 decision variables.}

\revise{The overall planning time is primarily dominated by the piecewise trajectory optimization performed during each iteration of the policy search.  When averaged over the number of iterations, the optimization time exhibits a linear trend with respect to $|\mathcal{H}|$, as illustrated in Fig.~\ref{fig:scalability:piecewise}. It is important to note, that in each iteration, only newly introduced actions are subject to optimization; action edges that are shared with previously evaluated policies are not re-optimized. Consequently, the time required for piecewise optimization is strongly influenced by the degree of overlap between successive candidate policies. This results in a less direct correlation with the number of hypotheses, and accounts for the less monotonic behavior observed in Fig.~\ref{fig:scalability:piecewise}, in contrast to Fig.~\ref{fig:scalability:joint}.}

\revise{While the observed trend with respect to the number of hypotheses is linear, it is important to note that in these block-stacking experiments, the number of hypotheses itself grows factorially with the number of blocks. This is exemplified by the experiment Franka-B, where the belief state is aready of size 72 for two blocks since the colored side can be any side. This underscores the challenge of scalability in highly combinatorial domains. The Franka-C$\times$A' experiment, discussed in the next section, investigates how hierarchical decomposition can mitigate this complexity and enhance scalability.}

\subsubsection{Improved Scalability via Hierarchical Planning} \label{sec:scalability_through_composition}
As the problem Franka-B shows, one reaches the limits of tractability for problems with a large belief state size. This is inherent to the nature of the approach, since we aim for trajectory-trees covering all contingencies. Introducing hierarchy Franka-C$\times$A' breaks down the complexity and enables scaling to larger problems.

Firstly, we plan for the higher-level policy (Franka-C), where the \textit{Look} action constraining the sensor movement is replaced by an \textit{Identify-block} action. This action becomes available once the robot placed a block on the second table, in a fixed configuration which will be the start configuration for Franka-A'. Symbolically, the effect of this action is that the block is identified. Motion planning for this action is not planned at this stage.

Subsequently, we plan the low-level policy (Franka-A') implementing the \textit{Identify-Block} action. It is planned from a specified start configuration and, importantly, a fixed final configuration. Fixing the final configuration is atypical in Logic-Geometric Programming (LGP), where it is generally defined implicitly and determined as part of the optimization process. However, in this case, a fixed final configuration is necessary to enable the higher-level policy (Franka-C) to be planned independently of Franka-A'. This \textit{de-facto} boils down to optimizing a trajectory-graph.

With this decomposition in place, planning is possible in approximately 36 seconds as indicated in Table~\ref{table:times}. The resulting trajectory-tree can handle any of the 1296 possible starting configurations.
\subsection{Discussion}
These experiments demonstrate the feasibility of planning trajectory-trees in belief space, with branchings dependent on the received observations. This enables the resolution of challenging TAMP problems characterized by a strongly multimodal partially observable structure. Trajectory-trees combine explorative actions (primarily sensor trajectories) with exploitative actions (e.g., grasping and placing).
The degree of exploration over the space of all possible manipulation policies is controlled by the cost initialization parameter $c_0$ allowing for flexible trade-offs between exploration and planning time. For problems of moderate size (e.g., belief state sizes up to 6), planning times remain within several seconds, highlighting the method's practical applicability.

One limitation of the current approach is its reliance on a predefined set of world hypotheses, which must be available at the start of planning, along with a suitable observation model. While this assumption is valid for many structured applications, its relaxation would broaden the applicability of the method to more dynamic or unstructured scenarios. 

\revise{The approach optimizes trajectory-trees covering all possible contingencies. This is well-suited to the presented examples, where the different modalities are of equal importance. However, in domains where some contingencies are intrinsically unlikely, this strategy may become computationally inefficient. A potential solution would be to fully optimize only a subset of the trajectory-tree that accounts for a target probability mass, accepting a trade-off in the form of occasional replanning for rare cases, thereby sharing similarities with \citep{shah2020anytime}.}
\added{More generally, this idea can be viewed as a particular instance of trajectory-trees with partial contingency coverage, where only a subset of contingencies is represented or optimized explicitly. Beyond improving scalability, this could naturally extend the framework toward richer classes of POMDPs, as discussed further in Section~\ref{sec:towards_partial_contingencies}.}

We think that \deleted{this}\added{the presented} approach is particularly well-suited for optimizing mid-term policies or macro-actions, which aim to reduce the entropy of the belief state, as exemplified by the Franka-A problem. Having only limited horizon and branching, such macro-actions are fast to optimize (potentially even online), and can be integrated into a higher-level TAMP planning outer loop, effectively decoupling the higher-level planner from the complexities of handling lower-level partial observability. This decoupling enhances scalability and simplifies the higher-level planning process.
The practicality of this concept is illustrated in the Franka-C$\times$A' problem, where a total planning time of 36 seconds was achieved, for a trajectory-tree addressing 1,296 contingencies.

The planning time is primarily dominated by the piecewise trajectory planning step during the policy search. A promising direction for future work could involve learning to predict promising policies, as demonstrated within the LGP framework for the fully observable case in \citep{tamp_lgp_6_driess2021learning}.

\revise{\subsection{Generalizations Beyond Deterministic Dynamics} \label{sec:tamp_generalizablization}
The PO-LGP framework assumes deterministic transitions, of both the continuous and symbolic states, as explained in Section~\ref{sec:intro_trajectory_tree_state}, and Section~\ref{sec:tamp_decision_process}. It can also be extended beyond these assumptions:}

\medskip
\noindent\revise{\textbf{Continuous dynamics\normalfont{:}} In cases where the continuous dynamics are stochastic but stabilizable, due to statistically bounded disturbances, such as Gaussian noise, the trajectory-tree structure can be extended by attaching feedback controllers to its edges (which may be co-optimized) to track the nominal trajectories, thereby sharing similarities with the FIRM approach \citep{agha2014firm}. As long as the uncertainty remains sufficiently concentrated, the underlying multimodal structure and branching of the trajectory-tree remain unaffected.}

\medskip
\noindent\revise{\textbf{Stochastic observation model\normalfont{:}} A stochastic observation model increases the branching factor at observation nodes, resulting in a larger decision tree. It also necessitates defining goal conditions probabilistically in belief space, to account for the possibility that the belief may not concentrate fully, and residual non-zero probabilities may persist across a long tail of states. While this generalization does not alter the structure of the framework, it may raise tractability challenges due to the expanded size of the trajectory-tree.}

\medskip
\noindent\revise{\textbf{Stochastic symbolic actions\normalfont{:}} With stochastic symbolic actions, the system may transition to different modalities after each action. This can be modeled by defining a tuple of cost and constraint functions $(c, g, h)$ for each possible outcome, thereby implicitly specifying the corresponding continuous evolution. While this formulation is  feasible in principle, it may come with practical challenges when action stochasticity represents a transition to failure modes, which are often unpredictable and difficult to model.}

\medskip
\revise{These considerations suggest that extending the framework to accommodate stochastic continuous dynamics and/or stochastic observation models is structurally compatible with the existing trajectory-tree formulation. By comparison, symbolic action stochasticity---particularly when involving transitions to failure modes---introduces modeling and specification challenges, and may be more naturally addressed through replanning mechanisms, rather than direct integration into the tree structure.}
\added{These extensions would generally require representing a larger number of contingencies, leading to larger trajectory-trees and increasing the scalability challenges associated with complete contingency coverage. This naturally motivates the discussion of trajectory-trees with partial contingency coverage in Section~\ref{sec:towards_partial_contingencies}.}

\section{Conclusion}


\added{\subsection{Toward Trajectory-Trees with Partial Contingency Coverage}\label{sec:towards_partial_contingencies}}
\added{In its current formulation, the framework assumes that partial observability can be represented by a finite set of explicit hypotheses. 
Furthermore, the number of contingencies represented in the  trajectory-tree is kept manageable through the QMDP assumption in MPC and the assumption of deterministic symbolic transitions in TAMP.}

\added{Under these assumptions, the presented framework optimizes complete trajectory-trees, in the sense that they explicitly represent all contingencies. While this formulation is well suited to problems involving a moderate number of explicit hypotheses, its computational cost increases rapidly with the number of modeled hypotheses. Furthermore, the simplifying assumptions restrict the class of problems that the framework can address.}

\added{A natural extension would be to construct trajectory-trees with partial contingency coverage, in which only a subset of contingencies is represented explicitly, for example by sampling future belief evolutions. This would partially decouple the computational effort from the size of the underlying belief space. Furthermore, sampling provides a natural discretization mechanism through which potentially continuous observation models, stochastic transitions, or richer belief dynamics can be approximated by a finite set of explicit contingencies amenable to joint trajectory-tree optimization. This would therefore not only improve the scalability of the framework but also extend its applicability to domains with large or continuous belief spaces, or stochastic transitions.}

\added{
In MPC, this could be introduced straightforwardly while retaining the QMDP assumption by constructing trajectory-trees from a fixed number of sampled hypotheses, thereby bounding the computational requirements. More generally, this would facilitate an extension toward richer planning behaviors by explicitly reasoning about anticipated belief evolution, arising, for example, from future observations or interactions with other agents \footnote{\added{To fully leverage richer models of belief evolution, the QMDP approximation could be relaxed, or alternatively applied only after a fixed number of explicitly represented branching stages, allowing a trade-off between computational complexity and planning expressiveness.}}.
}

\added{In TAMP, constructing trajectory-trees with partial contingency coverage would extend the framework toward a broader class of POMDP models.}
\added{However, this would introduce a direct reliance on frequent replanning, since the optimized policies would intentionally represent only a subset of the reachable belief states.}

\added{
Such an extension would bring the proposed framework conceptually closer to recent online macro-action POMDP planners, while retaining its distinctive trajectory-optimization-based formulation of motion generation.}

\subsection{Concluding Remarks}
In this article, we propose a new approach to trajectory optimization for problems with a multimodal partial observability, which consists in optimizing trajectory-trees in belief space.

For MPC, the control problem is formulated with a fixed tree structure having an early branching (PO-MPC). Trajectory-trees are optimized with respect to a belief state provided externally. Given the critical runtime requirements, we developed a specialized optimization algorithm~(D-AuLa) which exploits the decomposability  of these trees to significantly improve computational efficiency. 
We believe these contributions can enable the adoption of tree-like controls (otherwise known as Multi-Stage MPC) to more robotic uses cases, such as partially observable problems.

In TAMP, where the challenge resides in the efficient integration of the symbolic and geometric reasoning, we developed an integrated planner (PO-LGP), which incorporates belief state inference to enable the planning of information gathering actions. The planner fully encompasses the contingent nature of the partially observable problem, by reasoning on trees both at a task and motion planning level. The optimization of the trajectory-trees is performed accross observation branchings and across task modes using a new transcription (T-KOMO). 
These contributions enable long-term planning for TAMP problems where existing approaches are either not applicable, or would rely  heavily on replanning.

Together, these contributions are summarized in Table~\ref{table:contibution_table}. \revise{While MPC and TAMP are addressed with distinct methodologies, certain use cases transcend this boundary. For instance, complex driving scenarios may require symbolic decision-making akin to TAMP, while object manipulation tasks may demand fast, online trajectory-tree replanning similar to MPC. From this perspective, the presented contributions form a complementary set of techniques and provide a unified perspective for addressing the challenges of trajectory optimization under multimodal partial observability.}

\begin{table*}[htbp]
\centering
\footnotesize
\setlength\tabcolsep{1pt} 
\begin{tabular}{|c||c||c|c|c|c||c|}
\hline
\thead{} & \thead{Challenges} & \thead{Belief state \\update} & \thead{Traj. tree\\structure} & \thead{Trajectory\\transcription}& \thead{Importance of \\ joint optimization} & \thead{Contributions}\\
\hline
MPC & \thead{Performance \\vs.\\ Conservativness\\-\\Real-time  planning} & \thead{External \\ (provided by \\perception \\ module)} & Fixed & \thead{Indirect single shooting\\or\\ Direct (KOMO)} &Essential& \thead{PO-MPC (high-level problem formulation) \\ D-AuLa (low-level optimization) }\\
\hline
TAMP & \thead{Long term planning in\\ highly contingent domains\\-\\Scalability} & \thead{Internal \\ (exploration \\ is planned)} & \thead{Determined by\\task planner} & Direct (T-KOMO) &\thead{Trajectory-tree\\improvement}& \thead{PO-LGP (high-level integrated planner) \\ T-KOMO (mid-level traj.-tree-transcription)}\\				    
\hline
\end{tabular}
\captionof{table}{Overview of research challenges and contributions. The rows 3 to 6 outline salient aspects of the method.\label{table:contibution_table}} 
\end{table*}

\bibliographystyle{SageH} 
\bibliography{references} 

@article{soc-1,
author = {Mesbah, Ali},
year = {2016},
month = {12},
pages = {30-44},
title = {\href{https://ieeexplore.ieee.org/document/7740982}{Stochastic Model Predictive Control: An Overview and Perspectives for Future Research}},
volume = {36},
journal = {IEEE Control Systems},
doi = {10.1109/MCS.2016.2602087}
}

@article{soc-2,
  title={\href{https://www.sciencedirect.com/science/article/pii/S0098135417303812}{Stochastic model predictive control - how does it work?}},
  author={Tor Aksel N. Heirung and Joel A. Paulson and Jared O'Leary and Ali Mesbah},
  journal={Computers and Chemical Engineering},
  year={2017},
  volume={114},
  pages={158-170}
}

@article{robust-1,
  title={\href{https://ieeexplore.ieee.org/document/4789462}{Robust Model Predictive Control}},
  author={Peter J. Campo and Manfred Morari},
  journal={1987 American Control Conference},
  year={1987},
  pages={1021-1026}
}

@article{min-max-1,
author = {Löfberg, Johan},
year = {2003},
month = {01},
pages = {},
title = {\href{https://www.researchgate.net/publication/228869742_Min-Max_Approaches_to_Robust_Model_Predictive_Control}{Min-Max Approaches to Robust Model Predictive Control}},
journal = {Linköping Studies in Science and Technology. Dissertations. No}
}

@article{langson2004robust,
  title={Robust model predictive control using tubes},
  author={Langson, Wilbur and Chryssochoos, Ioannis and Rakovi{\'c}, SV and Mayne, David Q},
  journal={Automatica},
  volume={40},
  number={1},
  pages={125--133},
  year={2004},
  publisher={Elsevier}
}

@article{scokaert1998min,
  title={Min-max feedback model predictive control for constrained linear systems},
  author={Scokaert, Pierre OM and Mayne, David Q},
  journal={IEEE Transactions on Automatic control},
  volume={43},
  number={8},
  pages={1136--1142},
  year={1998},
  publisher={IEEE}
}

@article{multi-stage-luca-1-LUCIA201269,
title = "A new Robust NMPC Scheme and its Application to a Semi-batch Reactor Example*",
journal = "IFAC Proceedings Volumes",
volume = "45",
number = "15",
pages = "69 - 74",
year = "2012",
note = "8th IFAC Symposium on Advanced Control of Chemical Processes",
issn = "1474-6670",
doi = "https://doi.org/10.3182/20120710-4-SG-2026.00035",
url = "http://www.sciencedirect.com/science/article/pii/S1474667016304219",
author = "S. Lucia and T. Finkler and D. Basak and S. Engell"
}

@article{multi-stage-luca-1-LUCIA20131306,
title = "Multi-stage nonlinear model predictive control applied to a semi-batch polymerization reactor under uncertainty",
journal = "Journal of Process Control",
volume = "23",
number = "9",
pages = "1306 - 1319",
year = "2013",
issn = "0959-1524",
doi = "https://doi.org/10.1016/j.jprocont.2013.08.008",
url = "http://www.sciencedirect.com/science/article/pii/S0959152413001686",
author = "Sergio Lucia and Tiago Finkler and Sebastian Engell"
}

@article{multi-stage-luca-1,
author = {Lucia, Sergio and Andersson, Joel and Brandt, Heiko and Diehl, Moritz and Engell, Sebastian},
year = {2014},
month = {08},
pages = {},
title = {\href{https://www.sciencedirect.com/science/article/pii/S0959152414001450}{Handling uncertainty in economic nonlinear model predictive control: A comparative case study}},
volume = {24},
journal = {Journal of Process Control},
doi = {10.1016/j.jprocont.2014.05.008},
}

@article{multi-stage-klintberg-1,
author = {Kouzoupis, Dimitris and Klintberg, Emil and Diehl, Moritz and Gros, Sebastien},
year = {2017},
month = {12},
pages = {},
title = {\href{https://www.researchgate.net/publication/321984963_A_dual_Newton_strategy_for_scenario_decomposition_in_robust_multistage_MPC}{A dual Newton strategy for scenario decomposition in robust multistage MPC}},
journal = {International Journal of Robust and Nonlinear Control},
doi = {10.1002/rnc.4019}
}

@ARTICLE{multi-stage-klintberg-2,  author={E. {Klintberg} and S. {Gros}},  journal={IEEE Transactions on Control Systems Technology},   title={A Parallelizable Interior Point Method for Two-Stage Robust MPC},   year={2017},  volume={25},  number={6},  pages={2087-2097},}

@inproceedings{multi-stage-leidereiter-1,
author = {Leidereiter, Conrad and Potschka, Andreas and Bock, Hans},
year = {2015},
month = {07},
pages = {1608-1613},
title = {\href{https://ieeexplore.ieee.org/document/7330767}{Dual decomposition for QPs in scenario tree NMPC}},
doi = {10.1109/ECC.2015.7330767}
}

@INPROCEEDINGS{multi-stage-robotics-1,
author={S. {Subramanian} and S. {Nazari} and M. A. {Alvi} and S. {Engell}},
booktitle={2018 23rd International Conference on Methods   Models in Automation   Robotics (MMAR)},
title={Robust NMPC Schemes for the Control of Mobile Robots in the Presence of Dynamic Obstacles},
year={2018},
volume={},
number={},
pages={768-773},}

@INPROCEEDINGS{mpc_100,
  author={Ulfsjöö, Carl Hynén and Axehill, Daniel},
  booktitle={2022 IEEE Intelligent Vehicles Symposium (IV)}, 
  title={On Integrating POMDP and Scenario MPC for Planning under Uncertainty – with Applications to Highway Driving}, 
  year={2022},
  volume={},
  number={},
  pages={1152-1160},
  doi={10.1109/IV51971.2022.9827005}}

@INPROCEEDINGS{mpc_104,
  author={Oliveira, Rui and Nair, Siddharth H. and Wahlberg, Bo},
  booktitle={2023 IEEE Intelligent Vehicles Symposium (IV)}, 
  title={Interaction and Decision Making-aware Motion Planning using Branch Model Predictive Control}, 
  year={2023},
  volume={},
  number={},
  pages={1-8},
  doi={10.1109/IV55152.2023.10186633}}

@article{mpc_105,
  title={Interactive multi-modal motion planning with branch model predictive control},
  author={Chen, Yuxiao and Rosolia, Ugo and Ubellacker, Wyatt and Csomay-Shanklin, Noel and Ames, Aaron D},
  journal={IEEE Robotics and Automation Letters},
  volume={7},
  number={2},
  pages={5365--5372},
  year={2022},
  publisher={IEEE}
}

@article{wachter2006implementation_ipopt,
  title={On the implementation of an interior-point filter line-search algorithm for large-scale nonlinear programming},
  author={W{\"a}chter, Andreas and Biegler, Lorenz T},
  journal={Mathematical programming},
  volume={106},
  pages={25--57},
  year={2006},
  publisher={Springer}
}

@article{lucia2017rapid_do_mpc,
  title={Rapid development of modular and sustainable nonlinear model predictive control solutions},
  author={Lucia, Sergio and T{\u{a}}tulea-Codrean, Alexandru and Schoppmeyer, Christian and Engell, Sebastian},
  journal={Control Engineering Practice},
  volume={60},
  pages={51--62},
  year={2017},
  publisher={Elsevier}
}

@article{stellato2020_osqp,
  title={OSQP: An operator splitting solver for quadratic programs},
  author={Stellato, Bartolomeo and Banjac, Goran and Goulart, Paul and Bemporad, Alberto and Boyd, Stephen},
  journal={Mathematical Programming Computation},
  volume={12},
  number={4},
  pages={637--672},
  year={2020},
  publisher={Springer}
}

@article{tamp_tamp_1_garrett2021integrated,
  title={Integrated task and motion planning},
  author={Garrett, Caelan Reed and Chitnis, Rohan and Holladay, Rachel and Kim, Beomjoon and Silver, Tom and Kaelbling, Leslie Pack and Lozano-P{\'e}rez, Tom{\'a}s},
  journal={Annual review of control, robotics, and autonomous systems},
  volume={4},
  number={1},
  pages={265--293},
  year={2021},
  publisher={Annual Reviews}
}

@article{tamp_tamp_2_guo2023recent,
  title={Recent trends in task and motion planning for robotics: A survey},
  author={Guo, Huihui and Wu, Fan and Qin, Yunchuan and Li, Ruihui and Li, Keqin and Li, Kenli},
  journal={ACM Computing Surveys},
  volume={55},
  number={13s},
  pages={1--36},
  year={2023},
  publisher={ACM New York, NY}
}

@inproceedings{tamp_lgp_1_toussaint2015logic,
  title={Logic-Geometric Programming: An Optimization-Based Approach to Combined Task and Motion Planning.},
  author={Toussaint, Marc},
  booktitle={IJCAI},
  pages={1930--1936},
  year={2015}
}

@inproceedings{tamp_lgp_2_toussaint2017multi,
  title={Multi-bound tree search for logic-geometric programming in cooperative manipulation domains},
  author={Toussaint, Marc and Lopes, Manuel},
  booktitle={2017 IEEE International Conference on Robotics and Automation (ICRA)},
  pages={4044--4051},
  year={2017},
  organization={IEEE}
}

@inproceedings{tamp_lgp_3_18-toussaint-RSS,
  title = {Differentiable Physics and Stable Modes for Tool-Use and
  		  Manipulation Planning},
  author = {Toussaint, Marc and Allen, Kelsey R and Smith, Kevin A and Tenenbaum, Josh B},
  booktitle = {Proc{.} of Robotics: Science and Systems (R:SS)},
  note = {\emph{Best Paper Award}},
  year = {2018},
  youtube = {-L4tCIGXKBE}
}

@article{tamp_lgp_6_driess2021learning,
  title={Learning to solve sequential physical reasoning problems from a scene image},
  author={Driess, Danny and Ha, Jung-Su and Toussaint, Marc},
  journal={The International Journal of Robotics Research},
  volume={40},
  number={12-14},
  pages={1435--1466},
  year={2021},
  publisher={SAGE Publications Sage UK: London, England}
}

@article{tamp_po_0_kaelbling2013integrated,
  title={Integrated task and motion planning in belief space},
  author={Kaelbling, Leslie Pack and Lozano-P{\'e}rez, Tom{\'a}s},
  journal={The International Journal of Robotics Research},
  volume={32},
  number={9-10},
  pages={1194--1227},
  year={2013},
  publisher={Sage Publications Sage UK: London, England}
}

@inproceedings{tamp_po_1_garrett2020online,
  title={Online replanning in belief space for partially observable task and motion problems},
  author={Garrett, Caelan Reed and Paxton, Chris and Lozano-P{\'e}rez, Tom{\'a}s and Kaelbling, Leslie Pack and Fox, Dieter},
  booktitle={2020 IEEE International Conference on Robotics and Automation (ICRA)},
  pages={5678--5684},
  year={2020},
  organization={IEEE}
}

@inproceedings{tamp_po_3_adu2021probabilistic,
  title={Probabilistic inference in planning for partially observable long horizon problems},
  author={Adu-Bredu, Alphonsus and Devraj, Nikhil and Lin, Pin-Han and Zeng, Zhen and Jenkins, Odest Chadwicke},
  booktitle={2021 IEEE/RSJ International Conference on Intelligent Robots and Systems (IROS)},
  pages={3154--3161},
  year={2021},
  organization={IEEE}
}

@phdthesis{tamp_po_3bis_adu2023long,
  title={Long-Horizon Planning Under Uncertainty and Geometric Constraints for Mobile Manipulation by Autonomous Humanoid Robots},
  author={Adu-Bredu, Alphonsus Antwi},
  year={2023}
}

@inproceedings{tamp_po_4_hadfield2015modular,
  title={Modular task and motion planning in belief space},
  author={Hadfield-Menell, Dylan and Groshev, Edward and Chitnis, Rohan and Abbeel, Pieter},
  booktitle={2015 IEEE/RSJ International Conference on Intelligent Robots and Systems (IROS)},
  pages={4991--4998},
  year={2015},
  organization={IEEE}
}

@inproceedings{shah2020anytime,
  title={Anytime integrated task and motion policies for stochastic environments},
  author={Shah, Naman and Vasudevan, Deepak Kala and Kumar, Kislay and Kamojjhala, Pranav and Srivastava, Siddharth},
  booktitle={2020 IEEE International Conference on Robotics and Automation (ICRA)},
  pages={9285--9291},
  year={2020},
  organization={IEEE}
}

@article{determinization_keyder2008hmdpp,
  title={The HMDPP planner for planning with probabilities},
  author={Keyder, Emil and Geffner, Hector},
  year={2008}
}

@inproceedings{determinization_platt2010belief,
  title={Belief space planning assuming maximum likelihood observations.},
  author={Platt Jr, Robert and Tedrake, Russ and Kaelbling, Leslie Pack and Lozano-Perez, Tomas},
  booktitle={Robotics: Science and systems},
  volume={2},
  year={2010}
}

@article{task_planner_hoffmann2001ff,
  title={FF: The fast-forward planning system},
  author={Hoffmann, J{\"o}rg},
  journal={AI magazine},
  volume={22},
  number={3},
  pages={57--57},
  year={2001}
}

@inproceedings{task_planner_garrett2020pddlstream,
  title={Pddlstream: Integrating symbolic planners and blackbox samplers via optimistic adaptive planning},
  author={Garrett, Caelan Reed and Lozano-P{\'e}rez, Tom{\'a}s and Kaelbling, Leslie Pack},
  booktitle={Proceedings of the international conference on automated planning and scheduling},
  volume={30},
  pages={440--448},
  year={2020}
}

@article{task_planner_hansen2001lao,
  title={LAO: A heuristic search algorithm that finds solutions with loops},
  author={Hansen, Eric A and Zilberstein, Shlomo},
  journal={Artificial Intelligence},
  volume={129},
  number={1-2},
  pages={35--62},
  year={2001},
  publisher={Elsevier}
}

@article{motion_planning_Schulman2013FindingLO,
  title={Finding Locally Optimal, Collision-Free Trajectories with Sequential Convex Optimization},
  author={John Schulman and Jonathan Ho and Alex X. Lee and Ibrahim Awwal and Henry Bradlow and P. Abbeel},
  journal={Robotics: Science and Systems IX},
  year={2013},
  url={https://api.semanticscholar.org/CorpusID:2393365}
}

@article{pomdp_lauri2022partially,
  title={Partially observable markov decision processes in robotics: A survey},
  author={Lauri, Mikko and Hsu, David and Pajarinen, Joni},
  journal={IEEE Transactions on Robotics},
  volume={39},
  number={1},
  pages={21--40},
  year={2022},
  publisher={IEEE}
}

@article{pomdp_kurniawati2022partially,
  title={Partially observable markov decision processes and robotics},
  author={Kurniawati, Hanna},
  journal={Annual Review of Control, Robotics, and Autonomous Systems},
  volume={5},
  number={1},
  pages={253--277},
  year={2022},
  publisher={Annual Reviews}
}

@inproceedings{mpc_0_phiquepal2021control,
  title={Control-Tree Optimization: an approach to MPC under discrete Partial Observability},
  author={Phiquepal, Camille and Toussaint, Marc},
  booktitle={2021 IEEE International Conference on Robotics and Automation (ICRA)},
  pages={9666--9672},
  year={2021},
  organization={IEEE}
}

@inproceedings{tamp_0_phiquepal2019combined,
  title={Combined task and motion planning under partial observability: An optimization-based approach},
  author={Phiquepal, Camille and Toussaint, Marc},
  booktitle={2019 International Conference on Robotics and Automation (ICRA)},
  pages={9000--9006},
  year={2019},
  organization={IEEE}
}

@incollection{komo_17-toussaint-Newton,
  author = {Toussaint, Marc},
  title = {A tutorial on {N}ewton methods for constrained trajectory
  		  optimization and relations to {SLAM}, {G}aussian {P}rocess
  		  smoothing, optimal control, and probabilistic inference},
  booktitle = {Geometric and Numerical Foundations of Movements},
  editor = {Laumond, Jean-Paul},
  publisher = {Springer},
  year = {2017}
}

@misc{komo_solver,
    title={\href{https://arxiv.org/abs/1407.0414}{Newton methods for k-order Markov Constrained Motion Problems}},
    author={Marc Toussaint},
    year={2014},
    eprint={1407.0414},
    archivePrefix={arXiv},
    primaryClass={cs.RO}
}

@book{book_underactuated,
title = "Underactuated Robotics",
subtitle = "Algorithms for Walking, Running, Swimming, Flying, and Manipulation",
howpublished = "Course Notes for MIT 6.832",
author = "Tedrake, Russ",
year = 2023,
url = "https://underactuated.csail.mit.edu",
}

@Inbook{book_Diehl2013,
author="Diehl, Moritz",
editor="Baillieul, John
and Samad, Tariq",
title="Optimization Algorithms for Model Predictive Control",
bookTitle="Encyclopedia of Systems and Control",
year="2013",
publisher="Springer London",
address="London",
pages="1--11",
isbn="978-1-4471-5102-9",
doi="10.1007/978-1-4471-5102-9_9-1",
url="https://doi.org/10.1007/978-1-4471-5102-9_9-1"
}

@MISC{qmdp,
    author = {Michael L. Littman and Anthony R. Cassandra and Leslie Pack Kaelbling},
    title = {Learning policies for partially observable environments: Scaling up},
    year = {1995}
}

@misc{aula_1,
      title={A Novel Augmented Lagrangian Approach for Inequalities and Convergent Any-Time Non-Central Updates}, 
      author={Marc Toussaint},
      year={2014},
      eprint={1412.4329},
      archivePrefix={arXiv},
      primaryClass={math.OC}
}

@article{ADMM,
author = {Boyd, Stephen and Parikh, Neal and Chu, Eric and Peleato, Borja and Eckstein, Jonathan},
title = {Distributed Optimization and Statistical Learning via the Alternating Direction Method of Multipliers},
year = {2011},
issue_date = {January 2011},
publisher = {Now Publishers Inc.},
address = {Hanover, MA, USA},
volume = {3},
number = {1},
issn = {1935-8237},
url = {https://doi.org/10.1561/2200000016},
doi = {10.1561/2200000016},
journal = {Found. Trends Mach. Learn.},
month = jan,
pages = {1–122},
numpages = {122}
}

@article{pomcp_silver2010monte,
  title={Monte-Carlo planning in large POMDPs},
  author={Silver, David and Veness, Joel},
  journal={Advances in neural information processing systems},
  volume={23},
  year={2010}
}

@article{rmax_brafman2002r,
  title={R-max-a general polynomial time algorithm for near-optimal reinforcement learning},
  author={Brafman, Ronen I and Tennenholtz, Moshe},
  journal={Journal of Machine Learning Research},
  volume={3},
  number={Oct},
  pages={213--231},
  year={2002}
}

@book{boyd2004convex,
  title={Convex Optimization},
  author={Boyd, Stephen and Vandenberghe, Lieven},
  year={2004},
  publisher={Cambridge University Press}
}

@book{nocedal2006numerical,
  title={Numerical Optimization},
  author={Nocedal, Jorge and Wright, Stephen},
  year={2006},
  publisher={Springer Science \& Business Media}
}

@article{agha2014firm,
  title={FIRM: Sampling-based feedback motion-planning under motion uncertainty and imperfect measurements},
  author={Agha-Mohammadi, Ali-Akbar and Chakravorty, Suman and Amato, Nancy M},
  journal={The International Journal of Robotics Research},
  volume={33},
  number={2},
  pages={268--304},
  year={2014},
  publisher={SAGE Publications Sage UK: London, England}
}

@article{lee2020magic,
  title={MAGIC: Learning macro-actions for online POMDP planning},
  author={Lee, Yiyuan and Cai, Panpan and Hsu, David},
  journal={arXiv preprint arXiv:2011.03813},
  year={2020}
}

@inproceedings{agha2014health,
  title={Health aware stochastic planning for persistent package delivery missions using quadrotors},
  author={Agha-mohammadi, Ali-akbar and Ure, N Kemal and How, Jonathan P and Vian, John},
  booktitle={2014 IEEE/RSJ International Conference on Intelligent Robots and Systems},
  pages={3389--3396},
  year={2014},
  organization={IEEE}
}

@inproceedings{phan2020covernet,
  title={Covernet: Multimodal behavior prediction using trajectory sets},
  author={Phan-Minh, Tung and Grigore, Elena Corina and Boulton, Freddy A and Beijbom, Oscar and Wolff, Eric M},
  booktitle={Proceedings of the IEEE/CVF conference on computer vision and pattern recognition},
  pages={14074--14083},
  year={2020}
}

@article{kurniawati2011motion,
  title={Motion planning under uncertainty for robotic tasks with long time horizons},
  author={Kurniawati, Hanna and Du, Yanzhu and Hsu, David and Lee, Wee Sun},
  journal={The International Journal of Robotics Research},
  volume={30},
  number={3},
  pages={308--323},
  year={2011},
  publisher={SAGE Publications Sage UK: London, England}
}

@article{liang2024scaling,
  title={Scaling Long-Horizon Online POMDP Planning via Rapid State Space Sampling},
  author={Liang, Yuanchu and Kim, Edward and Thomason, Wil and Kingston, Zachary and Kurniawati, Hanna and Kavraki, Lydia E},
  journal={arXiv preprint arXiv:2411.07032},
  year={2024}
}

@article{kim2023reference,
  title={Reference-based POMDPs},
  author={Kim, Edward and Karunanayake, Yohan and Kurniawati, Hanna},
  journal={Advances in Neural Information Processing Systems},
  volume={36},
  pages={40659--40675},
  year={2023}
}

@article{somani2013despot,
  title={DESPOT: Online POMDP planning with regularization},
  author={Somani, Adhiraj and Ye, Nan and Hsu, David and Lee, Wee Sun},
  journal={Advances in neural information processing systems},
  volume={26},
  year={2013}
}

@article{zyner2019naturalistic,
  title={Naturalistic driver intention and path prediction using recurrent neural networks},
  author={Zyner, Alex and Worrall, Stewart and Nebot, Eduardo},
  journal={IEEE transactions on intelligent transportation systems},
  volume={21},
  number={4},
  pages={1584--1594},
  year={2019},
  publisher={IEEE}
}

@article{liang2026think,
  title={Think fast and far: Long-horizon online POMDP planning via rapid state sampling},
  author={Liang, Yuanchu and Kim, Edward and Knoll, J Arden and Thomason, Wil and Kingston, Zachary and Kavraki, Lydia E and Kurniawati, Hanna},
  journal={The International Journal of Robotics Research},
  pages={02783649261453546},
  year={2026},
  publisher={SAGE Publications Sage UK: London, England}
}

@inproceedings{pineau2003point,
  title={Point-based value iteration: An anytime algorithm for POMDPs},
  author={Pineau, Joelle and Gordon, Geoff and Thrun, Sebastian and others},
  booktitle={Ijcai},
  volume={3},
  pages={1025--1032},
  year={2003}
}

@inproceedings{kurniawati2008sarsop,
  title={Sarsop: Efficient point-based pomdp planning by approximating optimally reachable belief spaces.},
  author={Kurniawati, Hanna and Hsu, David and Lee, Wee Sun and others},
  booktitle={Robotics: Science and systems},
  volume={2008},
  year={2008},
  organization={Zurich, Switzerland}
}

\appendix
\onecolumn
\section{Background on the Augmented Lagrangian and ADMM Methods} \label{sec:background_constrained_opt}
\revise{This appendix provides background on the Augmented Lagrangian and ADMM methods, focusing on the aspects most relevant to this work.}

\revise{The first section presents the Augmented Lagrangian method for problems with inequality constraints, using the ``centered'' update rule described in~\citep{aula_1}. This treatment of inequality constraints corresponds to the ``unconstrained formulation'' in Section~17.4 of~\citep{nocedal2006numerical}.}

\revise{The second section introduces the consensus optimization variant of ADMM, as described in Chapter~7 of~\citep{ADMM}.}

\revise{ 
\subsection{Augmented Lagrangian Method (AuLa)} \label{sec:background_aula}
The Augmented Lagrangian method addresses constrained optimization problems of the standard form:
\begin{subequations}
\begin{align*}  
  \min_{z} \quad & c(z),\\
  \text{s.t.} \quad & g(z) \leq 0,\\
  				    & h(z) = 0,
\end{align*}
\end{subequations}
where $z \in \mathbb{R}^n$ is the optimization variable, $c: \mathbb{R}^n \rightarrow \mathbb{R}$ is an objective function, $g: \mathbb{R}^n \rightarrow \mathbb{R}^{d_g}$ is an inequality constraint function, and $h: \mathbb{R}^n \rightarrow \mathbb{R}^{d_h}$ is an equality constraint. The symbols $d_g$ and $d_h$ denote the dimensionality of the inequality and equality constraints respectively.
We assume the functions $c$, $g$ and $h$ to be differentiable but not necessarily convex or unimodal.}

\revise{In the context of trajectory optimization, the decision variable \( z \) typically represents a vector of controls, states, configurations, or a combination thereof, depending on how the problem is transcribed into a finite-dimensional optimization. Inequality constraints can represent, for example, collision avoidance, encoding the requirement that the distance to obstacles remains above a given threshold. Equality constraints can be used to enforce non-holonomic kinematics.
}

\medskip
\noindent \revise{\textbf{Unconstrained objective}: The method consists of optimizing a sequence of unconstrained objectives, known as Augmented Lagrangians, and defined as,
\begin{align*} 
  L_{\mu, \nu}(z, \lambda, \kappa) =
  c(z) + \lambda \ . \ g(z) + \frac{\mu}{2} \norm{[g(z) > 0]\odot g(z)}^2 \nonumber + \kappa \ . \ h(z) + \frac{\nu}{2} \norm{h(z)}^2, \nonumber\\
\end{align*}
where $\lambda \in \mathbb{R}^{d_g}$, and $\kappa \in \mathbb{R}^{d_h}$ are called Lagrange multipliers, and where $[g(z) > 0] \in \{0, 1\}^{d_g}$ is a binary vector indicating where the inequality constraint is violated, and the operator $\odot$ denotes the element-wise multiplication. The variables $\mu$ and $\nu$ are positive constants that weight the square penalty terms penalizing constraint violation. These penalty terms are commonly referred to as the ``augmentations''. They are introduced to improve the numerical stability and convergence behavior. The unconstrained objective function $L_{0, 0}$  obtained by omitting the augmentation terms (i.e., $\mu=\nu=0$), corresponds to the standard Lagrangian.}

\medskip
\noindent \revise{\textbf{Algorithm}: The Augmented Lagrangian algorithm proceeds by iterating the following steps:
\begin{subequations} 
\begin{align} 
  z^{k+1} &= \min_{z} L_{\mu, \nu} (z, \lambda^{k}, \kappa^{k}),\label{eq:back:L_minimization}\\
  \lambda^{k+1} &= \max(0, \lambda^k + \mu g(z^{k+1})),\label{eq:back:lambda_update}\\
  \kappa^{k+1} &= \kappa^k + \nu h(z^{k+1}). \label{eq:back:kappa_update}
\end{align}
\end{subequations} 
The first line~\eqref{eq:back:L_minimization} corresponds to the unconstrained minimization of the Augmented Lagrangian over $z$ with fixed $(\lambda^k, \kappa^k)$. Lines ~\eqref{eq:back:lambda_update} and ~\eqref{eq:back:kappa_update} adjust the Lagrange multipliers in response to constraint violations. These updates play a central role in the algorithm, as they penalize violations and guide the optimization process toward contraint satisfaction in subsequent iterations.}

\medskip
\noindent \revise{\textbf{Stopping criterion}: Iterations of the algorithm are performed until the following conditions are satisfied:
\begin{subequations} 
\begin{alignat}{3}
 &\norm{[g(z^k) > 0] \odot g(z^k)} && \leq \epsilon^{pri}, \label{eq:back:feasability_g}\\
 &\norm{h(z^k)} \ \ \ &&\leq \epsilon^{pri}, \label{eq:back:feasability_h}\\
 &\norm{z^{k} - z^{k-1}}\ \ \ && \leq \epsilon^{opt}, \label{eq:back:stability_z}
\end{alignat}
\end{subequations} 
where $\epsilon^{pri}$ and $\epsilon^{opt}$ are threshold values. Conditions~\eqref{eq:back:feasability_g} and ~\eqref{eq:back:feasability_h} ensure that the constraints violation are below a threshold. Line~\eqref{eq:back:stability_z} checks whether the solution has stabilized, indicating convergence.}

\medskip
\noindent \revise{\textbf{Convergence properties}:
Provided that the constraint functions satisfy certain regularity conditions---commonly referred to as constraint qualifications---such as Slater’s condition (in the convex case), Linear Independence Constraint Qualification (LICQ), or Mangasarian-Fromovitz Constraint Qualification (MFCQ) \citep{nocedal2006numerical}, the algorithm is guaranteed to converge to a solution. This corresponds to the global minimum for convex problems, while only local optimality can be ensured in the non-convex case. Specifically, it converges to a stationary point $(z^{\star}, \lambda^{\star}, \kappa^{\star})$ of the Lagrangian, at which the Karush-Kuhn-Tucker (KKT) conditions are satisfied:
\begin{subequations} 
\begin{align} 
  \nabla_{z} L_{0, 0}(z^{\star}, \lambda^{\star}, \kappa^{\star}) &= 0, && \text{(stationarity)} \label{eq:back:stationarity}\\
  g(z^{\star}) &\leq 0, && \text{(primal feasability)}\label{eq:back:primal_feasibility_ineq}\\
  h(z^{\star}) &= 0, && \text{(primal feasability)} \label{eq:back:primal_feasibility_eq}\\
  \lambda^{\star} &\geq 0, &&\text{(dual feasability)}\label{eq:back:dual_feasibility}\\
  \lambda^{\star}\ .\ g(z^{\star}) &= 0. &&\text{(complementary slackness)} \label{eq:back:complementary_slackness_ineq}
\end{align}
\end{subequations} 
The first line~\eqref{eq:back:stationarity} expresses that $(z^{\star}, \lambda^{\star}, \kappa^{\star})$ is a stationary point of the Lagrangian. \eqref{eq:back:primal_feasibility_ineq}, \eqref{eq:back:primal_feasibility_eq} and \eqref{eq:back:dual_feasibility} are feasibility conditions. The lines \eqref{eq:back:complementary_slackness_ineq} state that the Langrange multiplier $\lambda$ is zero where the constraints is inactive.}

\revise{The stationarity condition indicates that the sum of the gradients $\nabla_z c(z^{\star}) + \lambda^{\star} \nabla_z g(z^{\star}) + \kappa^{\star} \nabla_z h(z^{\star})$ cancel out. A geometric perspective provides a perhaps more intuitive way to understand the optimality conditions. By interpreting the functions $c$, $g$ and $h$ as potential fields creating ``forces'', as shown in Fig.~\ref{fig:back:kkt}, one can view the optimal solution as the point where those forces reach an equilibrium.}

\begin{figure}[!htb]
 \center{\includegraphics[width=0.6\textwidth]{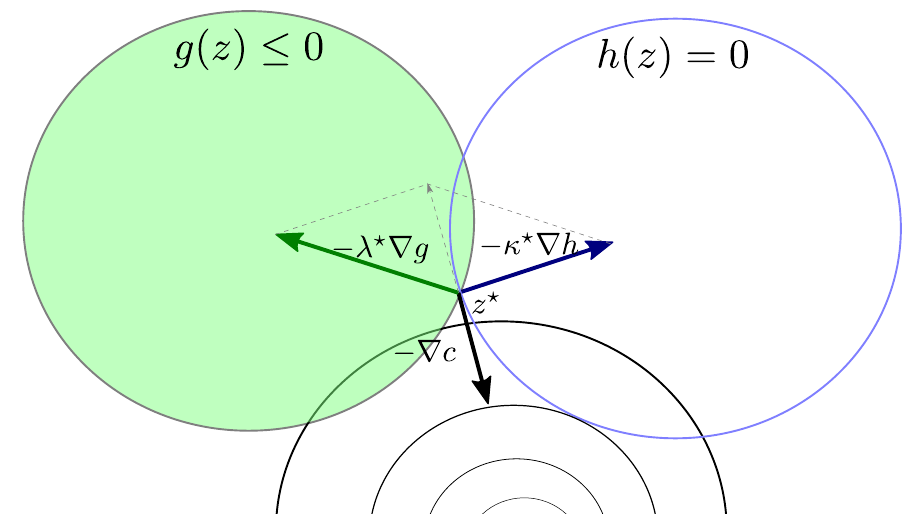}}
 \caption{\revise{Geometrical View of the KKT conditions: The constraint gradients weighted by the Lagrange multipliers cancel out the gradient of the cost function $c$. The green area indicates where the inequality constraints $g$ is satisfied, while the equality constraints $h$ is satisfied along the blue contour.}}
 \label{fig:back:kkt}
\end{figure}

\revise{\subsection{Consensus Form of the  Alternating Direction Method of Multipliers (ADMM)} \label{sec:background_admm}
The Consensus ADMM algorithm solves optimization problems of the form:
\begin{align*}
  \min_{\tilde{z}, z_i} \quad &{\sum_i{c_i(z_i)}},\\
 \text{s.t.} \quad & P z_i = \tilde{z}, 
\end{align*}  
where the variables $z_i \in \mathbb{R}^n$, for $i \in [1, N]$, along with the variable $\tilde{z} \in \mathbb{R}^m$ are the optimization variables. The matrix $P \in \{0, 1\}^{m\times n}$ is a selection matrix which extracts the part of the optimization variables $z_i$ subject to the consensus equality constraint. We have $m \leq n$ indicating that consensus is only required on a subset of the optimization variables. In addition, each row of $P$ contains exactly one entry equal to one.}

\revise{This type of formulation arises from the decomposition of an global optimization problem of the form $\min_z \sum_i c_i(z)$, where the functions $c_i$ operate on distinct but overlapping subsets of the global optimization variable $z$. In this paper, this structure results from the trajectory-tree formulation, as described in Section~\ref{sec:mpc_problem_formulation}, with distinct branches diverging from a common trunk. Splitting the optimization variable into distinct component $z_i$ and adding the equality contraint may initially seem counterintuitive. However, it presents the key advantage of enabling the optimization to be decomposed into smaller subproblems. It is particularly advantageous if the core subpart shared between all optimization variables is small, i.e. $m \ll n$.}

\medskip
\noindent \revise{\textbf{Unconstrained objective}: Similarly to the Augmented Lagrangian method, the ADMM method performs several interations of unconstrained minimizations based on the following objective:
\begin{align*} 
  L_{\rho}(z_1, \dots, z_i, \dots, z_N, \tilde{z}, \eta_1, \dots, \eta_i, \dots, \eta_N) =
\sum_i{c_i(z_i) + \eta_i \ . \ (Pz_i - \tilde{z}) + \frac{\rho}{2} \norm{Pz_i - \tilde{z}}^2}, \nonumber
\end{align*}
where $\eta_i \in \mathbb{R}^n$ are the Lagrange multipliers, and $\rho$ is a positive constant. This optimization objective arises directly from the application of the Augmented Lagrangian method to the consensus ADMM problem. It can be expressed as the sum of $N$ components $L_{\rho}(z_1, \dots , z_i, \dots, z_N, \tilde{z}, \eta_1, \dots, \eta_i, \dots, \eta_N) = \sum_i{L_{i, \rho}(z_i, \tilde{z}, \eta_i)}$, where each component is defined as:
\begin{align*}
 L_{i, \rho}(z_i, \tilde{z}, \eta_i) = c_i(z_i) + \eta_i \ . \ (Pz_i - \tilde{z}) + \frac{\rho}{2} \norm{Pz_i - \tilde{z}}^2.
\end{align*}
\medskip
\noindent \textbf{Algorithm}: The Consensus ADMM algorithm initializes the Lagrange multipliers to zero, i.e., $\eta_i = 0$ for $i \in [1, N]$, and then iterates the following steps:
\begin{subequations} 
\begin{align} 
  z_i^{k+1} &:= \min_{z_i} L_{i, \rho} (z_i, \tilde{z}^k , \eta_i^{k}),\label{eq:back:admm_unconstrained}\\
  \tilde{z}^{k+1} &:= \frac{1}{N}\sum_{i}{Pz^{k+1}_i}, \label{eq:back:admm_update_consensus}\\
  \eta_i^{k+1} &:= \eta_i^k + \rho (Pz_i^{k+1} - \tilde{z}^{k+1}) \label{eq:back:admm_update_dual}.
\end{align}
\end{subequations} 
The line~\eqref{eq:back:admm_unconstrained} corresponds to $N$ unconstrained minimization steps, one for each $i \in [1, N]$. These minimizations can be performed in parallel. In contrast, line~\eqref{eq:back:admm_update_consensus} is a centralized step which updates the consensus variable $\tilde{z}$ by averaging the results of the unconstrained minimizations. Finally, line~\eqref{eq:back:admm_update_dual} updates the Lagrange multipiers.}

\medskip
\noindent \revise{\textbf{Stopping criterion}: Iterations are performed until the following conditions are met:
\begin{subequations} \label{eq:termination-criterion}
\begin{alignat}{2}
 &\norm{Pz^k_{i} - \tilde{z}^k} && \leq \xi^{pri}, \label{eq:back:admm-primal} \\
 &\norm{\tilde{z}^{k} - \tilde{z}^{k-1}}\ \ \ && \leq \xi^{dual}, \label{eq:back:admm-admm-dual}
\end{alignat}
\end{subequations}
where $\xi^{pri}$ and $\xi^{dual}$ are threshold values. Condition~\eqref{eq:back:admm-primal} ensures that a consensus has been reached, while line~\eqref{eq:back:admm-admm-dual} verifies that the optimization of the consensus variable has stabilized.}

\medskip
\noindent \revise{\textbf{Convergence Properties}:
Similar to the Augmented Lagrangian method, convergence to the global minimum is guaranteed if the functions $c_i$ are convex. Otherwise, convergence may only reach a local minimum which satisfies the KKT conditions with respect to the  Lagrangian $L_{0}$.}

\section{Convergence and Optimality of the Distritbuted Augmented Lagrangian Algorithm (D-AuLa)}
\label{sec:proof}
This section provides a proof of convergence and optimality of D-AuLa under the following assumptions:
\begin{itemize}
\item Presence of equality constraints, but no inequality contraints. Inequality constraints are discussed in Section~\ref{sec:inequality_constraints}.
\item Convexity of the cost function and affine equality constraint.  Non-convex problems are discussed in Section~\ref{sec:nonconvex}.
\item Full overlap of the optimization problems. This implies that the sub-optimization variables have the same dimension. The proof can be extended to the case without full overlap as discussed in Section~\ref{sec:no_full_overlap}.
\end{itemize}

We formally define the optimization problem, under those assumptions. Let $z_i \in \mathbb{R}^{n}, i \in [1..N]$, $f_i:\mathbb{R}^{n} \rightarrow \mathbb{R}, i \in [1..N]$ be $N$ differentiable and convex cost functions, and $h_i:\mathbb{R}^{n} \rightarrow \mathbb{R}^m, , i \in [1..N]$ be $N$ affine vector-valued functions. We call the triple $(c_i, h_i, z_i)$ a subproblem.
\begin{definition} 
We define the \textit{Distributed Augmented Lagrangian} (D-AuLa) method as:

\begin{enumerate}[(1)]
\item Optimization objective:
\begin{subequations} \label{eq:distributed_aula_objective}
\begin{align}   
  \min_{\tilde{z}, z_i} \quad &{
  \sum_i{c_i(z_i)} \label{eq:consensus_admm_cost}
  },\\
 \text{s.t.} \quad
 & h_i(z_i) = 0, \label{eq:consensus_h_constraint}\\
 & z_i = \tilde{z}, \label{eq:consensus_z_admm_constraint}
\end{align}
\end{subequations}

where $\tilde{z} \in \mathbb{R}^n$ is an auxilary variable use to define an equality constraints stating that all $z_i$ should reach a consensus.
\item Augmented Lagrangian of a subproblem: 
\begin{align} 
  L_{i, \nu, \rho}(\tilde{z}, z_i, \kappa_i, \eta_i) =\ &c_i(z_i) \label{eq:dec_lagrangian} \\
    &+ \kappa_i \ . \ h_i(z_i) + \frac{\nu}{2} \norm{h_i(z_i)}^2 \nonumber\\
    &+ \eta_i \ . \ (z_i - \tilde{z}) + \frac{\rho}{2} \norm{z_i - \tilde{z}}^2, \nonumber
\end{align}
where $\kappa_i \in \mathbb{R}^{m_i}$, $\eta_i \in \mathbb{R}^n$ are the dual variables corresponding to the equality constraints~\eqref{eq:consensus_h_constraint} and consensus constraint ~\eqref{eq:consensus_z_admm_constraint} respectively. The constants $\nu$ and $\rho$ are positive real constants.

\item Algorithm:
\begin{subequations} \label{eq:dec_algo}
\begin{align}  
{z^{k+1}_{i}} &:= \min_{z_i^k} L_{i, \nu, \rho}(\tilde{z}^k, z_i^k, \kappa_i^k, \eta_i^k) \label{eq:dec-algo-newton}\\
\kappa_i^{k+1} &:= \kappa_i^{k} + \nu h_i(z_i^{k+1}) \label{eq:dec-algo-dual-eq} \\
\tilde{z}^{k+1} &:= \frac{1}{N}\sum_i{z_i^{k+1}} \label{eq:dec-z_update} \\
\eta_i^{k+1} &:= \eta_i^{k} + \rho (z_i^{k+1} - \tilde{z}^{k+1}) \label{eq:dec-dual-admm}
\end{align}
\end{subequations}

\end{enumerate}

\end{definition} 

At each iteration, the steps~\eqref{eq:dec-algo-newton}, \eqref{eq:dec-algo-dual-eq} and \eqref{eq:dec-dual-admm} are applied for each of subproblem (for each $i \in [1, N]$). These sub-steps are parallelizable since they are fully independent from each other. The step~\eqref{eq:dec-z_update} is a centralized step averaging the results of the unconstrained minimizations~\eqref{eq:dec-algo-newton}. Iterations are performed until convergence.

\subsection{Proof Overview}
The proof proceeds in two steps:
\begin{itemize}
\item We first define an alternative problem formulatiom \textit{Constrained ADMM} (C-ADMM), and show its convergence and optimality in Section~\ref{sec:constrained_admm_proof}. This algorithm is closer to the ADMM standard form. It decomposes an optimization problem in only two subproblems, and performs the unconstrained minimization steps in sequence. It differs from the standard form, since one of the subproblems has an additional equality constrains. This builds upon the proof provided in \citep{ADMM} for the standard form, and extends it to cover the additional equality constraint. 
\item Second, we show that the D-AuLa algorithm can be rewritten as a special case of the C-ADMM.
\end{itemize}
Taken together, those two steps prove convergence and optimality of D-AuLa. 
\newline

For readers seeking a high-level understanding of the connection between D-AuLa and the standard ADMM framework, we recommend  Section~\ref{sec:equivalence}. Those interested in the main arguments underlying the C-ADMM convergence proof can find sufficient details in Section~\ref{sec:proof_assuming_inequalities}. The Sections~\ref{sec:inequalities_proof_1}, ~\ref{sec:inequalities_proof_2} and ~\ref{sec:inequalities_proof_3} are the detailed steps and algebraic manipulations for readers interested in the proof in its full depth.

\subsection{Constrained ADMM Algorithm (C-ADMM)} \label{constrained_admm}
Let $x \in \mathbb{R}^n$, $y \in \mathbb{R}^n$, $f:\mathbb{R}^{n} \rightarrow \mathbb{R}$ be a differentiable and convex cost function, $g:\mathbb{R}^{n} \rightarrow \mathbb{R}$ be a function proper, closed and convex, and $h:\mathbb{R}^{n} \rightarrow \mathbb{R}^m$ be an affine vector-valued function.

\begin{definition}
We define the \textit{Constrained ADMM} (C-ADMM) method as:
\begin{enumerate}[(1)]
\item Optimization objective:
\begin{subequations}
\begin{align}  
  \min_{x, y} \quad & f(x) + g(y),\\
 \text{s.t.}  \quad & h(x) = 0, \label{eq:constrained_admm_eq_constraint} \\
                    & x = y. \label{eq:constrained_admm_consensus_constraint}
\end{align}
\end{subequations}
\item Augmented Lagrangian:
\begin{align}  
  L_{\nu, \rho}(x, y, \kappa, \eta) &= 
  f(x) + g(y)\\
    &+ \kappa \ . \ h(x) + \frac{\nu}{2} \norm{h(x)}^2 \nonumber\\
    &+ \eta \ . \ (x - y) + \frac{\rho}{2} \norm{x - y}^2 \nonumber,
\end{align}
where $\kappa \in \mathbb{R}^m$, $\eta \in \mathbb{R}^n$ are the dual variables corresponding to the equality constraint~\eqref{eq:constrained_admm_eq_constraint} and consensus constraint~\eqref{eq:constrained_admm_consensus_constraint} respectively. The constants $\nu$ and $\rho$ are positive real constants.
\item Algorithm:
\begin{subequations}
\begin{align}  
  x^{k+1} &:= \min_{x} L_{\nu, \rho}(x, y^{k}, \kappa^k, \eta^k)\\
  y^{k+1} &:= \min_{y} L_{\nu, \rho}(x^{k+1}, y, \kappa^k, \eta^k)\\
  \kappa^{k+1} &:= \kappa^{k} + \nu h(x^{k+1}) \label{eq:constrained_admm_y_min}\\
  \eta^{k+1} &:= \eta^{k} + \rho(x^{k+1} - y^{k+1})
\end{align}
\end{subequations}
\end{enumerate}
\end{definition}

\noindent We note that the optimization objective is not symmetric:
\begin{itemize}
\item The equality constraint applies only on $x$.
\item The function $f$ needs to be differentiable, whereas $g$ need not be.
\end{itemize} 
The reason for this dissymmetry will become clearer in Section~\ref{sec:equivalence}: $f$ and $h$ will be used to aggregate the cost and constraint functions over the different subproblems. The function $g$ will potentially not be continuous. It will be used as an indicator function taking values in $\{0, +\infty\}$ to enforce structure on $x$ and $y$.

\begin{theorem} \label{th:theorem_1}
There exists an optimal solution $(x^{\star},y^{\star})$ and corresponding Lagrange multipliers $(\kappa^{\star},\eta^{\star})$ such that $(x^{\star},y^{\star}, \kappa^{\star},\eta^{\star})$ is a saddle point of the Lagrangian $L_{0,0}$. Furthermore, the iterates generated by the C-ADMM algorithm satisfy:
\begin{align} 
 x^k \rightarrow x^{\star},\  y^k \rightarrow y^{\star},\  \kappa^k \rightarrow \kappa^{\star},\ \eta^k \rightarrow \eta^{\star},
\end{align}
ensuring convergence to the optimal solution.
\end{theorem}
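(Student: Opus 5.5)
The plan is to adapt the standard ADMM convergence proof of \citep{ADMM} (Appendix~A there), built on a Lyapunov function. The key observation is that the extra equality constraint $h(x)=0$ and its multiplier $\kappa$ can be carried through every step alongside $(x,\eta)$.

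\textbf{Existence of a saddle point.} First I would establish that a saddle point of $L_{0,0}$ exists. This cannot be proven in full generality; I would add the standing assumption, as in the standard ADMM proof, that the unaugmented Lagrangian $L_{0,0}$ has a saddle point. Equivalently, one can argue it from strong duality: $f$ and $g$ are convex, the constraints $h(x)=0$ and $x=y$ are affine, and the problem is assumed feasible, so a linear-constraint qualification holds and KKT multipliers $(\kappa^\star,\eta^\star)$ exist. The resulting conditions are
\[
0=\nabla f(x^\star)+\nabla h^\top\kappa^\star+\eta^\star,\quad 0\in\partial g(y^\star)-\eta^\star,\quad h(x^\star)=0,\quad x^\star=y^\star.
\]

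\textbf{Optimality of the iterates and the two inequalities.} Next I would write the optimality conditions of the two minimization steps using the updated duals. Since $h$ is affine with constant Jacobian $H$, the $x$-step gives
\[
0=\nabla f(x^{k+1})+H^\top\kappa^{k+1}+\eta^{k+1}+\rho(y^{k+1}-y^k).
\]
This uses $\kappa^k+\nu h(x^{k+1})=\kappa^{k+1}$ and $\eta^k+\rho(x^{k+1}-y^k)=\eta^{k+1}+\rho(y^{k+1}-y^k)$. The $y$-step gives $\eta^{k+1}\in\partial g(y^{k+1})$.

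Hence $x^{k+1}$ minimizes $f(x)+(\kappa^{k+1})^\top Hx+(\eta^{k+1}+\rho(y^{k+1}-y^k))^\top x$, and $y^{k+1}$ minimizes $g(y)-(\eta^{k+1})^\top y$. Comparing with $(x^\star,y^\star)$, and with the saddle inequality $p^\star\le L_{0,0}(x^{k+1},y^{k+1},\kappa^\star,\eta^\star)$, yields the analogues of the two key inequalities of \citep{ADMM}. The primal residuals are now $r^{k+1}=x^{k+1}-y^{k+1}$ and $s^{k+1}=h(x^{k+1})$:
\[
p^{k+1}-p^\star\le -(\kappa^{k+1})^\top s^{k+1}-(\eta^{k+1})^\top r^{k+1}-\rho(y^{k+1}-y^k)^\top(-r^{k+1}+y^{k+1}-y^\star),
\]
\[
p^\star-p^{k+1}\le (\kappa^\star)^\top s^{k+1}+(\eta^\star)^\top r^{k+1}.
\]

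\textbf{Lyapunov function.} Adding the two inequalities and substituting $\eta^{k+1}-\eta^k=\rho r^{k+1}$ and $\kappa^{k+1}-\kappa^k=\nu s^{k+1}$, I would aim for
\[
V^k=\tfrac1\rho\|\eta^k-\eta^\star\|^2+\tfrac1\nu\|\kappa^k-\kappa^\star\|^2+\rho\|y^k-y^\star\|^2,
\]
satisfying
\[
V^{k+1}\le V^k-\rho\|r^{k+1}\|^2-\nu\|s^{k+1}\|^2-\rho\|y^{k+1}-y^k\|^2 .
\]
The $\kappa$ terms behave exactly like the $\eta$ terms because the $\kappa$-update uses the same $x^{k+1}$. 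The cross term $2\rho (r^{k+1})^\top(y^{k+1}-y^k)\ge0$ is handled as in \citep{ADMM} via monotonicity of $\partial g$ at consecutive iterates.

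This bookkeeping is the main obstacle. The $\kappa$ contributions must be checked to complete the square without producing an uncontrolled cross term with $y^{k+1}-y^k$. My first step is to verify that $\kappa$ enters only through $s^{k+1}$, which is a function of $x^{k+1}$ alone, so no new coupling appears.

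\textbf{Convergence.} Telescoping shows that $r^k$, $s^k$ and $y^{k+1}-y^k$ are square-summable, hence tend to $0$, and that $V^k$ is bounded, so the iterates are bounded. Both inequalities then give $p^k\to p^\star$. Any limit point satisfies the KKT conditions above, so it is a saddle point. Taking $(x^\star,y^\star,\kappa^\star,\eta^\star)$ to be that limit point makes $V^k$ nonincreasing and convergent to $0$ along a subsequence, hence to $0$ overall. This gives convergence of the whole sequence $y^k,\kappa^k,\eta^k$, and $x^k=y^k+r^k\to x^\star$.

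One caveat: full-sequence convergence of $\kappa$ may require $H$ to have full row rank, or should be stated for the $H^\top\kappa$ component. I would note this as an assumption.
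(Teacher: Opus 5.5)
Your argument follows the paper's proof step for step. It uses the same Lyapunov function $V^k$ (Boyd et al.'s plus the term $\frac{1}{\nu}\|\kappa^k-\kappa^\star\|^2$) and the same lower bound from the saddle-point inequality. It derives the same upper bound by rewriting the first-order conditions of the two minimizations with the updated duals, and then telescopes in the same way.

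One sign needs fixing. Since $x^{k+1}-x^\star=r^{k+1}+(y^{k+1}-y^\star)$, the upper bound is
\[
p^{k+1}-p^\star\le -(\kappa^{k+1})^\top s^{k+1}-(\eta^{k+1})^\top r^{k+1}-\rho(y^{k+1}-y^k)^\top\bigl(r^{k+1}+(y^{k+1}-y^\star)\bigr).
\]
What you wrote is Boyd et al.'s (A.2) with $B=I$, whereas the constraint $x-y=0$ has $B=-I$. Taken literally, your version completes the square as $\|r^{k+1}-(y^{k+1}-y^k)\|^2$. That leaves $+2\rho(r^{k+1})^\top(y^{k+1}-y^k)$ on the right-hand side, and monotonicity of $\partial g$ cannot remove it. With the correct sign you get exactly the decrease you state.

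Your last step goes beyond the paper. The paper concludes only $h(x^k)\to0$, $r^k\to0$, $y^{k+1}-y^k\to0$ and $p^k\to p^\star$. It never establishes the iterate convergence $x^k\to x^\star,\dots,\eta^k\to\eta^\star$ that the theorem asserts. Your argument supplies exactly this missing piece:
\begin{itemize}
\item the iterates are bounded;
\item any limit point satisfies the KKT system, by continuity of $\nabla f$ and closedness of the graph of $\partial g$, and is therefore a saddle point;
\item re-anchoring $V^k$ at that limit point makes the nonincreasing $V^k$ tend to $0$.
\end{itemize}

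Because of this re-anchoring, your caveat on the rank of $H$ is unnecessary. $V^k$ controls $\|\kappa^k-\kappa^\star\|$ directly, so $\kappa^k$ converges to the multiplier of the limiting saddle point even when multipliers are non-unique.

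Making the existence of a saddle point a standing assumption is also more accurate than the paper's claim that existence and uniqueness of the optimum follow from convexity and affine constraints alone. That claim fails on three counts:
\begin{itemize}
\item attainment of the minimum is not automatic;
\item uniqueness would require strict convexity;
\item for extended-valued $g$, strong duality additionally requires a relative-interior condition.
\end{itemize}
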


\subsection{Proof of Theorem~\ref{th:theorem_1}} \label{sec:constrained_admm_proof}
The existence of a unique optimum $(x^{\star}, y^{\star})$, along with Lagrange multipliers $(\kappa^{\star}, \eta^{\star})$ forming a saddle point of the Lagrangian follows directly from the assumption of convex cost function and affine equality constraints~\citep[Chapter 5]{boyd2004convex}. The rest of the proof focuses on showing that the algorithm iterates converge towards this optimal solution.

To this end, we first introduce notations and three inequalities that will be used throughout the proof. We note $r^k$ the deviation between the two optimization variables:
\begin{subequations}
\begin{align}  
  r^k = x^k - y^k \nonumber
\end{align}
\end{subequations}

\noindent We note $p^k$ the sum of the two costs functions:
\begin{equation}
  p^k = f(x^k) + g(y^k) \nonumber
\end{equation}

\begin{definition} \label{def:vk} We define $V^k$ as a weighted sum of the squared distance from the optimum for the dual variables $\kappa$ and $\eta$ and the second primal variable $y$:
\begin{equation}  
  V^k = \frac{1}{\nu}\norm{\kappa^k-\kappa^{\star}}^2
+ \frac{1}{\rho}\norm{\eta^k-\eta^{\star}}^2
+ \rho \norm{y^k - y^{\star}}^2
\end{equation}
\end{definition}

The value of $V^k$ is not known during the algorithm execution since it depends on the unknown variables $x^{\star}, y^{\star}, \kappa^{\star}, \eta^{\star}$, but this value will be used to articulate the proof. This definition of $V^k$ extends the one provided in \citep{ADMM} with the additional term~$\frac{1}{\nu}\norm{\kappa^k-\kappa^{\star}}^2$.

\begin{proposition}
The three following inequalities hold throughout the algorithm execution:
\begin{enumerate}[(1)]
\item Optimum deviation lower bound:
\begin{align} \label{eq:a_3}
p^{\star} - p^{k+1} \leq \kappa^{\star} \ .\ h(x^{k+1}) + \eta^{\star} \ . \ r^{k+1}  
\end{align}

\item Optimum deviation upper bound:
\begin{align}  \label{eq:a_2}
p^{k+1} - p^{\star} \leq - \kappa^{k+1} \ .\ h(x^{k+1}) - \eta^{k+1} \ .\ r^{k+1} + \rho (y^{k+1} - y^{k}) \ .\ (-r^{k+1} - (y^{k+1} - y^{\star})) 
\end{align}

\item Value decay:
\begin{align}  \label{eq:a_1}
V^{k+1} - V^k\leq - \nu \norm{h(x^{k+1})}^2 - \rho \norm{r^{k+1}}^2 - \rho \norm{y^{k+1} - y^k}^2
\end{align}
\end{enumerate}
\end{proposition}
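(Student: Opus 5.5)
The plan is to adapt the three-inequality argument from the convergence proof of standard ADMM in \citep{ADMM} (its Appendix~A). The extra multiplier $\kappa$ will be carried along, and the fact that $h$ is affine will absorb it. Write $h(x)=Ax-b$. Then $h(x^{\star})=0$ and $x^{\star}=y^{\star}$ at the optimum, and the dual updates give
\[
h(x^{k+1})=\tfrac{1}{\nu}(\kappa^{k+1}-\kappa^k), \qquad r^{k+1}=\tfrac{1}{\rho}(\eta^{k+1}-\eta^k).
\]

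\textbf{Inequality (1).} This should follow directly from the saddle-point property in Theorem~\ref{th:theorem_1}. We have
\[
L_{0,0}(x^{\star},y^{\star},\kappa^{\star},\eta^{\star})\le L_{0,0}(x^{k+1},y^{k+1},\kappa^{\star},\eta^{\star}).
\]
The left-hand side equals $p^{\star}$ because $h(x^{\star})=0$ and $x^{\star}-y^{\star}=0$. The right-hand side equals $p^{k+1}+\kappa^{\star}\cdot h(x^{k+1})+\eta^{\star}\cdot r^{k+1}$. Rearranging gives (1).

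\textbf{Inequality (2).} First rewrite the optimality conditions of the two minimization steps in terms of the updated multipliers.

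For the $x$-step, $0\in\partial f(x^{k+1})+A^{T}(\kappa^k+\nu h(x^{k+1}))+\eta^k+\rho(x^{k+1}-y^k)$. Using the $\kappa$-update and $\eta^{k+1}=\eta^k+\rho(x^{k+1}-y^{k+1})$, this becomes
\[
0\in\partial f(x^{k+1})+A^{T}\kappa^{k+1}+\eta^{k+1}+\rho(y^{k+1}-y^k).
\]
Hence $x^{k+1}$ minimizes the convex function $f(x)+\kappa^{k+1}\cdot h(x)+(\eta^{k+1}+\rho(y^{k+1}-y^k))\cdot x$.

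For the $y$-step, the condition $0\in\partial g(y^{k+1})-\eta^k-\rho(x^{k+1}-y^{k+1})$ simplifies to $0\in\partial g(y^{k+1})-\eta^{k+1}$. Hence $y^{k+1}$ minimizes $g(y)-\eta^{k+1}\cdot y$.

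Compare each minimizer's value against the point $x^{\star}$ (respectively $y^{\star}$) and add the two inequalities. Using $h(x^{\star})=0$, $x^{\star}=y^{\star}$ and $x^{k+1}-x^{\star}=r^{k+1}+(y^{k+1}-y^{\star})$, the result is exactly (2).

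\textbf{Inequality (3).} Add (1) and (2) and multiply by $2$:
\[
2(\kappa^{k+1}-\kappa^{\star})\cdot h(x^{k+1})+2(\eta^{k+1}-\eta^{\star})\cdot r^{k+1}+2\rho(y^{k+1}-y^k)\cdot\bigl(r^{k+1}+(y^{k+1}-y^{\star})\bigr)\le 0 .
\]
Substitute the two multiplier-difference expressions above and apply the polarization identity $2a\cdot(a-b)=\norm{a}^2-\norm{b}^2+\norm{a-b}^2$ to each term.

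The $\kappa$ term becomes
\[
\tfrac{1}{\nu}\bigl(\norm{\kappa^{k+1}-\kappa^{\star}}^2-\norm{\kappa^k-\kappa^{\star}}^2\bigr)+\nu\norm{h(x^{k+1})}^2 .
\]
This term is fully decoupled from the rest and is new relative to \citep{ADMM}.

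The $\eta$ and $y$ terms are then regrouped exactly as in \citep{ADMM}. Write $y^{k+1}-y^{\star}=(y^{k+1}-y^k)+(y^k-y^{\star})$. This produces $\rho\bigl(\norm{y^{k+1}-y^{\star}}^2-\norm{y^k-y^{\star}}^2\bigr)$, the term $\rho\norm{r^{k+1}}^2+\rho\norm{y^{k+1}-y^k}^2$, and a leftover cross term $2\rho\, r^{k+1}\cdot(y^{k+1}-y^k)$.

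The main obstacle is controlling this leftover cross term; the rest is bookkeeping. I would bound it using monotonicity of $\partial g$. From the $y$-step reformulation, $\eta^{k+1}\in\partial g(y^{k+1})$ and $\eta^k\in\partial g(y^k)$, so
\[
(\eta^{k+1}-\eta^k)\cdot(y^{k+1}-y^k)\ge 0, \quad\text{that is,}\quad \rho\, r^{k+1}\cdot(y^{k+1}-y^k)\ge 0 .
\]
Dropping this nonnegative term gives the decay inequality (3) with $V^k$ as in Definition~\ref{def:vk}. It is worth re-checking at this point that the regrouping leaves the cross term with the correct sign, and that the $\kappa$ terms do not contaminate it. The latter relies on $h$ being affine, so that $\nabla h=A^{T}$ is constant and the $\kappa$ contribution cancels cleanly inside the $x$-step optimality condition.
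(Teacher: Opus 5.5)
Your proposal is correct and follows the paper's proof essentially step for step. Inequality (1) comes from the saddle-point inequality for $L_{0,0}$. Inequality (2) comes from rewriting the $x$- and $y$-step optimality conditions in terms of $\kappa^{k+1},\eta^{k+1}$ and comparing each minimizer against $x^{\star}=y^{\star}$. Inequality (3) comes from adding (1) and (2), completing the squares, and discarding the cross term $2\rho\, r^{k+1}\cdot(y^{k+1}-y^k)\ge 0$ via monotonicity of $\partial g$.

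Your polarization-identity bookkeeping and your explicit appeal to the affineness of $h$ (where the paper says ``by integration'') are cleaner renderings of the same computation. One caveat applies equally to your version and the paper's: the monotonicity step uses $\eta^k\in\partial g(y^k)$, which only holds for $k\ge 1$ or under a compatible initialization.
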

\noindent These inequalities are proven in Sections~\ref{sec:inequalities_proof_1}, ~\ref{sec:inequalities_proof_2} and ~\ref{sec:inequalities_proof_3}. They extend the inequalities refered to as (A.3), (A.2) and (A.1) in \citep{ADMM}.

\subsubsection{Proof of Theorem~\ref{th:theorem_1} Assuming the Validity of the Inequalities~\eqref{eq:a_3}, \eqref{eq:a_2} and \eqref{eq:a_1}} \label{sec:proof_assuming_inequalities}

\begin{proof}[Proof of Convergence]

Summing the inequality~\eqref{eq:a_1} for an infinite number of iterations gives the following inequality:
\begin{equation*}
V^{\infty} - V^0 \leq - \nu \sum_{k=0}^{\infty}{\norm{h(x^{k+1})}^2} - \rho \sum_{k=0}^{\infty}{\norm{r^{k+1}}^2 + \norm{y^{k+1} - y^{k}}^2}, \label{eq:residual_sum_1}
\end{equation*}
which can be rewritten as,
\begin{align}  
\nu \sum_{k=0}^{\infty}{\norm{h(x^{k+1})}^2} + \rho \sum_{k=0}^{\infty}{\norm{r^{k+1}}^2 + \norm{y^{k+1} - y^{k}}^2} \leq V^0 - V^{\infty}. \label{eq:residual_sum_2}
\end{align}

By definition, $V^k$ is always a positive, since it is the sum of the norm of vectors weighed by positive numbers. In addition, the Value decay inequality~\eqref{eq:a_1} implies that $V^k \leq V^0$. Therefore $V^0 - V^{\infty}$ is finite and positive.

The left side of inequality~\eqref{eq:residual_sum_2} is the sum of an infinity of positive terms. This sum is bounded by $V^0 - V^{\infty}$ which implies that the summed elements tend to zero, i.e. $h(x^{k+1}) \rightarrow 0 $, $r^k \rightarrow 0$ and $y^{k+1} - y^{k} \rightarrow 0$ when $k \rightarrow \infty$.
This respectively means that the equality constraint~\eqref{eq:constrained_admm_eq_constraint} is enforced, that the two local variables $x$ and $y$ reach a consensus~\eqref{eq:consensus_z_admm_constraint}, and that the algorithm becomes stationary. \hfill \qedsymbol
\end{proof}

\begin{proof}[Proof of Optimality]
Inequality~\eqref{eq:a_3} gives a lower-bound of $p^{k+1} - p^{\star}$. We showed in the convergence proof, that $h(x^{k+1}) \rightarrow 0$ and $r^{k+1} \rightarrow 0$. It therefore implies that the lower bound tends to zero. 

Inequality~\eqref{eq:a_2} gives an upper-bound of $p^{k+1} - p^{\star}$. We showed earlier that $h(x^{k+1}) \rightarrow 0$,  $r^{k+1} \rightarrow 0$ and $y^{k+1} - y^{k} \rightarrow 0$. In addition, the fact that $0 \leq V^k \leq V^0$ implies that $\kappa^{k+1}$, $\eta^{k+1}$, and $y^{k+1} - y^{\star}$ are bounded. The upper bound is therefore a sum of three terms which are each a product of a finite term and a quantity tending to zero. The upper-bound therefore tends to zero.

Since both the upper and lower bounds of $p^{k+1} - p^{\star}$ tend to zero, it implies that $p^k \rightarrow p^{\star}$ which is the objective convergence. \hfill \qedsymbol
\end{proof}

\subsubsection{Proof of Inequality~\eqref{eq:a_3}} \label{sec:inequalities_proof_1}

\begin{proof}
Since $(x^{\star}, y^{\star}, \kappa^{\star}, \eta^{\star})$ is a saddle point of the Lagrangian $L_{0, 0}$, we have:
\begin{align}  
L_{0, 0}(x^{\star}, y^{\star}, \kappa^{\star}, \eta^{\star}) &\leq L_{0, 0}(x^{k+1}, y^{k+1}, \kappa^{\star}, \eta^{\star}). \label{eq:need_saddle_point}
\end{align}
By definition, $h(x^{\star}) = 0$, $x^{\star} = y^{\star}$, and $\rho = \nu = 0$ such that the Lagrangian on the left side reduces to the cost terms $f(x^{\star}) + g(x^{\star})$. On the right side the Lagrangian is composed to the cost term $f(x^{k+1}) + g(x^{k+1})$ and the Lagrange terms for the equality and consensus constraints, such that it gives,
\begin{align*}  
f(x^{\star}) + g(x^{\star}) &\leq f(x^{k+1}) + g(x^{k+1}) + \kappa^{\star}\ .\ h(x^{k+1}) + \eta^{\star}\ .\ (x^{k+1} - y^{k+1}),
\end{align*}
which leads to inequality~\eqref{eq:a_3} when replacing $f(x) + g(y)$ by $p$ and $x-y$ by $r$ according to their definitions:
\begin{align*}  
p^{\star} - p^{k+1} &\leq \kappa^{\star} \ .\ h(z_0^{k+1}) + \eta^{\star} \ .\ r^{k+1}.   
\end{align*} \hfill \qedsymbol
\end{proof}

\subsubsection{Proof of Inequality ~\eqref{eq:a_2}} \label{sec:inequalities_proof_2}
\begin{proof} The proof proceeds in three steps. First, we derive the optimality conditions for $x^{k+1}$ and $y^{k+1}$, which follow from their definition as minimizers of the Augmented Lagrangian. Then, we combine these conditions to obtain inequality~\eqref{eq:a_2}.
\paragraph{Optimality with respect to the first optimization variable:}
By definition, $x^{k+1}$ minimizes $L_{\nu, \rho}(x, y^k, \kappa^k, \eta^{k})$. Since  $f$ and $h$ are differentiable, the Augmented Lagrangian is differentiable. A necessary optimality condition is that the gradient is zero in $x^{k+1}$, i.e.
\begin{subequations}
\begin{gather} 
\nabla_x L_{\nu, \rho} (x^{k+1}, y^k, \kappa^k, \eta^k) = 0, \nonumber \\ 
\nabla f (x^{k+1}) + \kappa^k \cdot \mathbb{J}_h (x^{k+1}) + \nu h(x^{k+1}) \cdot \mathbb{J}_h(x^{k+1}) + \eta^k + \rho (x^{k+1} - y^{k}) = 0. \nonumber
\end{gather}
\end{subequations}

We develop further the second term using $\kappa^{k} = \kappa^{k+1} - \nu h(x^{k+1})$, which has for effect to cancel the third term. Similarly, we use $\eta^{k} = \eta^{k+1} - \rho (x^{k+1} - y^{k+1})$ for the third term, which can then be combined with the fourth term and gives,
\begin{align} 
\nabla f(x^{k+1}) +
 \kappa^{k+1} \cdot \mathbb{J}_h (x^{k+1}) +
 \eta^{k+1} + \rho ( y^{k+1} - y^{k}) = 0, \label{eq:needs_convexity_0}
\end{align}
\noindent which implies, by integration, that $x^{k+1}$ minimizes,
\begin{align} 
f(x) + \kappa^{k+1} \cdot h(x) + (\eta^{k+1} + \rho (y^{k+1} - y^{k})) \cdot x. \label{eq:needs_convexity}
\end{align}

\noindent It follows that its value at $x^{k+1}$ is less than or equal to its value at $x^{\star}$ which yields the following inequality,
\begin{align} \label{eq:proof_A_2_first_term}
f(x^{k+1}) + \kappa^{k+1}h(x^{k+1}) + (\eta^{k+1} + \rho (y^{k+1} - y^{k})) \cdot x^{k+1} \leq  f(x^{\star}) + (\eta^{k+1} + \rho (y^{k+1} - y^{k})) \cdot y^{\star}.
\end{align}
Note: In the expression above, we use the fact that $h(x^{\star}) = 0$, by definition.

\paragraph{Optimality with respect to the second optimization variable:}
By definition, $y^{k+1}$ minimizes $L_{\nu, \rho}(x^{k+1}, y, \kappa^k, \eta_{k})$. Since $g$ is closed, proper and convex, the Augmented Lagrangian is subdifferentiable, and a necessary and sufficient optimality condition is therefore that 0 belongs to the subderivatives of the Augmented Langrangian in $y^{k+1}$:
\begin{align*} 
0 \in \partial_y L_{\nu, \rho}(x^{k+1}, y^{k+1}, \kappa^k, \eta^k).  \nonumber
\end{align*}
The derivative of the Lagrange term and the square penalty with respect to $y$ have an analytical expression, such that the subdifferential notation remains only on $g$ giving:
\begin{align*} 
0 \in \partial g(y^{k+1}) - \eta^k - \rho (x^{k+1} - y^{k+1}). \nonumber 
\end{align*}
We now use $\eta^{k} = \eta^{k+1} - \rho (x^{k+1} - y^{k+1})$ for the second term, which cancels the third term and gives:
\begin{align*} 
0 \in \frac{\partial}{\partial z_1} g(z_1^{k+1}) - \eta^{k+1}.
\end{align*}
This implies that $y^{k+1}$ minimizes:
\begin{align} 
g(y) - \eta^{k+1} \cdot y\ . \label{eq:g_intermediary}
\end{align}
Comparing the evaluations in $y^{k+1}$ and $y^{\star}$ leads to the following inequality:
\begin{align} \label{eq:proof_A_2_second_term}
g(y^{k+1}) - \eta^{k+1} \cdot \ y^{k+1} \leq g(y^{\star}) - \eta^{k+1} \cdot \ y^{\star}.
\end{align}

\paragraph{Summing the two inequalities:}
Summing up the inequalities~\eqref{eq:proof_A_2_first_term} and \eqref{eq:proof_A_2_second_term} gives:
\begin{subequations}
\begin{align}
f(x^{k+1}) + \kappa^{k+1} \cdot h(x^{k+1}) + (\eta^{k+1} + \rho (y^{k+1} - y^{k})) \cdot x^{k+1} \nonumber \\ +\ g(y^{k+1}) - \eta^{k+1} \cdot y^{k+1} \nonumber \\ \leq \nonumber \\ f(x^{\star}) + (\eta^{k+1} + \rho (y^{k+1} - y^{k})) \cdot x^{\star} \nonumber\\+\ g(y^{\star}) - \eta^{k+1}\ . \ y^{\star}. \nonumber
\end{align}
\end{subequations}
We separate the cost terms with $f$ and $g$ (moved to the left side) from the other terms which are moved to the right side. Using the notation with $p$, this gives:
\begin{align*}
p^{k+1} - p^{\star} \\
 \leq \\ - \kappa^{k+1} \cdot h(x^{k+1}) - (\eta^{k+1} + \rho (y^{k+1} - y^{k})) \cdot x^{k+1} \\ + (\eta^{k+1} + \rho (y^{k+1} - y^{k})) \cdot x^{\star} + \eta^{k+1} \cdot y^{k+1} - \eta^{k+1} \cdot y^{\star}.
\end{align*}
Since $x^{\star} = y^{\star}$, the terms with $\eta^{k+1} \cdot z_1^{k+1}$ cancel out yielding:
\begin{align*}
p^{k+1} - p^{\star}
 &\leq - \kappa^{k+1} \cdot h(x^{k+1}) - (\eta^{k+1} + \rho (y^{k+1} - y^{k})) \cdot x^{k+1} + \rho (y^{k+1} - y^{k}) \cdot y^{\star} + \eta^{k+1} \cdot y^{k+1}. \\
\intertext{Regrouping the terms around and using the notation $r^{k+1} = x^{k+1} - y^{k+1}$ leads to:}
p^{k+1} - p^{\star}
 &\leq - \kappa^{k+1} \cdot h(x^{k+1}) - \eta^{k+1} \cdot r^{k+1} - \rho (y^{k+1} - y^{k}) \cdot x^{k+1} + \rho (y^{k+1} - y^{k}) \cdot y^{\star}. \\
\intertext{We now factorize the last terms around $\rho (y^{k+1} - y^{k})$ to get:}
p^{k+1} - p^{\star}
 &\leq - \kappa^{k+1} \cdot h(x^{k+1}) - \eta^{k+1} \cdot r^{k+1} + \rho (y^{k+1} - y^{k}) \cdot (-x^{k+1} + y^{\star}). \\
\intertext{The last term $-x^{k+1} + y^{\star}$ can be trivially rewritten into $-x^{k+1} + y^{k+1} - y^{k+1} + y^{\star}$, which is $-r^k - (y^{k+1} - y^{\star})$, which results in,}
 p^{k+1} - p^{\star}
 &\leq - \kappa^{k+1} \cdot h(x^{k+1}) - \eta^{k+1}\ .\ r^{k+1} + \rho (y^{k+1} - y^{k}) . (-r^{k+1} - (y^{k+1} - y^{\star})),
\end{align*}
which is the inequality~\eqref{eq:a_2}. \hfill \qedsymbol
\end{proof}

\subsubsection{Proof of Inequality~\eqref{eq:a_1}} \label{sec:inequalities_proof_3}
\begin{proof}
Adding the inequalities~\eqref{eq:a_3} and \eqref{eq:a_2} gives:
\begin{align}
&0 \leq -\kappa^{k+1} \cdot h(x^{k+1}) - \eta^{k+1} \cdot r^{k+1} + \rho (y^{k+1} - y^{k}) \cdot (-r^{k+1} - (y^{k+1} - y^{\star})) + \kappa^{\star} \cdot h(x^{k+1}) + \eta^{\star} \cdot r^{k+1}. \nonumber \\
\intertext{Flipping the inequality and factorizing the terms involving $h(x^{k+1})$ leads to:}
&(\kappa^{k+1} - \kappa^{\star}) \cdot h(x^{k+1}) + \eta^{k+1} \cdot r^{k+1} - \rho (y^{k+1} - y^{k}) \cdot (-r^{k+1} - (y^{k+1} - y^{\star})) - \eta^{\star} \cdot r^{k+1} \leq 0. \nonumber \\
\intertext{We now regroup the terms with $\eta$ to get:}
&(\kappa^{k+1} - \kappa^{\star}) \cdot h(x^{k+1}) + (\eta^{k+1} - \eta^{\star}) \cdot r^{k+1} - \rho (y^{k+1} - y^{k}) \cdot (-r^{k+1} - (y^{k+1} - y^{\star})) \leq 0. \nonumber \\
\intertext{Multiplying by 2 yields:}
&2(\kappa^{k+1} - \kappa^{\star}) \cdot h(x^{k+1}) + 2(\eta^{k+1} - \eta^{\star}) \cdot r^{k+1} - 2\rho (y^{k+1} - y^{k}) \cdot (-r^{k+1} - (y^{k+1} - y^{\star})) \leq 0. \label{eq:a2_plus_a3_to_rewrite}
\end{align}

This inequality consists of three terms, each of which will be systematically reformulated in the subsequent sections. The main goal is to form the terms $\norm{\kappa^{k} - \kappa^{\star}}^2$, $\norm{\eta^{k} - \eta^{\star}}^2$, $\norm{y^{k} - \eta^{\star}}^2$ which are the compounds of $V^k$.

\paragraph{Rewriting the first term of inequality~\eqref{eq:a2_plus_a3_to_rewrite}:} \label{sec:rewrite_first_term}
The term to rewrite is:
\begin{align*}
2 (\kappa^{k+1} - \kappa^{\star}) \cdot h(x^{k+1}).
\end{align*}
We use the fact that $\kappa^{k+1} = \kappa^{k} + \nu h(x^{k+1})$ which gives:
\begin{align*}
2 (\kappa^{k} + \nu h(z^{k+1}) - \kappa^{\star}) \cdot h(x^{k+1}).
\end{align*}
Developing and splitting the term involving $h(x^{k+1})$ into two identical parts gives:
\begin{align*}
2 (\kappa^{k} - \kappa^{\star}) .\ h(x^{k+1}) + \nu \norm{h(x^{k+1})}^2 + \nu \norm{h(x^{k+1})}^2.
\end{align*}
Substituting $h(x^{k+1}) = \frac{1}{\nu}(\kappa^{k+1} - \kappa^{k})$ in the first two terms results in:
\begin{align*}
\frac{2}{\nu}(\kappa^{k} - \kappa^{\star}) \cdot (\kappa^{k+1} - \kappa^{k}) + \frac{1}{\nu} \norm{\kappa^{k+1} - \kappa^{k}}^2 + \nu \norm{h(x^{k+1})}^2.
\end{align*}
We rewrite further the first term using $\kappa^{k+1} - \kappa^{k} = (\kappa^{k+1} - \kappa^{\star}) + (\kappa^{\star} - \kappa^{k})$ to expand the first term:
\begin{align*}
\frac{2}{\nu}(\kappa^{k} - \kappa^{\star}) \cdot ((\kappa^{k+1} - \kappa^{\star}) + (\kappa^{\star} - \kappa^{k})) + \frac{1}{\nu} \norm{\kappa^{k+1} - \kappa^{k}}^2 + \nu \norm{h(x^{k+1})}^2.
\intertext{Developing the first term leads to:}
-\frac{2}{\nu}\norm{\kappa^{k} - \kappa^{\star}}^2 + \frac{2}{\nu} (\kappa^{k} - \kappa^{\star}) \cdot (\kappa^{k+1} - \kappa^{\star}) + \frac{1}{\nu} \norm{\kappa^{k+1} - \kappa^{k}}^2 + \nu \norm{h(x^{k+1})}^2.
\end{align*}
Rewriting the third term using $\kappa^{k+1} - \kappa^{k} = (\kappa^{k+1} - \kappa^{\star}) + (\kappa^{\star} - \kappa^{k})$ yields:
\begin{align*}
-\frac{2}{\nu}\norm{\kappa^{k} - \kappa^{\star}}^2 + \frac{2}{\nu} (\kappa^{k} - \kappa^{\star}) \cdot (\kappa^{k+1} - \kappa^{\star}) + \frac{1}{\nu} \norm{(\kappa^{k+1} - \kappa^{\star}) + (\kappa^{\star} - \kappa^{k})}^2 + \nu \norm{h(x^{k+1})}^2.
\end{align*}
Developing the norm gives:
\begin{align*}
-\frac{2}{\nu}\norm{\kappa^{k} - \kappa^{\star}}^2 + \frac{2}{\nu} (\kappa^{k} - \kappa^{\star}) \cdot & (\kappa^{k+1} - \kappa^{\star}) + \frac{1}{\nu} \norm{\kappa^{k+1} - \kappa^{\star}}^2\\& + \frac{1}{\nu}\norm{\kappa^{\star} - \kappa^{k}}^2 + \frac{2}{\nu}(\kappa^{k+1} - \kappa^{\star}) \cdot (\kappa^{\star} - \kappa^{k}) + \nu \norm{h(x^{k+1})}^2.
\end{align*}
The terms $\frac{2}{\nu} (\kappa^{k} - \kappa^{\star}) \cdot (\kappa^{k+1} - \kappa^{\star})$ cancel out and the terms with $\norm{\kappa^{k} - \kappa^{\star}}^2$ can be regrouped, resulting in:
\begin{align}
(1 / \nu) \left( \norm{\kappa^{k+1} - \kappa^{\star}}^2 - \norm{\kappa^{\star} - \kappa^{k}}^2 \right) + \nu \norm{h(x^{k+1})}^2.
\end{align}

\paragraph{Rewriting the second term:}
The term to rewrite is:
\begin{align*}
2 (\eta^{k+1} - \eta^{\star}) \ .\ r^{k+1}.
\end{align*}
This is the exact equivalent of the first term rewritten in Section~\ref{sec:rewrite_first_term} with $\eta^{k+1}$, $\eta^{\star}$ and $r^{k+1}$ instead of $\kappa^{k+1}$, $\kappa^{\star}$ and $h(x^{k+1})$. A similar argument, (using the fact that $\eta^{k+1} = \eta^{k} + \rho r^{k+1}$) leads to:
\begin{align}
(1 / \rho) \left( \norm{\eta^{k+1} - \eta^{\star}}^2 - \norm{\eta^{\star} - \eta^{k}}^2 \right) + \rho \norm{r^{k+1}}^2. \label{eq:rewritten_second_term}
\end{align} 

\paragraph{Rewriting the third term of inequality~\eqref{eq:a2_plus_a3_to_rewrite}:}
We now rewrite the third term~\eqref{eq:a2_plus_a3_to_rewrite} to which we add the term $\rho \norm{r^{k+1}}^2$ coming from the rewritten second term~\eqref{eq:rewritten_second_term}. The term to rewrite is therefore:
\begin{align*}
\rho \norm{r^{k+1}}^2 + 2 \rho (y^{k+1} - y^{k}) \cdot r^{k+1} + 2 \rho (y^{k+1} - y^{k})\ .\ (y^{k+1} - y^{\star}).
\end{align*}
We use $y^{k+1} - y^{\star} = (y^{k+1} - y^{k}) + (y^k - y^{\star})$ in the last term and develop it to obtain:
\begin{align*}
\rho \norm{r^{k+1}}^2 + 2 \rho (y^{k+1} - y^{k}) \cdot r^{k+1} + 2 \rho \norm{y^{k+1} - y^{k}}^2 + 2 \rho (y^{k+1} - y^{k}) \cdot ((y^k - y^{\star})).
\end{align*}
Observing that $\rho \norm{r^{k+1}}^2 + 2 \rho (y^{k+1} - y^{k}) \cdot r^{k+1} = \rho \norm{r^{k+1} + (y^{k+1} - y^k)}^2 - \rho \norm{y^{k+1} - y^{k}}^2$, we rewrite the first two terms and combine it with the third term to get:
\begin{align*}
\rho \norm{r^{k+1} + (y^{k+1} - y^k)}^2 + \rho \norm{y^{k+1} - y^{k}}^2 + 2 \rho (y^{k+1} - y^{k}) \cdot ((y^k - y^{\star})).
\end{align*}
We now rewrite the last two terms using $y^{k+1} - y^k = (y^{k+1} - y^{\star}) - (y^{k} - y^{\star})$,
\begin{align*}
\rho \norm{r^{k+1} + (y^{k+1} - y^k)}^2 + \rho \norm{(y^{k+1} - y^{\star}) - (y^{k} - y^{\star})}^2 + 2 \rho ((y^{k+1} - y^{\star}) - (y^{k} - y^{\star})) \cdot (y^k - y^{\star}),
\end{align*}
and develop further which yields:
\begin{align*}
\rho \norm{r^{k+1} + (y^{k+1} - y^k)}^2 + \rho \norm{y^{k+1} - y^{\star}}^2 + \rho \norm{ y^{k} - y^{\star}}^2 - 2\rho (y^{k+1} - y^{\star}) \cdot (y^{k} - y^{\star}) \\+ 2 \rho ((y^{k+1} - y^{\star}) - (y^{k} - y^{\star})) \cdot ((y^k - y^{\star})).
\end{align*}
The terms starting with $2\rho$ partially cancel out resulting in the following,
\begin{align*}
\rho \norm{r^{k+1} + (y^{k+1} - y^k)}^2 + \rho \norm{(y^{k+1} - y^{\star})}^2 + \rho \norm{ y^{k} - y^{\star}}^2 - 2 \rho \norm{y^k - y^{\star}}^2,
\end{align*}
and finally,
\begin{align}
\rho \norm{r^{k+1} + (y^{k+1} - y^k)}^2 + \rho \left( \norm{(y^{k+1} - y^{\star})}^2 -  \norm{y^k - y^{\star}}^2 \right).
\end{align}

\paragraph{Bringing the rewritten terms together:}
Adding the first, second and third terms and bringing back the inequality gives:
\begin{align*}
\frac{1}{\nu} \left( \norm{\kappa^{k+1} - \kappa^{\star}}^2 - \norm{\kappa^{\star} - \kappa^{k}}^2 \right) & + \nu \norm{h(x^{k+1})}^2 \\ &+\frac{1}{\rho} \left( \norm{\eta^{k+1} - \eta^{\star}}^2 - \norm{\eta^{\star} - \eta^{k}}^2 \right) \\ &+ \rho \norm{r^{k+1} + (y^{k+1} - y^k)}^2 + \rho \left( \norm{(y^{k+1} - y^{\star})}^2 -  \norm{y^k - y^{\star}}^2 \right) \leq 0.
\end{align*}
The terms in large parenthese are exactly the compounds of $V^{k+1} - V^k$ such that one can rewrite the expression into:
\begin{align*}
V^{k+1} - V^k + \nu \norm{h(x^{k+1})}^2 + \rho \norm{r^{k+1} + (y^{k+1} - y^k)}^2 \leq 0,
\end{align*}
and equivalently,
\begin{align*}
V^{k+1} - V^k \leq -\nu \norm{h(x^{k+1})}^2 - \rho \norm{r^{k+1} + (y^{k+1} - y^k)}^2,
\end{align*}
which yields the following expression when developing the last term:
\begin{align} 
V^{k+1} - V^k \leq - \nu \norm{h(x^{k+1})}^2 - \rho \norm{r^{k+1}}^2 -\rho \norm{y^{k+1} - y^k}^2 - 2\rho r^{k+1} \cdot (y^{k+1} - y^k). \label{eq:almost_a_1} 
\end{align}
This is almost the inequality~\eqref{eq:a_1}, except for the last term $2\rho r^{k+1} \cdot (y^{k+1} - y^k)$ which is not present in~\eqref{eq:a_1}. To show~\eqref{eq:a_1} it is therefore sufficient to  that $\rho r^{k+1} \cdot (y^{k+1} - y^k) \geq 0$. To that end, we use the fact that $y^{k+1}$ minimizes $g(y) - \eta^{k+1} \cdot y$ and that $y^{k}$ minimizes $g(y) - \eta^{k} \cdot  y$ as it has been established in~\eqref{eq:g_intermediary}. One can therefore write the two following inequalities:
\begin{align*}
g(y^{k+1}) - \eta^{k+1} \cdot y^{k+1} &\leq g(y^{k}) - \eta^{k+1} \cdot y^{k}\\
g(y^{k}) - \eta^{k} \cdot y^{k} &\leq g(y^{k+1}) - \eta^{k} \cdot y^{k+1}.
\end{align*}
When summing these two inequalities, the cost terms involving $g$ cancel out resulting in:
\begin{align*}
- \eta^{k+1}\ . \ z_1^{k+1} - \eta^{k}\ . \ z_1^{k} &\leq - \eta^{k+1}\ . \ z_1^{k} - \eta^{k}\ . \ z_1^{k+1},
\end{align*}
which is equivalent to,
\begin{align*}
(\eta^{k+1} - \eta^{k}) \ . \ (z_1^{k+1} - z_1^{k}) \geq 0.
\end{align*}

Given the fact that $\eta^{k+1} - \eta^k = \rho r^{k+1}$, it implies that $\rho r^{k+1} \cdot (y^{k+1} - y^k) \geq 0$, which combined with~\eqref{eq:almost_a_1} therefore implies~\eqref{eq:a_1}:

\begin{align} 
V^{k+1} - V^k \leq - \nu \norm{h(x^{k+1})}^2 - \rho \norm{r^{k+1}}^2 -\rho \norm{y^{k+1} - y^k}^2.
\end{align} \hfill \qedsymbol
\end{proof}

\subsection{Distributed Augmented Lagrangian as a special case of Constrained ADMM} \label{sec:equivalence}

We now show that the D-AuLa algorithm is a particular case of the C-ADMM algorithm. To this end, we define the variable $z$ and the functions $c$, $h$ which aggregate the subproblems as follows,
\begin{align}  
  z &= (z_0, .., z_{N-1}), \\
  c(z) &= \sum_i c_i(z_i), \\
  h(z) &= (h_0(z_0), ..., h_{N-1}(z_{N-1})).
\end{align}
The cost functions are summed, while the optimization variable and the constraints functions are stacked ($c:\mathbb{R}^{n \times N} \rightarrow \mathbb{R}$, $z \in \mathbb{R}^{n \times N}$ and $h:\mathbb{R}^{n \times N} \rightarrow \mathbb{R}^{m \times N}$).
Let $\chi$ be the characteristic function of the subset of $\mathbb{R}^{n \times N}$ where $z$ is an aggregation of $N$~times the same coumpound, i.e:
\begin{align}  
  \chi(z) = \chi(z_0, ..., z_{N-1}) = \begin{cases}
  0, & \text{if } z_0 = z_1 = .. = z_{N-1}, \\
  + \infty, &\text{otherwise}.
\end{cases}
\end{align}
 
\noindent We apply the C-ADMM algorithm to the following problem:
\begin{align}  
  \min_{z, y} \quad & c(z) + \chi(y) \label{eq:equivalent_sequential},\\
  \text{s.t.} \quad & h(z) = 0, \nonumber \\
                    & z = y, \nonumber
\end{align} 
which we show is equivalent to the D-AuLa algorithm.


\begin{remark}
The pre-requisites for applying the C-ADMM algorithm are fulfilled: $c$ is differentiable and convex, as a sum of differentiable and convex functions. Similarly, $h$ is affine, as an aggregate of affine functions. In addition, it can be shown easily that $\chi$ is closed, proper and convex (convexity and properness are trivial, closeness comes from the fact the underlying set ${\{(z_0, ..., z_N) \in \mathbb{R}^{n \times N} \mid z_0 = ... = z_N}\}$ is itself closed).
\end{remark}


\begin{proposition} \label{th:theorem_chi_zero}
When applying the C-ADMM algorithm, $\chi(y^k) = 0$ after the first iteration ($k \geq 1$).
\end{proposition}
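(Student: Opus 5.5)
The claim follows directly from the form of the $y$-update in the C-ADMM algorithm applied to problem~\eqref{eq:equivalent_sequential}. For every $k \geq 0$, the iterate $y^{k+1}$ is defined as a minimizer of
\begin{align*}
L_{\nu,\rho}(z^{k+1}, y, \kappa^k, \eta^k) = c(z^{k+1}) + \chi(y) + \kappa^k \cdot h(z^{k+1}) + \frac{\nu}{2}\norm{h(z^{k+1})}^2 + \eta^k \cdot (z^{k+1} - y) + \frac{\rho}{2}\norm{z^{k+1} - y}^2 .
\end{align*}
Only $\chi(y)$ and the last two terms depend on $y$; the others are constants. The plan is to show that this function takes a finite value somewhere and equals $+\infty$ wherever $\chi(y)\neq 0$. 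Then every minimizer must satisfy $\chi(y^{k+1}) = 0$. Since this holds for each update index $k+1 \geq 1$, the proposition follows. The initial $y^0$ is arbitrary, so the statement is correctly restricted to $k\geq 1$.

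\textbf{Step one: finiteness.} Take any consensus point, for example $\bar y = (w, \dots, w)$ with $w \in \mathbb{R}^n$ arbitrary. Then $\chi(\bar y) = 0$, and the remaining terms are finite because $c$ and $h$ are real-valued at $z^{k+1}$. So the infimum is finite.

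\textbf{Step two: exclusion of non-consensus points.} For every $y$ outside the consensus set $\{y_0 = \dots = y_{N-1}\}$, we have $\chi(y) = +\infty$. The other terms are finite, so the objective equals $+\infty$ there. No such $y$ can therefore be a minimizer, which gives $\chi(y^{k+1}) = 0$.

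\textbf{Step three: existence of the minimizer.} This is the main point to check, since the argument presumes the $y$-step is well defined. Restricted to the consensus set, a closed linear subspace, the objective is $-\eta^k\cdot y + \frac{\rho}{2}\norm{z^{k+1}-y}^2$. This is a strictly convex, coercive quadratic, so it has a unique minimizer. I would make this explicit by parametrizing $y = (w,\dots,w)$ and minimizing over $w$. That yields $w = \frac{1}{N}\sum_i (z_i^{k+1} + \eta_i^k/\rho)$, which also prepares the identification with the averaging step~\eqref{eq:dec-z_update} used later.
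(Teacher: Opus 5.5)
Your proof is correct and follows essentially the paper's argument: the paper likewise notes that a minimizer with $\chi = +\infty$ would give an infinite objective value, while the consensus point $(y_0^\star,\dots,y_0^\star)$ gives a finite one, so every minimizer lies in the consensus set. Your Step three goes slightly further by establishing that the minimizer exists, which the paper takes for granted. Your formula $w = \frac{1}{N}\sum_i (z_i^{k+1} + \eta_i^k/\rho)$ also has the correct sign; the paper's later derivation has $-\eta_i^k/\rho$ because of a sign slip in the gradient, which is harmless there since $\sum_i \eta_i^k = 0$.
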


\begin{proof}
This follows from the definition of $y^{k+1}$ as the result of the Augmented Lagrangian minimization, $y^{k+1} = \min_{y} L_{\nu, \rho}(z^{k+1}, y, \kappa^k, \eta^k)$ which includes the function $\chi(y)$. If there would exists $y^{\star} = (y_0^{\star}, ..., y_{N-1}^{\star})$ minimizing $L_{\nu, \rho}(z^{k+1}, y, \kappa^k, \eta^k)$, such that $\chi(y^{\star}) = +\infty$, then $L_{\nu, \rho}(z^{k+1}, y^{\star}, \kappa^k, \eta^k) = +\infty$. The Augmented Lagrangian in $y_2^{\star} = (y_0^{\star}, .., y_0^{\star})$ built as a repetition of the first compounds of $y^{\star}$ would, however, be finite, since $\chi(y_2^{\star})=0$, and therefore lower than its value in $y^{\star}$, which contradicts the definition of $y^{\star}$.
\end{proof}

$\chi(y^k) = 0$ implies that we can define $y$ fully by its first $m$ coumpounds. One can therefore introduce the variable $\tilde{z} \in \mathbb{R}^n$ such that $y = (\tilde{z}, ..., \tilde{z})$. With this notation in place, the connection between the two algorithms can be established:

\begin{theorem} \label{th:theorem_eq}
Applying the D-AuLa algorithm on problem~\eqref{eq:distributed_aula_objective} is equivalent to applying the C-ADMM algorithm on~\eqref{eq:equivalent_sequential} with an initialization that satisfies $\chi(y_0) = 0$ and $\sum_i{\eta_i^0} = 0$.

\begin{alignat}{2}
&z^{k+1} := \min_{z} L_{\nu, \rho}(z, y^{k}, \kappa^k, \eta^k) \ \ &&\Longleftrightarrow\ \  {z^{k+1}_{i}} := \min_{z_i} L_{i, \nu, \rho}(z_i, \tilde{z}^k, \kappa_i^k, \eta_i^k) \label{th:eq_lagrangian_minimization} \\
&y^{k+1} := \min_{y} L_{\nu, \rho}(z^{k+1}, y, \kappa^k, \eta^k)\ \ &&\Longleftrightarrow\ \  \tilde{z}^{k+1} := \frac{1}{N}\sum_i{z_i^{k+1}} \label{th:eq_y_update} \\
&\kappa^{k+1} := \kappa^{k} + \nu h(z^{k+1})\ \  &&\Longleftrightarrow\ \  \kappa_i^{k+1} := \kappa_i^{k} + \nu h_i(z_i^{k+1}) \label{th:eq_kappa_update}\\
&\eta^{k+1} := \eta^{k} + \rho(z^{k+1} - y^{k+1})\ \  &&\Longleftrightarrow\ \  \eta_i^{k+1} := \eta_i^{k} + \rho (z_i^{k+1} - \tilde{z}^{k+1}) \label{th:eq_eta_update}
\end{alignat}
\end{theorem}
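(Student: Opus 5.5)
We argue line by line, by induction on $k$, that the C-ADMM iterates on \eqref{eq:equivalent_sequential} coincide with the D-AuLa iterates under the identifications $z=(z_0,\dots,z_{N-1})$, $\kappa=(\kappa_0,\dots,\kappa_{N-1})$, $\eta=(\eta_0,\dots,\eta_{N-1})$ and $y^k=(\tilde z^k,\dots,\tilde z^k)$. The identification of $y$ is legitimate at $k=0$ by the assumption $\chi(y^0)=0$, and for $k\geq 1$ by Proposition~\ref{th:theorem_chi_zero}. Alongside this, we carry the invariant $\sum_i \eta_i^k = 0$. It holds at $k=0$ by assumption, and we will show it is preserved by the $\eta$-update.

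\textbf{$z$-step \eqref{th:eq_lagrangian_minimization}.} With $y^k=(\tilde z^k,\dots,\tilde z^k)$, the aggregated Augmented Lagrangian separates as
\[
L_{\nu,\rho}(z,y^k,\kappa^k,\eta^k)=\chi(y^k)+\sum_i L_{i,\nu,\rho}(z_i,\tilde z^k,\kappa_i^k,\eta_i^k),
\]
where $\chi(y^k)=0$. This holds because $c$ is a sum, and $h$, $\eta$ and $z-y$ are stacked blockwise, so the inner products and squared norms split over blocks. Minimizing a sum of functions of disjoint variables is the same as minimizing each term separately, which gives the D-AuLa step. The $\kappa$-step \eqref{th:eq_kappa_update} is immediate from the blockwise stacking of $h$ and $\kappa$.

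\textbf{$y$-step \eqref{th:eq_y_update}.} This is the main step. By Proposition~\ref{th:theorem_chi_zero}, the minimizer lies in the consensus set, so we minimize over $y=(w,\dots,w)$ the function
\[
\sum_i \Big(-\eta_i^k\cdot w+\tfrac{\rho}{2}\norm{z_i^{k+1}-w}^2\Big)
\]
plus terms independent of $w$. This is strictly convex in $w$. Setting its gradient to zero gives
\[
-\sum_i\eta_i^k+\rho\sum_i (w-z_i^{k+1})=0 .
\]
Using the invariant $\sum_i\eta_i^k=0$, this yields $w=\frac1N\sum_i z_i^{k+1}=\tilde z^{k+1}$, which is exactly \eqref{eq:dec-z_update}.

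\textbf{$\eta$-step \eqref{th:eq_eta_update}.} Reading the update blockwise gives $\eta_i^{k+1}=\eta_i^k+\rho(z_i^{k+1}-\tilde z^{k+1})$, matching \eqref{eq:dec-dual-admm}. Summing over $i$,
\[
\sum_i\eta_i^{k+1}=\sum_i\eta_i^k+\rho\Big(\sum_i z_i^{k+1}-N\tilde z^{k+1}\Big)=0,
\]
because $\tilde z^{k+1}$ is the average of the $z_i^{k+1}$. So the invariant is preserved, which closes the induction.

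The main obstacle is the $y$-step. Without the zero-sum invariant on the dual variables, the C-ADMM consensus update would be a shifted average, $\frac1N\sum_i(z_i^{k+1}+\eta_i^k/\rho)$, rather than the plain average used by D-AuLa. Establishing and propagating $\sum_i\eta_i^k=0$ is therefore the crux, and it is the reason the initialization hypothesis appears in the statement. Once the equivalence is established, Theorem~\ref{th:theorem_1} transfers convergence and optimality to D-AuLa.
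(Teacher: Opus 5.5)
Your proof is correct and follows the paper's argument. For the $z$-step and the dual updates you separate the aggregated Augmented Lagrangian blockwise, and for the $y$-step you use a first-order condition restricted to the consensus set (justified by Proposition~\ref{th:theorem_chi_zero}), with $\sum_i \eta_i^k = 0$ reducing the shifted average to the plain one. The differences are minor: you propagate $\sum_i \eta_i^k = 0$ as an induction invariant through the plain-average identity, whereas the paper obtains it by substituting the $\eta$-update into the $y$-step optimality condition (the standard consensus-ADMM fact); and your sign in that condition, giving the shift $+\eta_i^k/\rho$, is the correct one, since the paper's displayed condition has a sign slip.
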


\begin{proof}
\item
The equivalence of the Lagrange multiplier updates (\ref{th:eq_kappa_update}) and (\ref{th:eq_eta_update}) is trivial, and results directly from the definitions of $\kappa_i$, $\eta_i$ and $\tilde{z}$. The next two paragraphs establish equivalence for (\ref{th:eq_lagrangian_minimization}) and \added{(\ref{th:eq_y_update})}\deleted{(\ref{th:eq_eta_update})}.
\paragraph{Augmented Lagrangian minimization (\ref{th:eq_lagrangian_minimization}):} Since $\chi(y^k) = 0$, the C-ADMM Augmented Lagrangian minimization (\ref{th:eq_lagrangian_minimization}) step can be simplified to:
\begin{align*}
z^{k+1} := \min_z c(z) + \kappa^k \cdot h(z) + \frac{\nu}{2} \norm{h(z)}^2 + \eta^k \cdot (z - y^k) + \frac{\rho}{2} \norm{z - y^k}^2.
\end{align*}
Using the definitions of $c$, $h$, $z$ one can fully separate this expression as a sum of $N$~terms:
\begin{align*}
(z_0, .., z_{N-1})^{k+1} := \min_{z_0, .., z_{N-1}} \sum_i {c_i(z_i) + \kappa_i^k \cdot h_i(z_i) + \frac{\nu}{2} \norm{h_i(z_i)}^2 + \eta_i^k \cdot (z_i - y^k) + \frac{\rho}{2} \norm{z_i - y^k}^2}.
\end{align*}
The terms of the sum are independent from each other, since each depends only on $z_i$, the other variables are constants throughout the minimization. Consequently, each term can be minimized independently, such that one can rewrite it as:
\begin{align*}
z_i^{k+1} &:= \min_{z_i} { \left( c_i(z_i) + \kappa_i^k \cdot h_i(z_i) + \frac{\nu}{2} \norm{h_i(z_i)}^2 + \eta_i^k \cdot (z_i - y^k) + \frac{\rho}{2} \norm{z_i - y^k}^2 \right) },
\end{align*}
which are the $N$ Augmented Lagragian minimizations in D-AuLa, see step~\eqref{eq:dec-algo-newton}:
\begin{align*}
z_i^{k+1} &:= \min_{z_i} L_{i, \nu, \rho}(z_i, y^k, \kappa^k, \eta^k).
\end{align*}

\paragraph{Consensus variable update~\eqref{th:eq_y_update}:}
Step~\eqref{th:eq_y_update} of the C-ADMM algorithm is a minimization over $y$ of the Augmented Lagrangian:
\begin{align*}
y^{k+1} := \min_y \left( f(z^{k+1}) + \chi(y) + \kappa^k \cdot h(z^{k+1}) + \frac{\nu}{2} \norm{h(z^{k+1})}^2 + \eta^k \cdot (z^{k+1} - y) + \frac{\rho}{2} \norm{z^{k+1} - y}^2 \right).
\end{align*}
One can remove $\chi(y)$ (since it is zero), as well as the terms that are not depending on $y$, i.e. $f(z^{k+1})$, $h(z^{k+1})$ and $\eta^k \cdot z^{k+1}$:
\begin{align*}
y^{k+1} := \min_y \left(- \eta^k \cdot y + \frac{\rho}{2} \norm{z^{k+1} - y}^2 \right).
\end{align*}

We rewrite this expression as a minimization over $\tilde{z}$, by using the unstacked variables $\eta^k_i$, $\tilde{z}$, $z_i^{k+1}$ and developing the dot product and norm expression, yielding:

\begin{align*}
\tilde{z}^{k+1} := \min_{\tilde{z}} \left( - \sum_i{\eta_i^k \cdot \tilde{z}} + \frac{\rho}{2} \sum_i \norm{z_i^{k+1} - \tilde{z}}^2 \right).
\end{align*}
Since the above expression is convex with respect to $\tilde{z}$, a necessary and sufficient condition for $\tilde{z}^{k+1}$ to be a minimum is that the gradient is zero in $\tilde{z}^{k+1}$:
\begin{align}
- \sum_i{\eta_i^k} + \frac{\rho}{2} \sum_i 2 (z_i^{k+1} - \tilde{z}^{k+1}) = 0, \label{eq:optimality_condition_second_minimization}
\end{align}
which can be rewritten by moving $\tilde{z}^{k+1}$ out of the sum, and gathering the terms $z_i^{k+1}$ and $\eta_i^k$ together:
\begin{align*}
- \rho N \tilde{z}^{k+1} + \rho \sum_i (z_i^{k+1} - \frac{\eta_i^{k}}{\rho}) = 0,
\end{align*}
which leads to:
\begin{align}
\tilde{z}^{k+1} = \frac{1}{N} \sum_i (z_i^{k+1} - \frac{\eta_i^{k}}{\rho}).
\end{align}

This expression can be simplified further by using the fact that $\sum_i{\eta_i^k} = 0$ after the first iteration. This is proven by injecting the update rule $\eta_i^{k+1} = \eta_i^k + \rho (z_i^{k+1} - 
\tilde{z}^{k+1})$ in~\eqref{eq:optimality_condition_second_minimization}. The update rule therefore becomes,
\begin{align*}
\tilde{z}^{k+1} = \frac{1}{N} \sum_i z_i^{k+1},
\end{align*}
which is the consensus variable update step of D-AuLa~\eqref{eq:dec-z_update}. \hfill \qedsymbol
\end{proof}


\subsection{Non-Convex Problems} \label{sec:nonconvex}
The proof relies on the strong duality of the optimization problem, which guarantees the existence of a \textit{global} saddle point of the Lagrangian. Concretely, this is used in the proof for ensuring the validity of Eq.~\eqref{eq:need_saddle_point}. In addition, convexity of the cost and constraints are needed when integrating Eq.~\eqref{eq:needs_convexity_0} leading to Eq.~\eqref{eq:needs_convexity}.

If the cost are not convex or the equality constraints are not affine, strong duality may not hold, and the guarantee to converge to a \textit{global} optimum is lost. The algorithm is not guaranteed to converge. When it does converge, it may only reach a local optimum. In practice, the algorithm typically converges to a stationary point of the Lagrangian satisfying the Karush-Kuhn-Tucker (KKT) conditions. 
In the non-convex case, D-AuLa is therefore a local method whose behavior depends on the initialization and penalty parameters, as in the Augmented Lagrangian (AuLa) and Alternating Direction Method of Multipliers (ADMM) methods.

%




\subsection{Extension to the Case with Inequality Constraints} \label{sec:inequality_constraints}
Extending the ADMM proof provided in \citep{ADMM} with additional inequality constraints is more difficult than its extension to equality constraints. Indeed, the proof is articulated around the value,
\begin{equation*}
V^k = (1/\nu)\norm{\kappa^k-\kappa^{\star}}^2
+ (1/\rho)\norm{\eta^k-\eta^{\star}}^2
+ \rho \norm{y^k - y^{\star}}^2,
\end{equation*}
which is monotonically decreasing. Extending this expression to inequality constraints, using the same scheme, would result in,
\begin{equation*}
V_{ineq}^k = (1/\mu)\norm{\lambda^k-\lambda^{\star}}^2 
+ (1/\nu)\norm{\kappa^k-\kappa^{\star}}^2
+ (1/\rho)\norm{\eta^k-\eta^{\star}}^2
+ \rho \norm{y^k - y^{\star}}^2,
\end{equation*}
which is not necessarily monotonically decreasing, depending on how the constraints activities evolves. This is due to the fact that the square penalty in the Augmented Lagrangian which is,
\begin{equation*}
\frac{\mu}{2} \norm{[g(x) > 0] \odot g(x)}^2,
\end{equation*}
depends on the constraint's activity. Furthermore, the update rule for $\lambda$ which is,
\begin{equation*}
\lambda^{k+1} := \max(0, \lambda^{k} + \mu g(x^{k+1})),
\end{equation*}
has two different cases corresponding to the $\max$ operation. 
Defining a monotonically decreasing $V^k_{ineq}$ is therefore not straightforward, since this property has to be kept across eventual case switches.

The proof can however be extended under the additional assumption that the activity of the inequality constraints stabilizes after a finite number of iterations.
This assumption is commonly satisfied in practice, particularly in trajectory optimization, where it is observed that the constraints activity changes predominantly during the initial iterations.


%

\subsection{Extension to the Case without Full Overlap of the Subproblems}\label{sec:no_full_overlap}

The proof can be extended to the case without full overlap of the subproblems. It involves generalizing the relation between the two variables $x$ and $y$  of C-ADMM~\eqref{eq:constrained_admm_consensus_constraint} to enforce equality only on specific parts. It can be achieved by replacing the constraint $x=y$ with a more general formulation $Ax + By = C$. The convergence proof provided in \citep{ADMM} uses this generalized constraint.

\section{TAMP Policies}
\label{sec:appendix_tamp_policies}
This appendix provides example of planned policies for the problems Baxter-C and Franka-C$\times$A' of the experiments of Section~\ref{sec:tamp_experiments}.
\subsection{Planned Policy for Baxter-C}
\begin{figure}[H]
    \centering
       \includegraphics[width=1.0\linewidth]{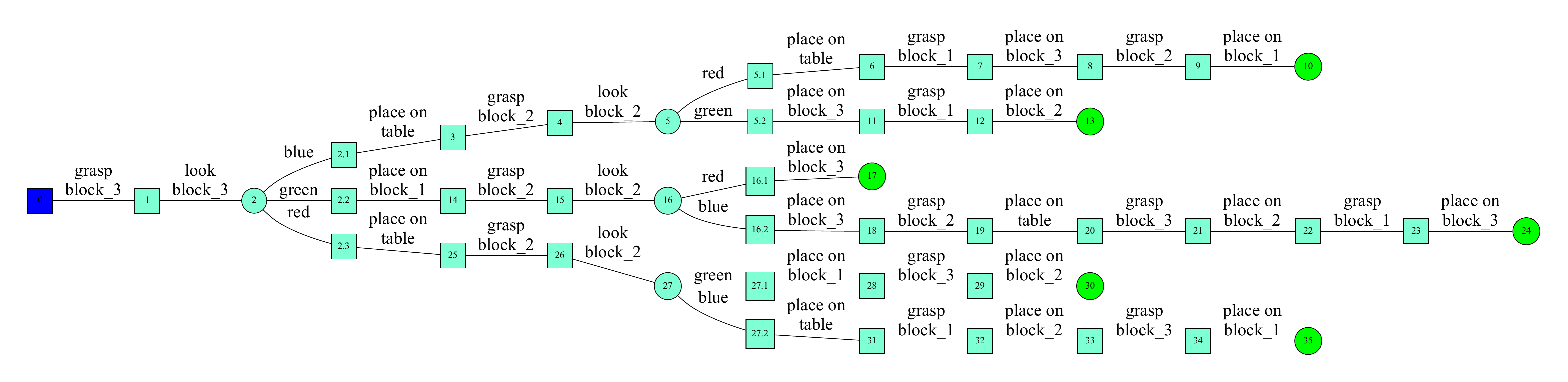}
  \caption{Policy for problem Baxter-C. With 3 unknown blocks, the robot must observe 2 times, resulting in 6 possible contingencies.}
  \label{fig:policy_baxter_c} 
\end{figure}
\subsection{Planned Policy for Franka-C$\times$A'}

\begin{figure}[ht]
    \centering
       \includegraphics[width=1.0\linewidth]{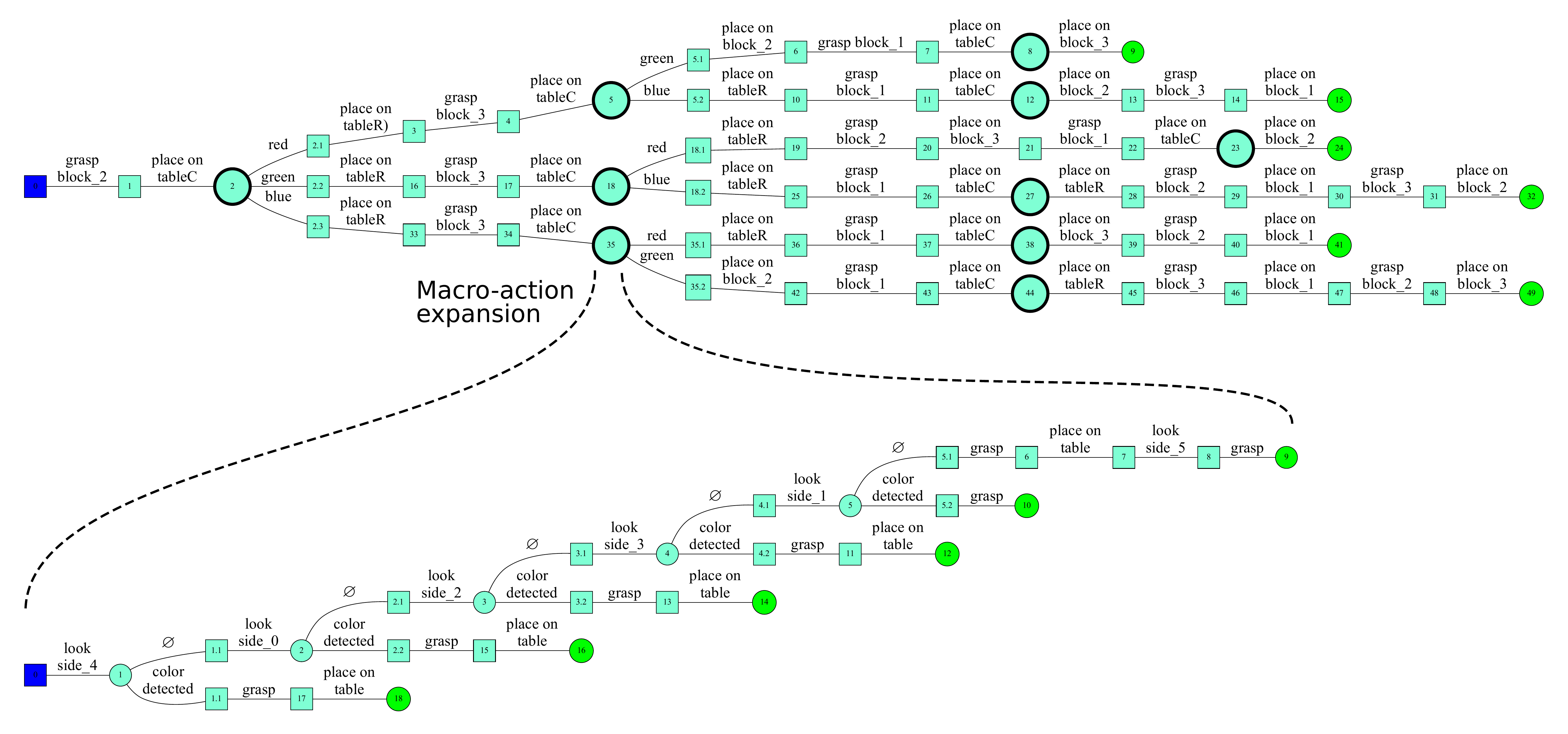}
  \caption{Policy for problem Franka-C$\times$A'. The high level policy on the top of the image contains macro-actions (indicated with a bold circle). The macro-actions expand into low level exploration policies with 5 branching points.}
  \label{fig:policy_franka_ca} 
\end{figure}

}



\end{document}